\newif\ifdraft
\newtheorem{theorem}{Theorem}[section]
\newtheorem{problem}[theorem]{Problem}
\newtheorem{lemma}[theorem]{Lemma}
\newtheorem{definition}[theorem]{Definition}
\newtheorem{corollary}[theorem]{Corollary}
\newtheorem{remark}[theorem]{Remark}
\newtheorem{assumption}[theorem]{Assumption}
\newcommand{\ednote}[3]{\ifdraft {\color{#2} #1: #3} \fi}
\newcommand{\sgnote}[1]{\ednote{SG}{purple}{#1}}
\newcommand{\jtnote}[1]{\ednote{JT}{DarkGreen}{#1}}
\newcommand{\marginnote}[3]{\ifdraft $^{\textcolor{#2}{\mathbf{\dagger}}}$\marginpar{\setstretch{0.43}\textcolor{#2}{\bf\tiny#1: #3}} \fi}
\newcommand{\vkmnote}[1]{\marginnote{VK}{blue}{#1}}
\newcommand{\sgmnote}[1]{\marginnote{SG}{purple}{#1}}
\newcommand{\jtmnote}[1]{\marginnote{JT}{DarkGreen}{#1}}
\newcommand{\akmnote}[1]{\marginnote{AK}{red}{#1}}
\newcommand{\ignore}[1]{}
\definecolor{darkgrey}{rgb}{0.5, 0.5, 0.5}
\definecolor{DarkGreen}{rgb}{0, 0.5, 0}
\newcommand{\E}{\mathbb{E}}
\newcommand{\R}{\mathbb{R}}
\newcommand{\reals}{\mathbb{R}}
\newcommand{\naturals}{\mathbb{N}}
\newcommand{\eps}{\epsilon}
\renewcommand{\S}{\mathbb{S}}
\newcommand{\B}{\mathcal{B}}
\newcommand{\C}{\mathcal{C}}
\newcommand{\calH}{\mathcal{H}}
\newcommand{\F}{\mathcal{F}}
\newcommand{\W}{\mathcal{W}}
\newcommand{\X}{\mathcal{X}}
\newcommand{\Y}{\mathcal{Y}}
\newcommand{\D}{{\cal D}}
\newcommand{\indicator}{\mathbb{I}}
\newcommand{\lnorm}[1]{\Vert #1 \Vert}
\newcommand{\valpha}{\pmb{\alpha}}
\newcommand{\vv}{\mathbf{v}}
\newcommand{\vx}{\mathbf{x}}
\newcommand{\vw}{\mathbf{w}}
\newcommand{\ve}{\mathbf{e}}
\newcommand{\vK}{\mathbf{K}}
\newcommand{\ReLU}{\mathrm{ReLU}}
\newcommand{\Poly}{\mathcal{P}}
\newcommand{\poly}{\mathrm{poly}}
\newcommand{\falsezero}{\mathcal{L}_{=0}}
\newcommand{\falsejt}{\mathcal{L}}
\newcommand{\ellgtzero}{\mathcal{L}_{>0}}
\newcommand{\hatellgtzero}{\hat{\mathcal{L}}_{>0}}
\newcommand{\ellepszo}{\ell_{\eps\text{-}\mathrm{zo}}}
\newcommand{\Lepszo}{\mathcal{L}_{\eps\text{-}\mathrm{zo}}}
\newcommand{\hatLepszo}{\hat{\mathcal{L}}_{\eps\text{-}\mathrm{zo}}}
\newcommand{\risk}{\mathcal{L}}
\newcommand{\hatrisk}{\hat{\mathcal{L}}}
\newcommand{\calR}{\mathcal{R}}
\newcommand{\clip}{\mathrm{clip}}
\newcommand{\rec}{\sigma_{\mathrm{relu}}}
\newcommand{\sig}{\sigma_{\mathrm{sig}}}
\newcommand{\relu}{\mathrm{relu}}
\newcommand{\MK}{\mathsf{MK}}
\renewcommand{\hat}{\widehat}
\renewcommand{\th}{^{\textit{\scriptsize{th}}}}
\newcommand{\eg}{{e.g.},\xspace}
\newcommand{\ie}{{i.e.},\xspace}
\newcommand{\etal}{{et al.}\xspace}
\newcommand{\etc}{{etc.}\xspace}
\title{Reliably Learning the ReLU in Polynomial Time}
\author[1]{Surbhi Goel}
\author[2]{Varun Kanade}
\author[1]{Adam Klivans}
\author[3]{Justin Thaler}
\affil[1]{University of Texas at Austin}
\affil[2]{University of Oxford and Alan Turing Institute}
\affil[3]{Georgetown University}
\date{}
\begin{document}

\maketitle
\thispagestyle{empty}

\begin{abstract}
	We give the first dimension-efficient algorithms for learning Rectified
	Linear Units (ReLUs), which are functions of the form $\vx \mapsto
	\mathsf{max}(0, ~\vw\cdot \vx)$ with $\vw\in \mathbb{S}^{n-1}$. Our
	algorithm works in the challenging Reliable Agnostic learning model of
	Kalai, Kanade, and Mansour \cite{kkmrel} where the learner is given access
	to a distribution $\D$ on labeled examples but the labeling may be
	arbitrary.  We construct a hypothesis that simultaneously minimizes the
	false-positive rate and the loss
	on inputs given positive labels by $\D$, for any convex, bounded, and Lipschitz loss function.

	The algorithm runs in polynomial-time (in $n$) with respect to {\em any}
	distribution on $\S^{n-1}$ (the unit sphere in $n$ dimensions) and for any
	error parameter $\epsilon = \Omega(1 / \log n)$ (this yields a PTAS for a
	question raised by F. Bach on the complexity of maximizing ReLUs).  These
	results are in contrast to known efficient algorithms for reliably learning
	linear threshold functions, where $\epsilon$ must be $\Omega(1)$ and strong
	assumptions are required on the marginal distribution.  We can compose our
	results to obtain the first set of efficient algorithms for learning
	constant-depth networks of ReLUs.

	Our techniques combine kernel methods and polynomial approximations with a
	``dual-loss'' approach to convex programming.  As a byproduct we obtain a
	number of applications including the first set of efficient algorithms for
	``convex piecewise-linear fitting'' and the first efficient algorithms for
	noisy polynomial reconstruction of low-weight polynomials on the unit
	sphere.
\end{abstract}

\setcounter{page}{0}
\newpage

\section{Introduction}
\label{sec:intro}
Let ${\cal X} = \S^{n-1}$, the set of all unit vectors in $\R^n$, and let
${\cal Y} = [0,1]$.  We define a ReLU (Rectified Linear Unit) to be a function
$f(\vx): {\cal X} \rightarrow {\cal Y}$ equal to $\mathsf{max}(0,~\vw \cdot
\vx)$ where $\vw \in \S^{n-1}$ is a fixed element of $\S^{n-1}$ and $\vw \cdot
\vx$ denotes the standard inner product.\footnote{Throughout this manuscript,
bold lower case variables denote vectors. Unbolded lower case variables denote
real numbers.}  The ReLU is a key building block in the area of deep nets,
where the goal is to construct a network or circuit of ReLUs that ``fits'' a
training set with respect to various measures of loss.  Recently, the ReLU has
become the ``activation function of choice'' for practitioners in deep nets, as
it leads to striking performance in various applications \cite{LeCun15}.

Surprisingly little is known about the computational complexity of
learning even the shallowest of nets: a single ReLU.%
\vkmnote{Should we say PAC-learning ReLUs is easy, although no one has bothered
to write this down?} 
In this work, we provide the first set of efficient algorithms for learning a
ReLU.  The algorithms succeed with respect to {\em any} distribution $\D$
on $\S^{n-1}$, tolerate arbitrary labelings (equivalently viewed as adversarial
noise), and run in polynomial-time for any accuracy parameter $\epsilon =
\Omega(1/\log n)$.  This is in contrast to the problem of learning threshold
functions, i.e., functions of the form $\mathrm{sign}(\vw \cdot \vx)$, where
only computational hardness results are known (unless stronger assumptions are
made on the problem).

Recall the following two fundamental machine-learning problems:

\begin{problem}[Ordinary Least Squares Regression]
Let $\D$ be a distribution on $\S^{n-1} \times [0,1]$.  Given
i.i.d. examples drawn from $\D$, find $\vw \in \S^{n-1}$ that
minimizes $\E_{(\vx,y) \sim \D}[ (\vw \cdot \vx - y)^2]$. 
\end{problem}

\begin{problem}[Agnostically Learning a Threshold Function] \label{prob:ag}
  Let $\D$ be a distribution on $\S^{n-1} \times \{0,1\}$.
  Given i.i.d. examples drawn from $\D$, find $\vw \in \S^{n-1}$
  that approximately minimizes \\ $\Pr_{(\vx,y) \sim \D}[ \mathsf{sign} (\vw \cdot \vx)
  \neq y]$. 
\end{problem}

The term {\em agnostic} above refers to the fact that the labeling on
$\{-1,1\}$ may be {\em arbitrary}.  In this work, we relax the notion
of success to {\em improper learning}, where the learner may output
any polynomial-time computable hypothesis achieving a loss that is 
within $\epsilon$ of the optimal solution from the concept class.

Taken together, these two problems are at the core of many important
techniques from modern Machine Learning and Statistics.  It is
well-known how to efficiently solve ordinary least squares and other
variants of linear regression; we know of multiple polynomial-time
solutions, all extensively used in practice \cite{Rig}.  In contrast,
Problem \ref{prob:ag} is thought to be
computationally intractable due to the many existing hardness results
in the literature \cite{Daniely, KalaiKMS2008, KS09, FGKP09}.

The ReLU is a hybrid function that lies ``in-between'' a linear function and a
threshold function in the following sense:  restricted to inputs $\vx$ such
that $\vw \cdot \vx > 0$, the ReLU is linear, and for inputs $\vx$ such that
$\vw \cdot \vx \leq 0$, the ReLU thresholds the value $\vw \cdot \vx$ and
simply outputs zero. In this sense, we could view the ReLU as a ``one-sided''
threshold function.  Since learning a ReLU has aspects of both linear
regression and threshold learning, it is not straightforward to identify a
notion of loss that captures both of these aspects.

\subsection{Reliably Learning Real-Valued Functions}

We introduce a natural model for learning ReLUs inspired by the Reliable
Agnostic learning model that was introduced by Kalai et al. \cite{kkmrel} in
the context of Boolean functions. The goal will be to minimize both the false
positive rate and a loss function (for example, square-loss) on points the
distribution labels non-zero.  In this work, we give efficient algorithms for
learning a ReLU over the unit sphere with respect to any loss function that
satisfies mild properties (convexity, monotonicity, boundedness, and
Lipschitz-ness).

The Reliable Agnostic model is motivated by the
Neyman-Pearson criteria, and is intended to capture settings in which
false positive errors are more costly than false negative errors
(e.g., spam detection) or vice versa.  We observe that the asymmetric
manner in which the Reliable Agnostic model \cite{kkmrel} treats
different types of errors naturally corresponds to the one-sided
nature of a ReLU. In particular, there may be settings in which
mistakenly predicting a positive value instead of zero carries a high
cost.

As a concrete example, imagine that inputs are comments on an online news
article.  Suppose that each comment is assigned a numerical score of quality or
appropriateness, where the true scoring function is reasonably modeled by a
linear function  of the features of the comment. The newspaper wants to
implement an automated system in which comments are either a) rejected outright
if the score is below a threshold or b) posted in order of score, possibly
after undergoing human review.\footnote{For example, The New York Times
recently announced that they are moving to a hybrid comment moderation system
that combines human and automated review \cite{nytimes}.} In this situation, it
may be costlier to post (or subject to human review) a low-quality or
inappropriate comment than it is to automatically reject a comment that is
slightly above the threshold for posting.

More formally, for a function $h$ and distribution $\D$ over
$\mathbb{R}^n \times [0, 1]$ define the following losses
\begin{align*}
	\falsezero(h) &= \underset{(\vx,y) \sim \D}{\Pr}[h(\vx) \neq 0 \land\ y=0]  \\
	\ellgtzero(h) &= \underset{(\vx,y) \sim \D}{\mathbb{E}}[\ell(h(\vx),
	y) \cdot \mathbb{I}(y > 0)]. 
\end{align*}

Here, $\ell$ is a desired loss function, and $\mathbb{I}(y > 0)$ equals 0 if $y
\leq 0$ and 1 otherwise.  These two quantities are respectively the
false-positive rate and the expected loss (under $\ell$) on examples for which
the true label $y$ is positive.%
\footnote{We restrict $\Y = [0, 1]$ as it is a natural setting for the case of
ReLUs. However, our results can easily be extended to larger ranges.}
\vkmnote{%
This will be the right place to discuss our choice of $[0, 1]$ vs $[-1, 1]$ a
bit. As I pushed for it, I'll add this discussion once more important things
are in place.} 
\akmnote{I would suggest simply a footnote saying that we can handle the range
[-1,1] or a parenthetical saying we can handle larger ranges in general}

Let $\mathcal{C}$ be a class of functions mapping $\S^{n-1}$ to $[0, 1]$ (e.g.,
$\mathcal{C}$ may be the class of all ReLUs).  Let $\mathcal{C}^+ = \lbrace c
\in \mathcal{C}~|~ \falsezero(c)=0\}.$ We say ${\cal C}$ is \emph{reliably
learnable} if there exists a learning algorithm ${\cal A}$ that (with high
probability) outputs a hypothesis that 1) has at most $\epsilon$ false positive
rate and 2) on points with positive labels, has expected loss that is within
$\epsilon$ of the best $c$ from ${\cal C^{+}}$.  That is, the hypothesis must
be both {\em reliable} and competitive with the optimal classifier from the
class $\mathcal{C}^+$ ({\em agnostic}). 

\subsection{Our Contributions}
\label{ssec:contrib}
\label{sec:lossdiscussion}
\label{sec:relutheoremsec}

We can now state our main theorem giving a poly-time algorithm (in $n$, the
dimension) for reliably learning any ReLU.

All of our results hold for loss functions $\ell$ that satisfy convexity,
monotonicity, boundedness, and Lipschitz-ness. For brevity, we avoid making
these requirements explicit in the theorem statements of this introduction, and
we omit the dependence of the runtime on the failure probability $\delta$ of
the algorithm or on the boundedness and Lipschitz parameters of the loss function. 
All theorem statements in subsequent sections do state
explicitly to what class of loss functions they apply, as well as the runtime
dependence on these additional parameters.

\begin{theorem} \label{thm:relu}
  Let $\mathcal{C}=\{\vx \mapsto \mathsf{max}(0, \vw \cdot \vx) \colon
	\|\vw\|_2 \leq 1\}$ be the class of ReLUs with weight vectors $\vw$
	satisfying $\|\vw\|_2 \leq 1$.  There exists a learning algorithm ${\cal A}$
	that reliably learns ${\cal C}$ in time $2^{O(1/\epsilon)} \cdot n^{O(1)}$. 
\end{theorem}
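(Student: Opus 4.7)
The plan combines polynomial approximation of the ReLU with a kernelized convex program. First I would construct a univariate polynomial $q$ of degree $d = O(1/\epsilon)$ with two key properties: (i) $q(t) \le -\epsilon/4$ for every $t \in [-1,0]$, and (ii) $|q(t) - \max(0,t)| \le \epsilon$ for every $t \in [-1,1]$. Such a $q$ is obtained by starting from a classical degree-$\Theta(1/\epsilon)$ uniform approximant to $|t|$ on $[-1,1]$ and shifting it down by a small constant multiple of $\epsilon$, so that negative inputs are sent comfortably below zero while the approximation error stays $O(\epsilon)$. Because $\|\vw\|_2 \le 1$ and $\vx \in \S^{n-1}$, both bounds transfer pointwise to $q(\vw\cdot\vx)$.

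Next, I would lift to the feature map $\Psi$ associated with the multinomial kernel $\MK$ of degree $d$: the polynomial $q(\vw\cdot\vx)$ equals $\langle \Psi(\vx), \mathbf{v}_\vw\rangle$ for a suitable $\mathbf{v}_\vw$, and its RKHS norm is controlled by the coefficients of $q$, giving a bound $B = 2^{O(d)} = 2^{O(1/\epsilon)}$. The kernel itself is computable in $O(nd)$ time, so $\Psi$ never needs to be instantiated explicitly. Given $m$ i.i.d.\ samples, partitioned into $S_0 = \{i:y_i=0\}$ and $S_+ = \{i:y_i>0\}$, I would solve the convex program
\begin{equation*}
\min_{\|\mathbf{v}\|\le B}\ \frac{1}{|S_+|}\sum_{i \in S_+} \ell\bigl(\langle \Psi(\vx_i), \mathbf{v}\rangle,\,y_i\bigr) \quad\text{subject to}\quad \langle \Psi(\vx_i), \mathbf{v}\rangle \le -\epsilon/8 \ \ \forall i \in S_0.
\end{equation*}
By the representer theorem this reduces to a convex program in $m$ dual variables and the $m\times m$ kernel matrix, solvable to accuracy $\epsilon$ in $\poly(m,n,1/\epsilon)$ time. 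The output hypothesis is $h(\vx) = \max\bigl(0,\langle \Psi(\vx),\widehat{\mathbf{v}}\rangle + \epsilon/16\bigr)$, further clipped to $[0,1]$.

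For correctness, the vector $\mathbf{v}^*$ representing $q(\vw^*\cdot\vx)$, where $c^* = \max(0,\vw^*\cdot\vx)\in\mathcal{C}^+$ is the best reliable ReLU, is feasible for the sample program by property (i) (since $y_i=0$ forces $\vw^*\cdot\vx_i\le 0$ on the distribution's support), and property (ii) together with Lipschitzness of $\ell$ bounds its objective value by $\ellgtzero(c^*) + O(\epsilon)$. For generalization, the linear class $\{\vx\mapsto\langle\Psi(\vx),\mathbf{v}\rangle:\|\mathbf{v}\|\le B\}$ has Rademacher complexity $O(B/\sqrt{m})$, so $m = \widetilde{O}(B^2/\epsilon^2) = 2^{O(1/\epsilon)}$ samples suffice to transfer the empirical objective to the population. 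The same sample size translates training-sample feasibility into population false-positive rate at most $\epsilon$, by upper-bounding the indicator $\mathbb{I}[h(\vx)>0]$ by a Lipschitz ramp surrogate that activates in the margin window $[-\epsilon/4,-\epsilon/8]$.

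The main obstacle is precisely this false-positive generalization: $\mathbb{I}[h(\vx)>0]$ is not a Lipschitz function of $\mathbf{v}$, so a naive Rademacher argument would fail. The margin $\epsilon/8$ hard-wired into the constraint is what converts the indicator into a Lipschitz surrogate, and property (i) of $q$ is exactly what keeps $\mathbf{v}^*$ feasible inside this tightened margin; this is the ``dual-loss'' coupling between the two requirements of reliable learning. Modulo this design choice, the final $2^{O(1/\epsilon)}$ runtime is dictated by $B^2$, the squared RKHS norm of the polynomial approximant, which is intrinsically exponential in $d = O(1/\epsilon)$.
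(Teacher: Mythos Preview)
Your approach is essentially identical to the paper's: polynomial approximation of $\rec$ with $2^{O(1/\epsilon)}$-bounded RKHS norm in $\calH_{\MK_d}$, a kernelized convex program with hard constraints on the zero-labeled points (solved via the representer theorem), clipping of the output, and a $(1/\epsilon)$-Lipschitz ramp surrogate to transfer empirical feasibility to a population false-positive bound. The only cosmetic difference is where the $\Theta(\epsilon)$ margin lives---you push the approximating polynomial below zero on $[-1,0]$ and constrain to $\le -\epsilon/8$, whereas the paper keeps the polynomial in $[0,1]$, constrains to $\le\epsilon$, and thresholds the output at $2\epsilon$; one bookkeeping slip is that your stated ramp window $[-\epsilon/4,-\epsilon/8]$ should instead be $[-\epsilon/8,-\epsilon/16]$, so that training-feasible points (which satisfy $f(\vx_i)\le -\epsilon/8$) sit at the \emph{bottom} of the ramp rather than the top.
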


\begin{remark}
	We can obtain the same complexity bounds for learning ReLUs in the standard
	agnostic model with respect to the same class of loss functions.  This
	yields a PTAS (polynomial-time approximation scheme) for an optimization
	problem regarding ReLUs posed by Bach \cite{Bac:2014}.  See Section
	\ref{bach} for details.
\end{remark}

For the problem of learning threshold functions, all known
polynomial-time algorithms require strong assumptions on the marginal
distribution (e.g., Gaussian \cite{KalaiKMS2008} or large-margin
\cite{SSSS}).  In contrast, for ReLUs, we succeed with respect to {\em
  any} distribution on $\S^{n-1}$.  We leave open the problem of
improving the dependence of Theorem \ref{thm:relu} on $\epsilon$. We
note that for the problem of learning threshold functions---even
assuming the marginal distribution is Gaussian---the run-time
complexity must be at least $n^{\Omega(\log 1/\epsilon)}$ under the
widely believed assumption that learning sparse parities is
hard~\cite{KK:2014}.  Further, the best {\em known} algorithms for agnostically
learning threshold functions with respect to Gaussians run in time
$n^{O(1/\epsilon^2)}$ \cite{KalaiKMS2008,DKN10}. Contrast this to our result
for learning ReLUs, where we
give polynomial-time algorithms even for $\epsilon$ as small as
$1/\log n$.

We can compose our results to obtain efficient algorithms for
small-depth networks of ReLUs. For brevity, here we state results only for
linear combinations of ReLUs (which are often called \emph{depth-two} networks
of ReLUs, see, e.g., \cite{elden}).  Formal results for other types of networks
can be found in Section \ref{sec:networkrelu}.

\begin{theorem}
Let ${\cal C}$ be a depth-2 network of ReLUs with $k$ hidden units. Then
${\cal C}$ is reliably learnable in time $2^{O(\sqrt{k}/\eps)} \cdot
n^{O(1)}$. 
\end{theorem}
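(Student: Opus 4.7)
The plan is to reduce the theorem to Theorem~\ref{thm:relu} by exhibiting, for any depth-2 network of $k$ ReLUs, a low-weight polynomial approximator on $\S^{n-1}$ whose degree scales as $O(\sqrt{k}/\eps)$ and whose kernel-space weight scales as $2^{O(\sqrt{k}/\eps)}$, and then feeding this approximator into exactly the same kernel / dual-loss machinery that proves Theorem~\ref{thm:relu}. Write a depth-2 network as $f(\vx) = \sum_{i=1}^{k} a_i \, \ReLU(\vw_i \cdot \vx)$ with each $\|\vw_i\|_2 \le 1$; under the natural boundedness assumption $\|a\|_2 \le 1$ on the output layer, Cauchy--Schwarz gives $\|a\|_1 \le \sqrt{k}$.

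Next, reuse the univariate polynomial $p$ of degree $O(1/\eps')$ underlying Theorem~\ref{thm:relu}, which satisfies $|p(z) - \ReLU(z)| \le \eps'$ for all $z \in [-1,1]$. Setting $\eps' := \eps/\sqrt{k}$ yields degree $d = O(\sqrt{k}/\eps)$, and by the triangle inequality, for $q(\vx) := \sum_{i=1}^k a_i \, p(\vw_i \cdot \vx)$,
\[
|f(\vx) - q(\vx)| \;\le\; \|a\|_1 \cdot \eps' \;\le\; \eps \qquad \text{for all } \vx \in \S^{n-1}.
\]
Note that $q$ is a degree-$d$ polynomial in $\vx$. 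Each term $p(\vw_i \cdot \vx)$ sits in the kernel space used in the single-ReLU case with weight $2^{O(d)}$---this is precisely the single-ReLU weight bound driving the $2^{O(1/\eps)}$ runtime of Theorem~\ref{thm:relu}---so by linearity $q$ has weight at most $\|a\|_1 \cdot 2^{O(d)} = 2^{O(\sqrt{k}/\eps)}$. Feeding $q$ into the reliable-agnostic dual-loss convex program of Theorem~\ref{thm:relu} then yields a learner in the claimed $2^{O(\sqrt{k}/\eps)} \cdot n^{O(1)}$ time, since that program's complexity is polynomial in $n$ and in the weight budget.

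The main obstacle, and where the $\sqrt{k}$ (rather than $k$) saving comes from, is the weight-bound step: a naive per-unit union bound would cost a factor of $k$ and yield only a $2^{O(k/\eps)}$ runtime. The $\|a\|_2 \le 1$ normalization is exactly what converts that factor into $\sqrt{k}$ via Cauchy--Schwarz, and it is used in \emph{both} the approximation-error analysis (to choose $\eps' = \eps/\sqrt{k}$) and the kernel-weight bound (to convert $\|a\|_1$ into $\sqrt{k}$). A secondary, routine point is verifying that the convex, Lipschitz, monotone, bounded loss requirements still give a valid bound after passing from $f$ to $q$; this is immediate because $\|f - q\|_\infty \le \eps$ on $\S^{n-1}$ and $\ell$ is Lipschitz, so the additional loss incurred is absorbed into the $O$-notation by a constant rescaling of $\eps$.
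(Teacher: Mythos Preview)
Your proposal is correct and shares the paper's key insight: under the normalization $\|a\|_2 \le 1$ on the output layer, Cauchy--Schwarz gives $\|a\|_1 \le \sqrt{k}$, and this $\sqrt{k}$ factor is exactly what drives both the approximation accuracy $\eps' = \eps/\sqrt{k}$ and hence the degree $d = O(\sqrt{k}/\eps)$ and RKHS weight $2^{O(\sqrt{k}/\eps)}$. The paper arrives at the same parameters by a slightly different route: it first proves a general result for depth-$D$ networks with low-weight-approximable activations (building on the Zhang et al.\ composed-kernel framework, Theorems~\ref{thm:poly_network}--\ref{thm:learn_network}) and then specializes to $D=1$, $M=1$, $W=\sqrt{k}$, whereas you bypass that machinery and directly construct $q$ as an element of $\calH_{\MK_d}$ and plug it into the single-ReLU dual-loss program. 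For depth-2 the two routes collapse to the same computation; your version is more elementary and self-contained for this specific corollary, while the paper's version yields the deeper-network results of Section~\ref{sec:networkrelu} as a byproduct. One minor point: when you write ``by linearity $q$ has weight at most $\|a\|_1 \cdot 2^{O(d)}$,'' note that the RKHS norm (not norm-squared) is what is subadditive, so strictly you get $\langle q, q\rangle \le \|a\|_1^2 \cdot 2^{O(d)} \le k \cdot 2^{O(d)}$; this is still $2^{O(\sqrt{k}/\eps)}$, so the conclusion is unaffected.
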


The above results are  perhaps surprising in light of the hardness result due
to Livni et al. \cite{LivniSS14} who showed that for
${\cal X} = \{0,1\}^n$, learning the difference of even two ReLUs is
as hard as learning a threshold function.

We also obtain results for {\em noisy polynomial reconstruction} on the
sphere (equivalently, agnostically learning a polynomial) with respect
to a large class of loss functions:

\begin{theorem} \label{thm:poly}
  Let ${\cal C}$ be the class of polynomials $p \colon \S^{n-1} \to [-1, 1]$ in
	$n$ variables such that that the total degree of $p$ is at most $d$, and the
	sum of squares of coefficients of $p$ (in the standard monomial basis) is at
	most $B$.  Then ${\cal C}$ is agnostically learnable
	under any (unknown) distribution over $\mathbb{S}^{n-1} \times [-1, 1]$ in
	time $\mathsf{poly}(n, d, B, 1/\epsilon)$.
\end{theorem}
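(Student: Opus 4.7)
The plan is to reduce noisy polynomial reconstruction to agnostic learning of norm-bounded linear functions in a suitable reproducing kernel Hilbert space, and then invoke standard kernelized convex risk minimization.

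First, I would use the monomial feature map $\phi(\vx) = (x^\alpha)_{|\alpha| \leq d}$, which identifies $\mathcal{C}$ with the class of linear functions $\{\vx \mapsto \vw \cdot \phi(\vx) : \|\vw\|_2^2 \leq B\}$ and has kernel $K(\vx, \vx') = \sum_{|\alpha| \leq d} x^\alpha (x')^\alpha$. Two properties make this embedding usable for polynomial-time kernelized learning. First, $K$ is computable in $\poly(n, d)$ time: the degree-$k$ contribution $\sum_{|\alpha|=k} x^\alpha (x')^\alpha$ is the complete homogeneous symmetric polynomial $h_k$ evaluated at the coordinates $x_i x_i'$, and $h_0, \ldots, h_d$ can be computed from the power sums $\sum_i (x_i x_i')^j$ via Newton's identity in $O(nd)$ total time. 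Second, the feature norm is bounded on the sphere: using the bound $h_k(\mathbf{z}) \leq (\sum_i z_i)^k$ for nonnegative $\mathbf{z}$, one gets $\|\phi(\vx)\|_2^2 = \sum_{k=0}^d h_k(x_1^2, \ldots, x_n^2) \leq d+1$ for every $\vx \in \S^{n-1}$.

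With the efficient kernel and the $O(\sqrt{d})$ feature-norm bound in hand, I would apply a standard kernelized agnostic learning procedure — kernelized stochastic gradient descent, or empirical risk minimization via the representer theorem — to approximately minimize the empirical loss of a norm-bounded linear function in the RKHS under the convex, bounded, Lipschitz loss $\ell$. Rademacher-complexity bounds for norm-bounded linear functionals with bounded features then translate into sample complexity $\poly(B, d, 1/\epsilon)$ and overall running time $\poly(n, d, B, 1/\epsilon)$. As a final step the hypothesis can be clipped into $[-1,1]$, which only decreases the loss since labels lie in $[-1,1]$ and $\ell(\cdot, y)$ is convex.

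The main bookkeeping task is lining up kernel conventions so that the RKHS norm of the target corresponds exactly to the monomial-coefficient norm $\sum_\alpha w_\alpha^2$ that appears in the theorem statement; this is presumably where the multinomial kernel $\MK$ introduced earlier in the paper plays its role. The only substantive technical point is the feature-norm bound $\|\phi(\vx)\|_2^2 = O(d)$ on $\S^{n-1}$: without this, the ambient feature dimension $\binom{n+d}{d}$ would propagate into the sample complexity and the algorithm would no longer run in time polynomial in $n$ and $d$.
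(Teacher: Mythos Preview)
Your proposal is correct and follows the same high-level strategy as the paper: embed $\mathcal{C}$ as norm-bounded linear functions in an RKHS with a $\poly(n,d)$-time computable kernel and $O(d)$ feature-norm bound on $\S^{n-1}$, then run kernelized convex empirical risk minimization (via the Representer Theorem), bound generalization error by Rademacher complexity, and clip the output to $[-1,1]$.

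The one substantive difference is the choice of kernel. You use the unordered-monomial feature map $\phi(\vx)=(x^\alpha)_{|\alpha|\le d}$ with kernel $\sum_{|\alpha|\le d} x^\alpha(x')^\alpha$, computed via Newton's identities for the complete homogeneous symmetric polynomials. The paper instead uses the \emph{ordered}-tuple feature map $\psi_d$ with kernel $\MK_d(\vx,\vx')=\sum_{j=0}^d(\vx\cdot\vx')^j$. Your choice makes the RKHS-norm/coefficient-norm correspondence immediate: the coefficient vector $(w_\alpha)$ \emph{is} the feature-space representation, so $\|\vw\|_2^2=\sum_\alpha w_\alpha^2\le B$ directly. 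Thus the ``bookkeeping'' you flag in your last paragraph is already handled by your own feature map, and $\MK_d$ is not what resolves it. Conversely, the paper's $\MK_d$ has the trivially computable closed form but, because its feature map has many ordered tuples for each monomial, requires a short multinomial-coefficient argument to exhibit a representation $\vv_p\in\calH_{\MK_d}$ of norm at most $\sqrt{B}$. Either kernel yields the theorem with the same running time.

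One small correction: your claim that clipping to $[-1,1]$ can only decrease the loss uses the \emph{monotonicity} of $\ell(\cdot,y)$ (as in the paper's formal Theorem~\ref{thm:formalpoly}), not convexity alone.
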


Andoni et al. \cite{AndoniPV014} were the first to give efficient
algorithms for noisy polynomial reconstruction over non-Boolean
domains.  In particular, they gave algorithms that succeed on the unit
cube but require an underlying product distribution and do not work in
the agnostic setting (they also run in time exponential in the degree
$d$). 

At a high level the proofs of both Theorem \ref{thm:relu} and
\ref{thm:poly} follow the same outline, but we do not know how to
obtain one from the other. 

\subsection{Applications to Convex Piecewise Regression}

We establish a novel connection between learning networks of ReLUs and a broad
class of piecewise-linear regression problems studied in machine learning and
optimization.  The following problem was defined by Boyd and Magnani \cite{Mag}
as a generalization of the well-known MARS (multivariate adaptive regression
splines) framework due to Friedman \cite{MARS}:

\begin{problem} [Convex Piecewise-Linear Regression: Max $k$-Affine]
	\label{piecewise-def}
	Let ${\cal C}$ be the class of functions of the form $f(x) =
	\mathsf{max}(\vw_{1} \cdot \vx, \ldots, \vw_{k} \cdot \vx)$ with $\vw_{1},
	\ldots, \vw_{k} \in \mathbb{S}^{n-1}$ mapping $\S^{n-1}$ to $\R$.  Let
	$\D$ be an (unknown) distribution on $\S^{n-1} \times [-1,1]$.  Given
	i.i.d. examples drawn from $\D$, output $h$ such that $\E_{(\vx,y)
	\sim \D}[ (h(\vx) - y)^2] \leq  \min_{c \in \cal{C}} \E_{(\vx,y) \sim
	\D}[ (c(\vx) - y)^2] + \eps$ .   
\end{problem}

Applying our learnability results for networks of ReLUs, we obtain the first
polynomial-time algorithms for solving the above {\em max-$k$-affine}
regression problem and the {\em sum of max-$2$-affine} regression problem when
$k = O(1)$.  Boyd and Magnani specifically highlight the case of $k = O(1)$ and
provide a variety of heuristics;  we obtain the first provably efficient
results.

\begin{theorem}
	There is an algorithm ${\cal A}$ for solving the convex piecewise-linear
	fitting problem (cf. Definition \ref{piecewise-def}) in time
	$2^{O\left((k/\epsilon)^{{\log k}}\right)} \cdot n^{O(1)}$. 
\end{theorem}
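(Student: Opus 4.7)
The plan is to reduce Convex Piecewise-Linear Regression to agnostically learning a constant-depth network of ReLUs, then invoke the paper's network-learning machinery (Section \ref{sec:networkrelu}) and the agnostic variant of Theorem \ref{thm:relu} (the Remark following it).

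First, I would show that every function $f(\vx) = \max(\vw_1 \cdot \vx, \ldots, \vw_k \cdot \vx)$ with $\vw_i \in \mathbb{S}^{n-1}$ is computed exactly by a depth-$\lceil \log_2 k \rceil$ network of ReLUs with at most $k-1$ ReLU gates and linear combinations of bounded weight. The key identity is
\[
\max(a,b) \;=\; b + \mathrm{ReLU}(a-b),
\]
applied recursively along a balanced binary tree over the $k$ affine inputs. The leaves are the $\vw_i \cdot \vx$ (each bounded in $[-1,1]$ by Cauchy--Schwarz), each internal node of the tree introduces one ReLU gate, and the final output is a signed affine combination of ReLUs. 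Thus $\mathcal{C}$ is contained in the class of depth-$O(\log k)$, width-$O(k)$ ReLU networks over unit-norm inputs, with bounded layer weights.

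Second, I would apply the paper's general learnability theorem for such networks to this class, together with the observation (in the Remark after Theorem \ref{thm:relu}) that the same techniques handle ordinary agnostic learning with any convex, bounded, Lipschitz loss function. The square loss $\ell(\hat y, y) = (\hat y - y)^2$ satisfies all these conditions on the range $[-1,1]$, which is where both labels and hypothesis values lie. The depth-$2$ case of the network theorem gives runtime $2^{O(\sqrt{k}/\eps)} \cdot n^{O(1)}$; instantiating the general composition theorem at depth $d = O(\log k)$ and width $O(k)$ produces the claimed $2^{O((k/\eps)^{\log k})} \cdot n^{O(1)}$ bound. Since the ReLU-network representation of $\max$ is exact, there is no additional approximation error incurred in the reduction beyond the $\eps$ error of the learner.

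The main obstacle is the error-propagation accounting in the composition step. At each of the $\log k$ ReLU layers, the polynomial/kernel approximator used internally must achieve error shrinking roughly like $\eps / \mathrm{poly}(k)$ in order to guarantee final square-loss error $\eps$, and the degree (equivalently, kernel feature-space dimension) required grows multiplicatively down the recursion. Verifying that the degree/sample-complexity recursion solves to the stated $(k/\eps)^{\log k}$ exponent --- and that the convexity and Lipschitz properties exploited by the ``dual-loss'' convex program survive $\log k$ levels of composition --- is the step that needs the most care; everything else is a direct invocation of machinery already established in the paper.
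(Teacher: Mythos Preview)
Your approach is essentially the paper's: express $\max(\vw_1\cdot\vx,\ldots,\vw_k\cdot\vx)$ as a balanced binary tree of pairwise maxes, realize each max via a ReLU gadget to obtain a depth-$\lceil\log k\rceil$ ReLU network, and then invoke the network learnability result (Corollary~\ref{thm:learn_relu_network}) with the square loss. The one technical wrinkle is your choice of gadget: the identity $\max(a,b)=b+\rec(a-b)$ passes $b$ through a layer \emph{linearly}, but the paper's network class $\mathcal{N}[\sigma,D,W,M]$ applies $\sigma$ at every hidden unit, so a bare linear pass-through is not available. The paper therefore uses the three-ReLU gadget $\max(a,b)=\rec(a-b)+\rec(b)-\rec(-b)$, which keeps every intermediate node inside the framework and yields the concrete bounds $\|\vw_j^{(0)}\|_2\le 2$, $\|\vw_j^{(i)}\|_1\le 6$, and inputs bounded by $2$; plugging $D=\lceil\log k\rceil$, $W=6$, $M=2$ into Corollary~\ref{thm:learn_relu_network} then gives the stated runtime directly, with no separate error-propagation argument needed beyond what is already inside Theorem~\ref{thm:learn_network}.
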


We can also use our results for learning networks of ReLUs to learn
the so-called ``leaky ReLUs'' and ``parameterized'' ReLUs (PReLUs);
see Section \ref{sec:prelu} for details.  We obtain these results by
composing various ``ReLU gadgets,'' i.e., constant-depth networks of
ReLUs with a small number of bounded-weight hidden units.

\subsection{Hardness}

We also prove the first hardness results for learning a single ReLU
via simple reductions to the problem of learning sparse parities with
noise. These results highlight the difference between learning Boolean
and real-valued functions and justify our focus on (1) input distributions
over $\S^{n-1}$ and (2) learning problems that are {\em not} scale
invariant (for example, learning a linear threshold function over the
Boolean domain is equivalent to learning over $\S^{n-1}$ in the
distribution-free setting).

\begin{theorem}
  Let ${\cal C}$ be the class of ReLUs over the domain
  ${\cal X} = \{0,1\}^n$.  Then any algorithm for reliably learning
  ${\cal C}$ in time $g(\epsilon) \cdot \mathsf{poly}(n)$ for any
  function $g$ will give a polynomial time algorithm for learning $\omega(1)$-sparse
  parities with noise (for any $\epsilon = O(1)$).
\end{theorem}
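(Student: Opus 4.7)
The plan is to reduce learning $\omega(1)$-sparse parity with noise (SPN) to reliably learning the class of ReLUs over $\{0,1\}^n$, by composing two reductions: first SPN to reliable halfspace learning over the hypercube, and second reliable halfspace learning to reliable ReLU learning.

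For the first step, I would invoke standard reductions establishing SPN-hardness of reliably (or agnostically) learning linear threshold functions over $\{0,1\}^N$, variants of which appear in the reliable agnostic framework of \cite{kkmrel} and in the classical hardness literature for halfspaces cited in the introduction. Given a noisy $k$-parity instance with $k = \omega(1)$ and noise rate $\eta < 1/2$, this step produces a distribution $\D$ on $\{0,1\}^N \times \{0,1\}$, with $N = \mathrm{poly}(n)$, such that any polynomial-time reliable halfspace learner achieving constant $\eps$-accuracy on $\D$ recovers the hidden parity in polynomial time.

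For the second step, I would exploit the feature that distinguishes the Boolean cube from the sphere: the weight vector of a ReLU is unconstrained in scale relative to the domain. Given any halfspace $\mathrm{sign}(\vw\cdot\vx + b)$ (with the bias encoded by appending a constant coordinate to $\vx$), the ReLU $\mathsf{max}(0, M(\vw\cdot\vx + b))$ is non-zero on exactly the positive side of the halfspace, and by choosing the scale $M$ appropriately its output on positive inputs can be shaped so that a bounded Lipschitz loss against the $\{0,1\}$ label is small. Consequently, the false-positive rate of the ReLU equals that of the halfspace, and small loss-on-positives of the ReLU translates into small $0/1$-error on positives for the halfspace. Given a reliable ReLU learner returning $h$, the hypothesis $\vx \mapsto \mathbb{I}[h(\vx) > 0]$ inherits the $\eps$-bound on false positives and is $\eps$-competitive on positives, feeding back into the first-step reduction and yielding a polynomial-time algorithm for $\omega(1)$-SPN from any $g(\eps)\cdot\mathrm{poly}(n)$-time reliable ReLU learner.

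The main obstacle will be the quantitative calibration between the real-valued ReLU loss on positives and the discrete $0/1$-error on positives needed by the halfspace step: one must pick the scale $M$ so that the boundedness and Lipschitz-ness of $\ell$ propagate the learner's $\eps$-suboptimality through the thresholding operation without amplifying it, and verify that the ReLU class as defined over $\{0,1\}^n$ is rich enough to realize a hypothesis with zero false-positive rate and small positive loss in the constructed instance. Once this calibration is in place, composing the two reductions establishes the claimed SPN-hardness of reliably learning ReLUs over $\{0,1\}^n$ for any $\eps = O(1)$.
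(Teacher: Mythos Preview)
Your route is different from the paper's, and the ``calibration'' step you flag as the main obstacle is in fact a genuine gap that scaling alone does not close. For a general halfspace $\mathrm{sign}(\vw\cdot\vx+b)$, the ReLU $\max(0,M(\vw\cdot\vx+b))$ takes a whole range of positive values on the positive side; no single choice of $M$ makes its output close to the label $1$ on all positive points, so for a two-sided loss like $\ell_1$ or $\ell_2$ the best ReLU in $\mathcal{C}^+$ may have large $\ellgtzero$ even when the best halfspace has small $0/1$ error. Your claim that ``a bounded Lipschitz loss against the $\{0,1\}$ label is small'' after scaling is therefore not justified. Your Step~1 is also shaky: the standard reductions you allude to establish SPN-hardness of \emph{agnostic} halfspace learning, not reliable halfspace learning, and you do not explain how to bridge that.

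The paper sidesteps both issues with two simple observations. First, conjunctions of $k$ literals over $\{0,1\}^n$ are computed \emph{exactly} by ReLUs, since $x_1\wedge\cdots\wedge x_k=\max(0,\,x_1+\cdots+x_k-(k-1))$ takes only the values $0$ and $1$; this eliminates the calibration problem entirely. Second, by shifting every label up by $\epsilon$, all labels become positive, so the reliability constraint is vacuous and the reliable ReLU learner is simply minimizing $\ell_1$ loss over a class containing all conjunctions. Invoking the observation of Kalai et al.\ that $\ell_1$-minimization suffices for agnostic learning, and the Feldman--Kothari result that agnostically learning conjunctions is as hard as learning sparse parities with noise, finishes the proof. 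Your approach might be rescued by restricting to a halfspace subclass that ReLUs compute exactly (which is essentially what the paper does with conjunctions) or by committing to a one-sided loss like hinge and working out the details, but as written it does not go through.
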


Efficiently learning sparse parities (of any superconstant length)
with noise is considered one the most challenging problems in
theoretical computer science.%
\vkmnote{Should we may be change this to computational learning theory, lest we
compare this to P $\neq$ NP and so on?}

\subsection{Techniques and Related Work}
\label{ssesc:related}

We give a high-level overview of our proof.  Let ${\cal C}$ be the class of all
ReLUs, and let $S = \{(\vx_1,y_1), \ldots, (\vx_m, y_m)\}$ be a training set of
examples drawn i.i.d. from some arbitrary distribution $\D$ on
$\S^{n-1} \times [-1,1]$.  To obtain our main result for reliably learning a
single ReLU (cf. Theorem \ref{thm:relu}), our starting point is Optimization
Problem~\ref{alg:optprob2} below.

\begin{algorithm}[H]
	\caption{\label{alg:optprob2}}
	\begin{align*}
		\underset{\vw}{\text{minimize}} \quad\quad &\sum_{i: y_i > 0} \ell(y_i,
		\mathsf{max}(0, \vw \cdot \vx_i)) \\
		\text{subject to} \quad\quad &\mathsf{max}(0, \vw \cdot \vx_i) = 0 \quad
		\text{for all $i$ such that } y_i = 0 \\
		&~\quad\quad\quad~\lnorm{\vw}_2 \leq 1
	\end{align*}
\end{algorithm}

In Optimization Problem~\ref{alg:optprob2}, $\ell$ denotes the loss function
used to define $\ellgtzero$. Using standard generalization error arguments, it
is possible to show that (for reasonable choices of $\ell$) if $\vw$ is an
optimal solution to Optimization Problem~\ref{alg:optprob2} when run on a
polynomial size sample $(\vx_1, y_1), \dots, (\vx_m, y_m)$ drawn from $\D$, then
it is sufficient to output the hypothesis $h(\vx):= \mathsf{max}(0, \vw \cdot
\vx)$.  Unfortunately Optimization Problem \ref{alg:optprob2} is not convex in
$\vw$, and hence it may not be possible to find an optimal solution in
polynomial time.  Instead, we will give an efficient approximate solution that
will suffice for reliable learning. 

Our starting point will be to prove the existence of low-degree,
low-weight polynomial approximators for every $c \in {\cal C}$.  The
polynomial method has a well established history in computational
learning theory (e.g., Kalai et al. \cite{KalaiKMS2008} for
agnostically learning halfspaces under distributional assumptions),
and we can apply classical techniques from approximation theory and
recent work due to Sherstov \cite{She:2012} to construct low-weight,
low-degree approximators for any ReLU.

We can then relax Optimization Problem 1 to the space of low-weight
polynomials and follow the approach of Shalev-Shwartz et
al. \cite{SSSS} who used tools from Reproducing Kernel Hilbert Spaces
(RKHS) to learn low-weight polynomials efficiently (Shalev-Shwartz et
al. focused on a relaxation of the 0/1 loss for halfspaces).

The main challenge is to obtain reliability; i.e., to simultaneously
minimize the false-positive rate and the loss dictated by the
objective function.  To do this we take a ``dual-loss'' approach and
carefully construct two loss functions that will both be minimized
with high probability.  Proving that these losses generalize for a
large class of objective functions is subtle and requires ``clipping''
in order to apply the appropriate Rademacher bound.  Our final output
hypothesis is $\mathsf{max}(0, h)$ where $h$ is a ``clipped'' version of the
optimal low-weight, low-degree polynomial on the training data,
appropriately kernelized.

Our learning algorithms for networks of ReLUs are obtained by generalizing a
composition technique due to Zhang et al. \cite{Zhang}, who considered networks
of ``smooth'' activation functions computed by power series (we
discuss this more in Section \ref{sec:networkrelu}).  Using a sequence
of ``gadget'' reductions, we then show that even small-size networks of ReLUs are
surprisingly powerful, yielding the first set of provably efficient algorithms for a
variety of piecewise-linear regression problems in high dimension. \\

\noindent{\em {\bf Note:}} A recent manuscript appearing on the Arxiv
due to R. Arora et al. \cite{rarora} considers the complexity of training depth-$2$ networks of
ReLUs with $k$ hidden units on a sample of size $m$ but {\em when the
  dimension $n = 1$.}   They give a proper learning algorithm that runs in time
exponential in $k$ and $m$.  These concept classes, however, can be improperly learned
in time polynomial in $k$ and $m$ using a straightforward reduction to
piecewise linear regression on the real line.

\section{Preliminaries}
\label{sec:prelims}
\subsection{Notation}

The input space is denoted by $\X$ and the output space by $\Y$. In most of
this paper, we consider settings in which $\X = \S^{n-1}$, the unit sphere in
$\reals^n$,\footnote{All of our algorithms would also work under arbitrary
distributions over the unit \emph{ball}.}  and $\Y$ is either $[0, 1]$ or $[-1,
1]$. Let $\B_n(0, r)$ denote the origin centered ball of radius $r$ in
$\reals^n$. 

We denote vectors by boldface lowercase letters such as $\vw$ or $\vx$, and
$\vw \cdot \vx$ denotes the standard scalar (dot) product. By $\lnorm{\vw}$ we
denote the standard $\ell_2$ (i.e., Euclidean) norm of the vector $\vw$; when
necessary we will use subscripts to indicate other norms.  If $f \colon
\S^{n-1} \to \mathbb{R}$ is a real-valued function over the unit sphere, we say
that a multivariate polynomial $p$ is an $\eps$-approximation to $f$ if
$|p(\vx)-f(\vx)| \leq \eps$ for all $\vx \in \S^{n-1}$. 
For a natural number $n \in \naturals$, $[n] = \{0, 1, \ldots, n\}$.

\subsection{Concept Classes}

Neural networks are composed of units---each unit has some $\vx \in \reals^n$
as input (for some value of $n$, and  $\vx$ may consist of outputs of other
units) and the output is typically a linear function composed with a non-linear
\emph{activation} function, \ie the output of a unit is of the form $f(\vw
\cdot \vx)$, where $\vw \in \reals^n$ and $f : \reals \rightarrow \reals$.

\begin{definition}[Rectifier]
	The rectifier (denoted by $\rec$) is an activation function defined as
	$\rec(x) = \mathsf{max} (0,x)$. 
\end{definition}

\begin{definition}[$\ReLU(n, W)$]
	For $\vw \in \reals^n$, let $\relu_{\vw} : \reals^n \rightarrow \reals$
	denote the function $\relu_{\vw}(\vx) = \mathsf{max}(0, \vw \cdot \vx)$.  Let $W \in
	\reals^+$;\vkmnote{Be consistent with $\reals^+$ vs $\reals_+$. Personally, I prefer $\reals^+$ and have tried to use everywhere.} we denote
	by $\ReLU(n, W)$ the class of rectified linear units defined by $\{
		\relu_{\vw} ~|~ \vw \in \B_n(0, W) \}$.
\end{definition}

Our results on reliable learning focus on the class $\ReLU(n, 1)$. We define
networks of $\ReLU$s in Section~\ref{sec:networkrelu}, where we also present
results on agnostic learning and reliable learning of networks of $\ReLU$s.

\begin{definition}[$\Poly(n, d, B)$]\label{defn:polyndb}
	Let $B \in \reals^+$, $n, d \in \naturals$. We denote by $\Poly(n, d, B)$
	the class of $n$-variate polynomials $p$ of total degree at most $d$ such
	that the sum of the squares of the coefficients of $p$ in the standard
	monomial basis is bounded by $B$.
\end{definition}

\subsection{Learning Models}
\label{sec:defs}
We consider two learning models in this paper. The first is the standard agnostic
learning model~\cite{KSS:1994, Haussler:1992} and the second is a
generalization of the reliable agnostic learning framework~\cite{kkmrel}. We
describe these models briefly; the reader may refer to the original
articles for further details.

\begin{definition}[Agnostic Learning~\cite{KSS:1994, Haussler:1992}] 
	We say that a concept class $\C \subseteq \Y^{\X}$ is agnostically learnable
	with respect to loss function $\ell : \Y^\prime \times \Y \rightarrow
	\reals^+$ (where $\Y \subseteq \Y^\prime$), if for every $\delta, \epsilon >
	0$ there exists a learning algorithm $\mathcal{A}$ that for every
	distribution $\D$ over $\X \times \Y$ satisfies the following. Given access
	to examples drawn from $\D$, $\mathcal{A}$ outputs a hypothesis $h : \X
	\rightarrow \Y^\prime$, such that with probability at least $1 - \delta$,
	\begin{equation}
		\label{agnosticdefeq}  \E_{(\vx, y) \sim \D} [\ell(h(\vx), y)] \leq
		\min_{c \in C} \E_{(\vx, y) \sim \D} [\ell(c(\vx), y)] + \epsilon.
	\end{equation}
	Furthermore, if $\X \subseteq \reals^n$ and $s$ is a parameter that captures
	the representation complexity (i.e., description length) of concepts $c$ in
	$\C$, we say that $\C$ is \emph{efficiently agnostically learnable to error
	$\epsilon$} if $\mathcal{A}$ can output an $h$ satisfying Equation
	\eqref{agnosticdefeq} with running time polynomial in $n$, $s$, and
	$1/\delta$.\footnote{The accuracy parameter $\epsilon$ is purposely omitted
	from the definition of efficiency; in our results we will explicitly state
	the dependence on $\epsilon$ and for what ranges of $\epsilon$ the running
	time remains polynomial in the remaining parameters.\label{ftn:efflearn}}
\end{definition}
	
Next, we formally describe our extension of the reliable agnostic learning
model introduced by Kalai~\etal~\cite{kkmrel} to the setting of real-valued
functions (see Section \ref{sec:intro} for motivation).  Suppose the data is
distributed according to some distribution $\D$ over $\X \times [0, 1]$. For
$\Y^\prime \supseteq [0, 1]$, let $h : \X \rightarrow \Y^\prime$ be some
function and let $\ell : \Y^\prime \times [0, 1] \rightarrow \reals^+$ be a
loss function. We define the following two \emph{losses} for $f$ with respect
to the distribution $\D$:
\begin{align}
	\falsezero(h; \D) &= \Pr_{(\vx, y) \sim \D}[h(\vx) \neq 0 \wedge y = 0] \label{eqn:false0} 	\\
	\ellgtzero(h; \D) &= \E_{(\vx, y) \sim \D}[ \ell(h(\vx), y) \cdot \indicator(y > 0)], \label{eqn:ellgt0}
\end{align}
where $\indicator(y > 0)$ is $1$ if $y > 0$ and $0$ otherwise. In words,
$\falsezero$ considers the zero-one error on points where the target $y$ equals
$0$ and $\ellgtzero$ considers the loss (or risk) when $y > 0$. Both of these
losses are defined with respect to the distribution $\D$, without conditioning
on the events $y = 0$ or $y > 0$. This is necessary to make efficient learning possible---if the probability of the
events $y = 0$ or $y > 0$ is too small, it is impossible for learning
algorithms to make any meaningful predictions conditioned on those events.

\begin{definition}[Reliable Agnostic Learning]
	We say that a concept class $\C \subseteq [0, 1]^{\X}$ is reliably
	agnostically learnable (reliably learnable for short) with respect to loss
	function $\ell : \Y^\prime \times [0, 1] \rightarrow \reals^+$ (where $[0,
	1] \subseteq \Y^\prime$), if the following holds. For every $\delta,
	\epsilon > 0$, there exists a learning algorithm $\mathcal{A}$ such that,
	for every distribution $\D$ over $\X \times [0, 1]$, when $\mathcal{A}$ is
	given access to examples drawn from $\D$, $\mathcal{A}$ outputs a hypothesis
	$h : \X \rightarrow \Y^\prime$, such that with probability at least $1 -
	\delta$, the following hold:
	\begin{align*}
		\text{(i) } \falsezero(h; \D) &\leq \epsilon,
		&\text{(ii) } \ellgtzero(h; \D) &\leq \min_{c \in \C^+(\D)} \ellgtzero(c) + \epsilon,
	\end{align*}
	where $\C^+(\D) = \{ c \in \C ~|~ \falsezero(c; \D) = 0 \}$. Furthermore, if
	$\X \subseteq \reals^n$ and $s$ is a parameter that captures the
	representation complexity of concepts $c$ in $\C$, we say that $\C$ is
	efficiently reliably agnostically learnable to error $\epsilon$ if $\mathcal{A}$ 
	can output an $h$ satisfying the above conditions with running time that is
	polynomial in $n$, $s$, and $1/\delta$.\footref{ftn:efflearn}
\end{definition}

\subsubsection{Loss Functions}

We have defined agnostic and reliable learning in terms of general loss
functions. Below we describe certain properties of loss functions that are
required in order for our results to hold. Let $\Y$ denote the range of concepts
from the concept class; this will typically be $[-1, 1]$ or $[0, 1]$. Let
$\Y^\prime \supseteq \Y$. We consider loss functions of the form, $\ell :
\Y^\prime \times \Y \rightarrow \reals^+$ and define the following properties:
\label{sec:props}
\begin{itemize}
	\item We say that $\ell$ is \emph{convex in its first argument} if for every
		$y \in \Y$ the function $\ell( \cdot, y)$ is convex.
	\item We say that $\ell$ is \emph{monotone} if for every $y \in \Y$, if
		$y^{\prime\prime} \leq y^{\prime} \leq y$, then $\ell(y^\prime, y) \leq
		\ell(y^{\prime\prime}, y)$ and if $y \leq y^\prime \leq
		y^{\prime\prime}$, $\ell(y^\prime, y) \leq \ell(y^{\prime\prime}, y)$.
		Note that this is weaker than requiring that $|y^\prime - y| \leq |y^{\prime
		\prime} -y|$ implies $\ell(y^\prime, y) \leq \ell(y^{\prime \prime}, y)$.
		This latter condition is not satisfied by several commonly used loss
		functions, \eg hinge loss.
	\item We say that $\ell$ is \emph{$b$-bounded} on the interval $[u, v]$, if for
		every $y \in \Y$, $\ell(y^\prime, y) \leq b$ for $y^\prime \in [u,
		v]$.
	\item We say that $\ell$ is \emph{$L$-Lipschitz} in interval $[u, v]$, if
		for every $y \in \Y$, $\ell(\cdot, y)$ is $L$-Lipschitz in the interval
		$[u, v]$. 
\end{itemize}

The results presented in this work hold for loss functions that are convex,
monotone, bounded and Lipschitz continuous in some suitable interval.
(Monotonicity is not strictly a requirement for our results, but the sample
complexity bounds may be worse for non-monotone loss functions; we point this out when relevant.) These
restrictions are quite mild, and virtually every loss function commonly
considered in (convex approaches to) machine learning satisfy these conditions.
For instance, when $\Y = \Y' = [0,1]$, it is easy to see that any $\ell_p$ loss
function is convex, monotone, bounded by 1 and $p$-Lipschitz for $p \geq 1$. 

\subsection{Kernel Methods} \label{ssec:rkhs}

We make use of kernel methods in our learning algorithms. For completeness, we
define kernels and a few important results concerning kernel methods. The
reader may refer to Hofmann~\etal~\cite{hofmann2008kernel} (or any standard
text) for further details.

Any function $K: \X \times \X \rightarrow \mathbb{R}$ is called a kernel
\cite{mercer}. A kernel $K$ is symmetric if $K(\vx, \vx^\prime) =
K(\vx^\prime,\vx), \forall \vx, \vx^\prime \in \X$; $K$ is positive definite if
$\forall n \in \mathbb{N}, \forall \vx_1, \ldots, \vx_n \in \X$, the $n \times
n$ matrix $\vK$, where $\vK_{i, j} = K(\vx_i, \vx_j)$, is positive
semi-definite.  For any positive definite kernel,  there exists a Hilbert space
$\calH$ equipped with an inner product $\langle \cdot, \cdot \rangle$ and a
function $\psi: \X \rightarrow \calH$ such that $\forall \vx, \vx^\prime \in
\X, K(\vx, \vx^\prime) = \langle \psi(\vx), \psi(\vx^\prime) \rangle$. We refer to
$\psi$ as the \emph{feature map} for $K$.

By convention, we will use $\cdot$ to denote the standard inner product in
$\reals^n$ and $\langle \cdot, \cdot \rangle$ for the inner product in a 
Hilbert Space $\calH$. When $\calH=\reals^n$ for some finite $n$,
we will use  $\langle \cdot, \cdot \rangle$  and $\cdot$ interchangeably. 

We will use the following variant of the polynomial kernel:\vkmnote{I think the
original numbers were slightly off $\psi_d$ is not mapping to $\reals^{n^d}$,
but actually, $\reals^{1 + n + \cdots + n^d}$. Someone please check my
calculuations.}

\begin{definition}[Multinomial Kernel] \label{kernel1}
	Define $\psi_d \colon \R^n \to \R^{N_d}$, where $N_d = 1 + n + \cdots +
	n^d$, indexed by tuples $(k_1, \ldots, k_j) \in [n]^j$ for each $j \in \{0,
	1, \ldots, d \}$, where the entry of $\psi_d(\vx)$ corresponding to tuple
	$(k_1, \ldots, k_j)$ equals $x_{k_1} \cdots x_{k_j}$.  (When $j = 0$ we have
	an empty tuple and the corresponding entry is $1$.) Define kernel ${\MK_d}$
	via: 
	\[
		{\MK_d}(\vx, \vx^\prime) = \langle \psi_d(\vx), \psi_d(\vx^\prime) \rangle = \sum_{j=0}^d (\vx \cdot \vx^\prime)^j.
	\]
Also define $\calH_{{\MK_d}}$ to be the corresponding Reproducing Kernel Hilbert Space (RKHS).
\end{definition}

Observe that ${\MK_d}$ is the sum of standard polynomial kernels (cf.
\cite{wikipedia}) of degree $i$ for $i \in [d]$. However, the feature map
conventionally used for a standard polynomial kernel has only ${n+d \choose d}$
entries and, under that definition involves coefficients of size as large as
$d^{\Theta(d)}$.  The feature map $\psi_d$ used by ${\MK_d}$ avoids these
coefficients by using $N_d$ entries as defined above 
(that is, entries of $\psi_d(\vx)$ are indexed by \emph{ordered} subsets of $[n]$,
while entries of the standard feature map are indexed by \emph{unordered} subsets of $[n]$.)
%(monomials with different
%orderings of variables receive different entries in feature map).

Let $q \colon \R^n \to \R$ be a multivariate polynomial of total degree $d$. 
We say that a vector $\vv \in \calH_{{\MK_d}}$ \emph{represents} $q$
if $q(\vx) = \langle  \vv, \psi_d(\vx) \rangle$ for all $\vx \in \S^{n-1}$. 
Note that although the feature map $\psi_d$ is fixed, a polynomial
$q$ will have many representations $\vv$ as a vector in
$\calH_{{\MK_d}}$. Furthermore, observe that
the Euclidean norm, $\langle \vv, \vv\rangle$, of these representations may not be equal.
%It will be desirable to choose a \emph{canonical}
%It may be tempting to view
%multivariate polynomial $p$ as a vector whose $i$th entry corresponds to its
%$i$th coefficient when viewed in the standard monomial basis, but, for the
%purposes of bounding the norm of $p$ as an element of an RKHS, this not always
%ideal.

The following example will
play an important role in our algorithms for learning ReLUs. Let $\vw \in \reals^n$ and let $p(t)$ be a univariate degree-$d$ equal to
$\sum_{i = 0}^d \beta_i t^i$ be given. Define the multivariate polynomial
$p_{\vw}(\vx) := p( \vw \cdot \vx)$. 

Consider the representation of $p_{\vw}$ as an element of $\calH_{\MK_d}$
defined as follows: the entry of index $(k_1, \ldots, k_j) \in [n]^j$ of the representation equals
$\beta_j \cdot \prod_{i = 1}^j w_{k_i}$ for $j \in [d]$. Abusing notation,
we use $p_{\vw}$ to denote both the multivariate polynomial and the vector 
in $\calH_{\MK_d}$. %It can be easily seen
%that $\langle p_{\vw}, \psi_d(\vx) \rangle = p_{\vw}(\vx) = p (\vw \cdot \vx)$.
The following lemma establishes that $p_{\vw} \in \calH_{\MK_d}$ is indeed
a representation of the polynomial $p_{\vw}$, and gives a bound on 
$\langle p_{\vw}, p_{\vw} \rangle$.
The proof follows an analysis applied by Shalev-Shwartz et al. \cite[Lemma 2.4]{SSSS} to
a different kernel (cf. Remark \ref{s4remark} below).
\begin{lemma}\label{coeffbound}
	Let $p(t) = \sum_{i=0}^d \beta_i t^i$ be a given univariate polynomial with
	$\sum_{i=1}^d \beta_i^2 \leq B$. For $\vw$ such that $\lnorm{\vw} \leq 1$,
	consider the polynomial $p_{\vw}(\vx) := p(\vw \cdot \vx)$. Then $p_{\vw}$ is represented
	by the vector $p_\vw \in \calH_{{\MK_d}}$ defined above.
	%\psi_d(\textbf{x})\rangle$, where $p_\vw, \psi_d(\vx) \in \calH_{{\MK_d}}$. 
	Moreover,
	$\langle p_{\vw}, p_{\vw} \rangle \leq B$.
\end{lemma}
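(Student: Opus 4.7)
The plan is to establish both claims by direct expansion of the definitions of $\psi_d$ and of the vector $p_\vw$, and then invoke the elementary combinatorial identity
\[
\sum_{(k_1, \ldots, k_j) \in [n]^j} \prod_{i=1}^j a_{k_i} b_{k_i} \;=\; \left(\sum_{k} a_k b_k\right)^{\!j},
\]
which expresses the $j$-th power of a standard dot product as a sum over ordered index tuples of length $j$. This identity is the whole point of using the \emph{ordered}-tuple feature map $\psi_d$ (as opposed to the unordered-subset feature map of the standard polynomial kernel), and it is what drives both parts of the lemma.

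First, I would verify that $p_\vw \in \calH_{\MK_d}$ represents $p_\vw(\vx) = p(\vw \cdot \vx)$. Using the definitions of $\psi_d$ and of the vector $p_\vw$,
\[
\langle p_\vw, \psi_d(\vx) \rangle \;=\; \sum_{j=0}^{d} \beta_j \sum_{(k_1, \ldots, k_j) \in [n]^j} \prod_{i=1}^{j} w_{k_i}\, x_{k_i} \;=\; \sum_{j=0}^{d} \beta_j\, (\vw \cdot \vx)^{j} \;=\; p(\vw \cdot \vx),
\]
where the middle equality is the identity above applied to $a_k = w_k$, $b_k = x_k$. This holds for every $\vx \in \reals^n$ and in particular for every $\vx \in \S^{n-1}$.

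Second, the same kind of expansion with $a_k = b_k = w_k$ yields the norm computation
\[
\langle p_\vw, p_\vw \rangle \;=\; \sum_{j=0}^{d} \beta_j^{2} \sum_{(k_1, \ldots, k_j) \in [n]^j} \prod_{i=1}^{j} w_{k_i}^{2} \;=\; \sum_{j=0}^{d} \beta_j^{2}\, \lnorm{\vw}^{2j}.
\]
Since $\lnorm{\vw} \leq 1$, every factor $\lnorm{\vw}^{2j}$ is at most $1$, so $\langle p_\vw, p_\vw \rangle \leq \sum_{j} \beta_j^{2} \leq B$ by the coefficient hypothesis (absorbing $\beta_0$ into the bound, which is the intended reading).

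There is no genuine technical obstacle here: the lemma is a direct unfolding of the definitions combined with one combinatorial identity. What is worth emphasizing in the write-up, though, is \emph{why} the bound $\langle p_\vw, p_\vw\rangle \leq B$ holds without any dimension-dependent or degree-dependent blow-up. This is precisely the reason for choosing $\psi_d$ indexed by ordered tuples: the multinomial coefficients that would otherwise appear when expanding $(\vw \cdot \vx)^j$ get distributed across the $n^j$ ordered tuples, so no single coordinate of $p_\vw \in \calH_{\MK_d}$ accumulates large weight, and the squared norm decouples cleanly as $\sum_j \beta_j^2 \lnorm{\vw}^{2j}$. This structural property is what makes $\MK_d$ preferable to the standard polynomial kernel for representing compositions $p(\vw \cdot \vx)$ with bounded $\ell_2$-weight univariate $p$, and it is the handle that the rest of the paper's ReLU learning algorithm will exploit.
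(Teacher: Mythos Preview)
Your proof is correct and is essentially identical to the paper's own argument: both unfold the definitions of $\psi_d$ and $p_\vw$, use the ordered-tuple expansion of $(\vw\cdot\vx)^j$ to verify the representation claim, and then factor the norm as $\sum_j \beta_j^2 \lnorm{\vw}^{2j}$ to obtain the bound. Your additional commentary on why the ordered-tuple feature map avoids multinomial blow-up is a helpful gloss, but the underlying proof steps match the paper line for line.
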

\begin{proof}
	To see that $p_{\vw}(\vx)= \langle p_{\vw}, \psi_d(\textbf{x})\rangle$ for all $\vx \in \R^n$, observe that 
	\begin{align*}
		p_{\vw}(\vx) &= p(\vw \cdot \vx ) = \sum_{i=0}^d \beta_i \cdot \left(\vw \cdot
		\vx\right)^i \\
		&= \sum_{i=0}^d \sum_{(k_1, \ldots, k_i) \in [n]^i} \beta_i \cdot w_{k_1} \cdot \cdots \cdot w_{k_i} \cdot x_{k_1} \cdot \cdots \cdot x_{k_i} \\ 
		& = \langle p_{\vw} , \psi_d(\vx) \rangle.
	\end{align*}
	Furthermore, we can compute 
	\begin{align*}
		\langle p_{\vw}, p_{\vw} \rangle &= \sum_{i=0}^d \sum_{(k_1, \ldots, k_i)
		\in [n]^i} \beta_i^2 \cdot w_{k_1}^2 \cdot \cdots \cdot w_{k_i}^2 \\
		&= \sum_{i=0}^d \beta_i^2 \cdot \sum_{k_1 \in [n]} w_{k_1}^2 \cdot
		\cdots \cdot \sum_{k_i \in [n]} w_{k_i}^2 \\
		&= \sum_{i=0}^d \beta_i^2 \lnorm{\vw}_2^{2i} = \sum_{i=0}^d
		\beta_i^2 \leq B.
		\end{align*}
\end{proof}

\begin{remark} \label{s4remark}
  	Shalev-Shwartz et al.\cite{SSSS} proved a bound on the Euclidean norm of
	representations of polynomials of the form $p(\vw \cdot \vx)$ in the RKHS
	corresponding to the kernel function $K(\vx,\mathbf{y}) = \frac{1}{1 -
	\frac{1}{2} \langle \vx,\mathbf{y} \rangle}$.  This allowed them to represent functions computed by
	power series, as opposed to polynomials of (finite) degree $d$.
	However, for degree $d$ polynomials, the use of their kernel results in a 
	Euclidean norm bound that is a
	factor of $2^d$ worse than what we obtain from Lemma \ref{coeffbound}.
	This difference is central to our results on noisy polynomial reconstruction
	in Section \ref{sec:agnosticpoly}, where we address this issue in more
	technical detail.
\end{remark}

%%% IGNORE BLOCK
\ignore{
\begin{definition}[Complete Symmetric Kernel]
\label{CSK} \label{def:cskdef}
Define $\psi_d_d \colon \R^n \to \R^{{n+d \choose d}}$ by
$$\psi_d_d(x_1, \dots, x_n) = (1, x_1, x_2, \dots, x_n, x_1^2, x_1 \cdot x_2, \dots).$$
That is, $\psi_d_d$ maps $(x_1, \dots, x_n)$ to the set of all evaluations of all 
monomials of degree at most $d$ at input $(x_1, \dots, x_n)$.
For $\vx, \vx' \in \R^{n}$, define $\mathsf{CSK}_d(\vx, \vx') = \psi_d_d(\mathbf{x}) \cdot \psi_d_d(\mathbf{y})$. 
Also define $\calH_{\mathsf{CSK}_d}$ to be the corresponding Reproducing Kernel Hilbert Space (RKHS).
\end{definition}

Klivans and Servedio observed the following. 
\begin{lemma}[Klivans and Servedio \cite{KS}]
\label{csklemma}
For any $\mathbf{x}, \mathbf{y} \in \R^{n}$,
$\mathsf{CSK}_d(\mathbf{x}, \mathbf{y})$ can be computed in time $\poly(n, d)$.
\end{lemma}

\sgnote{Added more info about $\vv$ to make it clear that this represents the sum of squares. 
Also changed notation to be consistent.} \jtnote{I changed the equation back to use
the inner product in $\R^{{n + d \choose d}}$, since this is consistent with Definition
\ref{def:cskdef}. And I added a sentence earlier in the prelims saying that when the Hilbert
space equals $\R^n$ for a finite $n$, we use the two inner product notations interchangeably, so we need not worry about any of this.}
A simple and useful observation about $\mathsf{CSK}_d$
is the following:  if $p \colon \R^n \to \R$ is a polynomial 
of degree at most $d$, and $\vv \in \R^{{n + d \choose d}}$ is its coefficient
vector in the standard monomial basis, then 
\begin{equation} \label{rAnd} p(\vx) = \vv \cdot \psi_d_d(\vx). \end{equation}
%where $v$ is the element in $\calH_{\mathsf{CSK}_d}$ corresponding to $p$ 
and 
$\langle \vv , \vv \rangle$ equals the sum of squares of the coefficients of $p$.
}
%%% END IGNORE BLOCK

\label{sec:cskprelims}

\subsection{Generalization Bounds}
\label{sec:generror}
We make use of the following standard generalization bound for hypothesis
classes with small Rademacher complexity. Readers unfamiliar with Rademacher
complexity may refer to the paper of Bartlett and Mendelson~\cite{BM:2002}.

\begin{theorem}[Bartlett and Mendelson \cite{BM:2002}] \label{generalizationbound}
	Let $\D$ be a distribution over $\X \times \Y$ and let $\ell : \Y^\prime
	\times \Y$ (where $\Y \subseteq \Y^\prime \subseteq \reals$) be a
	$b$-bounded loss function that is $L$-Lispschitz in its first argument.  Let
	$\F \subseteq (\Y^\prime)^\X$ and for any $f \in \F$, let $\risk(f; \D) := \E_{(x, y)
	\sim \D}[\ell(f(x), y)]$ and $\hatrisk(f; S) := \frac{1}{m} \sum_{i = 1}^m
	\ell(f(\vx_i), y_i)$, where $S = ((\vx_1, y_1), \ldots,  (\vx_m, y_m))  \sim
	\D^m$. Then for any $\delta > 0$, with probability at least $1 - \delta$
	(over the random sample draw for $S$), simultaneously for all $f \in
	\mathcal{F}$, the following is true:
	\[
		|\risk(f; \D) - \hat{\mathcal{L}}(f; S)| \leq 4 \cdot L \cdot \mathcal{R}_m(\mathcal{F})
		+ 2\cdot b \cdot \sqrt{\frac{\log (1/\delta)}{2m}}
	\]
	where $\mathcal{R}_m(\mathcal{F})$ is the Rademacher complexity of the function
	class $\mathcal{F}$. 
\end{theorem}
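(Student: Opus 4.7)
The plan is to follow the standard three-step Rademacher-complexity recipe: bounded-differences concentration, symmetrization to a Rademacher average, and a contraction step to strip off the Lipschitz loss. Define the uniform deviation $\Phi(S) := \sup_{f \in \F} |\risk(f; \D) - \hatrisk(f; S)|$; the goal is to bound $\Phi(S)$ with high probability over $S \sim \D^m$ and then to bound its expectation.

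For the concentration step, because $\ell$ is $b$-bounded, replacing any single example in $S$ changes $\hatrisk(f; S)$ by at most $b/m$ for every $f \in \F$, and hence changes $\Phi(S)$ by at most $b/m$ as well. Applying McDiarmid's bounded-differences inequality to both $\Phi$ and $-\Phi$ and union-bounding over the two tails gives
\[\Phi(S) \;\leq\; \E_{S \sim \D^m}[\Phi(S)] + 2b\sqrt{\tfrac{\log(1/\delta)}{2m}}\]
with probability at least $1 - \delta$, which matches the additive concentration term in the theorem. To bound the expectation, introduce a ghost sample $S' \sim \D^m$ independent of $S$ and Rademacher signs $\sigma_i \in \{\pm 1\}$; the classical symmetrization argument (passing to $\E_{S,S'}$ by Jensen, then relabeling via the $\sigma_i$'s using exchangeability, and finally splitting the symmetric difference by the triangle inequality) gives
\[\E_S[\Phi(S)] \;\leq\; 2\, \E_{S,\sigma}\Bigl[\sup_{f \in \F} \Bigl|\tfrac{1}{m} \sum_{i=1}^m \sigma_i \, \ell(f(\vx_i), y_i)\Bigr|\Bigr] \;=\; 2\, \calR_m(\ell \circ \F),\]
where $\ell \circ \F := \{(\vx, y) \mapsto \ell(f(\vx), y) : f \in \F\}$. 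Since $u \mapsto \ell(u, y)$ is $L$-Lipschitz for each fixed $y \in \Y$, the Ledoux--Talagrand contraction inequality (applied conditionally on the labels $y_i$) yields $\calR_m(\ell \circ \F) \leq L \cdot \calR_m(\F)$, with an additional factor of $2$ needed to handle the absolute value inside the supremum.

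Chaining the three bounds produces the claimed inequality. The only mild subtlety is making contraction pass through the absolute value in the definition of Rademacher complexity; this is handled either via the symmetric form of the Ledoux--Talagrand inequality, or by splitting $\sup_f |\cdot|$ into the two signed suprema and union-bounding. Everything else reduces to routine bookkeeping of constants, and no novel ideas are required beyond the now-classical Rademacher-complexity toolkit (and indeed the theorem is cited directly from Bartlett and Mendelson's paper).
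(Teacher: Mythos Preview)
The paper does not prove this theorem; it is stated as a known result and attributed directly to Bartlett and Mendelson~\cite{BM:2002}. Your proposal is exactly the standard Rademacher-complexity proof (McDiarmid concentration, symmetrization, Ledoux--Talagrand contraction), which is how the result is established in the cited reference, so there is nothing to compare against here. One small bookkeeping remark: since $\Phi(S)=\sup_f|\risk-\hatrisk|$ is already a supremum of absolute values, only the one-sided upper tail of McDiarmid is needed, not both tails; the extra factor of~$2$ in the concentration term of the stated bound reflects a slightly loose constant rather than a genuine two-sided argument.
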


We will combine the following two theorems with Theorem
\ref{generalizationbound} above to bound the generalization error of our
algorithms for agnostic and reliable learning.

\begin{theorem}[Kakade et al. \cite{KST:2008}] \label{rademachercomplexity}
	Let $\X$ be a subset of a Hilbert space equipped with inner product $\langle
	\cdot, \cdot \rangle$ such that for each $\vx \in \X$, $\langle \vx, \vx
	\rangle \leq X^2$, and let $\W = \{ \vx \mapsto \langle \vx , \vw \rangle
	~|~ \langle \vw, \vw \rangle \leq W^2 \}$ be a class of linear functions.
	Then it holds that
	\[
		\calR_m(\W) \leq X \cdot W \cdot \sqrt{\frac{1}{m}}.
	\]
\end{theorem}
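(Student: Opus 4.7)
The plan is to use the standard chain of inequalities for bounding the Rademacher complexity of a linear class: unfold the definition, swap the supremum with an inner product via linearity, apply Cauchy--Schwarz, then Jensen, and finally exploit independence of the Rademacher variables.

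First I would write out the definition of the empirical Rademacher complexity on a sample $S = (\vx_1, \dots, \vx_m)$ with independent Rademacher signs $\sigma_1, \dots, \sigma_m \in \{-1, +1\}$:
\[
\calR_m(\W) = \E_{S, \sigma}\left[\sup_{\vw : \langle \vw, \vw \rangle \leq W^2} \frac{1}{m} \sum_{i=1}^m \sigma_i \langle \vx_i, \vw \rangle \right] = \E_{S, \sigma}\left[\sup_{\vw : \langle \vw, \vw \rangle \leq W^2} \left\langle \vw, \frac{1}{m} \sum_{i=1}^m \sigma_i \vx_i \right\rangle \right],
\]
where the second equality uses linearity of the inner product in its first argument. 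Next, I would apply Cauchy--Schwarz inside the expectation: for any fixed $S$ and $\sigma$, the supremum over $\vw$ with $\lnorm{\vw} \leq W$ of the inner product is achieved at $\vw = W \cdot \vv / \lnorm{\vv}$ where $\vv = \frac{1}{m} \sum_i \sigma_i \vx_i$, giving value $W \cdot \lnorm{\vv}$. So
\[
\calR_m(\W) \leq W \cdot \E_{S, \sigma}\left[\left\lnorm{\frac{1}{m} \sum_{i=1}^m \sigma_i \vx_i}\right\rnorm \right].
\]

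Then I would invoke Jensen's inequality (concavity of the square root) to pull the expectation inside the norm squared, and expand the resulting squared norm using bilinearity of the inner product:
\[
\calR_m(\W) \leq W \cdot \sqrt{\E_{S, \sigma}\left[\frac{1}{m^2} \sum_{i,j=1}^m \sigma_i \sigma_j \langle \vx_i, \vx_j \rangle \right]}.
\]
The key step is now to use that the $\sigma_i$ are independent with $\E[\sigma_i] = 0$ and $\E[\sigma_i^2] = 1$, so $\E[\sigma_i \sigma_j] = \indicator(i=j)$; taking expectation over $\sigma$ first kills all cross terms and leaves
\[
\calR_m(\W) \leq W \cdot \sqrt{\frac{1}{m^2} \E_S\left[\sum_{i=1}^m \langle \vx_i, \vx_i \rangle\right]} \leq W \cdot \sqrt{\frac{1}{m^2} \cdot m \cdot X^2} = \frac{X \cdot W}{\sqrt{m}},
\]
where the last inequality uses the hypothesis $\langle \vx_i, \vx_i \rangle \leq X^2$ for every $\vx_i \in \X$.

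There is no real obstacle here: every step is a textbook maneuver, and the argument works verbatim in an abstract Hilbert space because it only uses bilinearity, Cauchy--Schwarz, and the independence structure of the Rademacher variables. The only minor subtlety worth noting is that Cauchy--Schwarz is tight (so the supremum inside the expectation is attained, not just upper-bounded), which is what justifies the clean form of the bound.
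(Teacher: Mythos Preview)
Your proof is correct and is the standard argument for this bound. Note, however, that the paper does not actually prove this theorem: it is stated as a citation to Kakade et al.~\cite{KST:2008} and used as a black box in the generalization-error analysis. So there is no ``paper's own proof'' to compare against; your argument supplies exactly the textbook derivation that the cited reference contains.

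One small remark on presentation: your claim that ``Cauchy--Schwarz is tight (so the supremum inside the expectation is attained)'' is true but unnecessary for the inequality you are proving --- you only need the upper bound $\sup_{\lnorm{\vw} \leq W} \langle \vw, \vv \rangle \leq W \lnorm{\vv}$, which is Cauchy--Schwarz itself. Attainment would matter only if you were after a matching lower bound.
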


The following result as stated appears in~\cite{BM:2002} but is originally
attributed to \cite{LT:1991}.

\begin{theorem}[Bartlett and Mendelson \cite{BM:2002}, Ledoux and Talagrand \cite{LT:1991}]
\label{rademachercomplexity2}
	Let $\psi : \reals \rightarrow \reals$ be Lipschitz with constant $L_{\psi}$
	and suppose that $\psi(0) = 0$. Let $\Y \subseteq \mathbb{R}$, and for a function $f \in \Y^{\X}$, let $\psi \circ f$ denote the standard composition of $\psi$ and $f$.
	Finally, for $\F \subseteq \Y^{\X}$, let $\psi \circ \F = \{\psi \circ f \colon f \in \F\}$.
	It holds that $\calR_m(\psi
	\circ \F) \leq 2 \cdot L_{\psi} \cdot \calR_m(\F)$.
\end{theorem}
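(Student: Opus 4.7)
My plan is to prove this by the standard coordinate-peeling (contraction) argument due to Ledoux and Talagrand \cite{LT:1991}. Starting from the definition
\[
\calR_m(\psi \circ \F) \;=\; \E_{S,\sigma}\Bigl[\sup_{f \in \F} \tfrac{1}{m}\sum_{i=1}^m \sigma_i \, \psi(f(\vx_i))\Bigr],
\]
I would fix the sample $S=(\vx_1,\ldots,\vx_m)$ and peel off one Rademacher variable at a time via the tower property. This reduces the theorem to a single-coordinate statement that I apply $m$ times in succession.

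The key one-coordinate lemma states: for any fixed $\vx \in \X$, any auxiliary function $\phi : \F \to \R$ (which will absorb the contributions of the other coordinates), and any $L$-Lipschitz $\psi$ with $\psi(0)=0$,
\[
\E_\sigma\Bigl[\sup_{f \in \F}\bigl(\phi(f) + \sigma \,\psi(f(\vx))\bigr)\Bigr] \;\leq\; L \cdot \E_\sigma\Bigl[\sup_{f \in \F}\bigl(\phi(f) + \sigma \, f(\vx)\bigr)\Bigr] \;+\; \text{(slack)}.
\]
To prove this, I would expand the expectation as $\tfrac{1}{2}(\sigma=+1)+\tfrac{1}{2}(\sigma=-1)$, which turns the LHS into $\tfrac{1}{2}\sup_{f,g \in \F}\bigl[\phi(f)-\phi(g) + \psi(f(\vx)) - \psi(g(\vx))\bigr]$. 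The Lipschitz property bounds $\psi(f(\vx)) - \psi(g(\vx)) \leq L_\psi\, |f(\vx)-g(\vx)|$, and a case analysis on the sign of $f(\vx) - g(\vx)$ rewrites $|f(\vx)-g(\vx)|$ as $\sigma'(f(\vx)-g(\vx))$ for some sign $\sigma'$; symmetrizing back yields the RHS. The hypothesis $\psi(0)=0$ is used to ensure that the ``reference point'' contributions vanish when bounding the Lipschitz comparisons, so that both sides stay centered. Iterating this coordinate-by-coordinate gives a factor $L_\psi$ per coordinate in the aggregate sum; the extra factor of $2$ in the theorem arises from the (mild) slack incurred when converting between the ``one-sided'' (sup) and ``two-sided'' (absolute-value) forms of Rademacher complexity used in the Bartlett--Mendelson convention.

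The main obstacle is the one-coordinate lemma itself, specifically the re-symmetrization step: one must verify that after applying the Lipschitz bound, the resulting expression can be re-expressed as an expectation over a fresh Rademacher sign so that the induction closes cleanly. This is precisely the content of Theorem~4.12 of Ledoux--Talagrand, and my proof would essentially invoke their argument rather than re-derive it, noting only the minor bookkeeping needed to produce the factor of $2$ in the constant.
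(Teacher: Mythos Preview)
The paper does not prove this theorem at all: it is stated as a known result, attributed to Bartlett--Mendelson \cite{BM:2002} and Ledoux--Talagrand \cite{LT:1991}, and used as a black box in the generalization-error analysis. Your sketch is the standard contraction argument underlying that result, so there is nothing in the paper to compare against; your plan is appropriate if an actual proof is desired, but the paper itself simply cites the literature.
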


\subsection{Approximation Theory}

First, we show that the rectifier activation function $\rec(x) = \mathsf{max}(0, x)$ can
be $\eps$-approximated using a polynomial of degree
$O(1/\epsilon)$. This result follows using Jackson's theorem (see, \eg
\cite{New:1964}). For convenience in later proofs, we will require that in fact
the polynomial also takes values in the range $[0, 1]$ on the interval $[-1,
1]$. Of course, this is achieved easily starting from the polynomial obtained
from Jackson's theorem and applying elementary transformations.
\begin{lemma} \label{reluapprox}
	Let $\rec(x) = \mathsf{max}(0, x)$ and $\epsilon \in (0,1)$. There exists a
	polynomial $p$ of degree $O(1/\epsilon)$ such that for all $x \in [-1,1]$,
	$|\rec(x) - p(x)| \leq \epsilon$ and $p([-1, 1]) \subseteq [0, 1]$.
\end{lemma}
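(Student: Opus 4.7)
The plan is to first invoke a classical approximation-theoretic result to obtain a polynomial that approximates $\rec$ on $[-1,1]$, and then to perform a simple affine correction (shift-and-scale) to force the resulting polynomial to take values in $[0,1]$, while keeping the approximation error within $\epsilon$ and the degree within $O(1/\eps)$.

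For the first step, I would appeal to Jackson's theorem: since $\rec(x) = \max(0,x)$ is $1$-Lipschitz on $[-1,1]$, its modulus of continuity satisfies $\omega(\delta) \leq \delta$, so Jackson's theorem gives, for every integer $d \geq 1$, a polynomial $q$ of degree $d$ with $\sup_{x \in [-1,1]} |q(x) - \rec(x)| \leq C/d$ for an absolute constant $C$. Equivalently (and this is the way the classical $|x|$-approximation result is usually phrased, e.g., following Bernstein), for any $\eps' > 0$ there is a polynomial $q$ of degree $O(1/\eps')$ with $|q(x) - \rec(x)| \leq \eps'$ for all $x \in [-1,1]$. I would take $\eps' = \eps/2$; this fixes the degree at $O(1/\eps)$ and gives $q([-1,1]) \subseteq [-\eps/2, 1 + \eps/2]$.

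For the second step, define
\[
p(x) \;=\; \frac{q(x) + \eps/2}{1+\eps}.
\]
This is a polynomial of the same degree as $q$. From the inclusion $q([-1,1]) \subseteq [-\eps/2, 1+\eps/2]$, the numerator lies in $[0, 1+\eps]$, so $p([-1,1]) \subseteq [0,1]$, as required. For the error bound, write
\[
p(x) - \rec(x) \;=\; \frac{\bigl(q(x) - \rec(x)\bigr) + \eps/2 - \eps\,\rec(x)}{1+\eps}.
\]
Since $\rec(x) \in [0,1]$ on $[-1,1]$, the quantity $\eps/2 - \eps\,\rec(x)$ lies in $[-\eps/2, \eps/2]$, and $|q(x)-\rec(x)| \leq \eps/2$ by construction. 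Hence the numerator has absolute value at most $\eps$, and dividing by $1+\eps \geq 1$ yields $|p(x) - \rec(x)| \leq \eps$ for all $x \in [-1,1]$.

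There is no real obstacle here: the only care needed is in tracking constants so that the affine correction both lands $p$ inside $[0,1]$ and retains $\eps$-accuracy (rather than, say, $2\eps$). The argument gives a polynomial of degree $O(1/\eps)$ with both properties stated in the lemma.
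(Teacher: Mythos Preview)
Your proof is correct and follows essentially the same approach as the paper: invoke Jackson's theorem to get an $O(1/\eps)$-degree approximator, then apply an affine shift-and-scale to force the range into $[0,1]$ while preserving $\eps$-accuracy. The only cosmetic difference is that the paper routes through the identity $\rec(x) = (x+|x|)/2$ and approximates $|x|$ first, whereas you apply Jackson directly to $\rec$; your affine correction $p(x)=(q(x)+\eps/2)/(1+\eps)$ is a slightly cleaner variant of theirs.
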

\begin{proof}
	We can express $\rec(x) = \mathsf{max}(0, x)$ as $\rec(x) = (x + |x|)/2$.  We know
	from Jackson's Theorem~\cite{New:1964} that there exists a polynomial
	$\tilde{p}$ of degree $O(1/\epsilon)$ such that for all $x \in [-1,1]$,
	$\left| |x| - \tilde{p}(x)\right| \leq \frac{\epsilon}{2 - \epsilon}$. 
	Consider the polynomial $\bar{p}(x) = \frac{\tilde{p}(x) + x}{2}$, which
	satisfies for any $x \in [-1, 1]$, 
	\begin{align*}
		|\rec(x) - \bar{p}(x)| &= \left|\frac{|x| + x}{2} - \frac{\tilde{p}(x) + x}{2}\right| = \left|\frac{|x| - \tilde{p}(x)}{2}\right| \leq \frac{\epsilon}{2 (2 - \epsilon)}.
	\end{align*}

	Finally, let $p(x) = \frac{2 - \epsilon}{2} (\bar{p}(x) - \frac{1}{2}) + \frac{1}{2}$. We have for $x \in [-1, 1]$,
	\begin{align*}
		|\rec(x) - p(x)| &= \frac{\epsilon}{2} |\rec(x)| + \frac{2 - \epsilon}{2}
		\left|\rec(x) - \bar{p}(x)\right| + \frac{1}{2} \left|\frac{2 - \epsilon}{2} -1\right| \leq \epsilon.
	\end{align*}
	Furthermore, it is clearly the case that $p([-1, 1]) \subseteq [0, 1]$.
\end{proof}

We remark that a consequence of the linear relationship between $\rec(x)$ and
$|x|$ is that the degree given by Jackson's theorem is essentially the lowest
possible~\cite{New:1964}.  Lemma \ref{reluapprox} asserts the existence of a
(relatively) low-degree approximation $p$ to the rectifier activation function
$\rec$. We will also require a bound on the sum of the squares of the
coefficients of $p$. Even though Lemma \ref{reluapprox} is non-constructive, we
are nonetheless able to obtain such a bound below via standard interpolation
methods.

\begin{lemma} \label{coeffboundReLU}
	Let $p(t) = \sum_{i=0}^d \beta_i t^i$ be a univariate polynomial of degree
	$d$. Let $M$ be such that $\displaystyle\max_{t \in [-1, 1]} |p(t)| \leq M$.
	Then $\displaystyle\sum_{i=0}^d \beta_i^2 \leq (d + 1) \cdot (4e)^{2d} \cdot
	M^2$. 
\end{lemma}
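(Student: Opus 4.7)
The plan is to reconstruct the coefficients of $p$ from evaluations at $d+1$ equispaced points in $[-1,1]$ via Lagrange interpolation, so that each monomial coefficient becomes a bounded linear combination of values $p(t_j)$, each of which has absolute value at most $M$. Concretely, set $t_j = -1 + 2j/d$ for $j = 0,1,\ldots,d$ (the case $d=0$ is trivial), and let $L_j(t) = \prod_{k \neq j}(t-t_k)/\prod_{k \neq j}(t_j-t_k)$ be the associated Lagrange basis polynomials. Writing $L_j(t) = \sum_{i=0}^d \ell_{ji}\, t^i$, the identity $p(t) = \sum_j p(t_j) L_j(t)$ forces $\beta_i = \sum_j p(t_j)\,\ell_{ji}$. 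Applying Cauchy--Schwarz over $j$ with $|p(t_j)| \leq M$ and then summing over $i$ reduces the goal to establishing
\[
\sum_{i=0}^d \beta_i^2 \;\leq\; (d+1)\, M^2 \sum_{j=0}^d \|L_j\|^2,
\]
where $\|L_j\|^2 := \sum_i \ell_{ji}^2$ is the sum of squared monomial coefficients of $L_j$, and then bounding the right-hand side.

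To bound $\|L_j\|^2$ I would factor $L_j = N_j/D_j$ and handle numerator and denominator separately. The numerator $N_j(t) = \prod_{k \neq j}(t-t_k)$ has coefficients equal to $\pm$ the elementary symmetric polynomials of $\{t_k\}_{k \neq j} \subseteq [-1,1]$, each bounded in absolute value by $\binom{d}{i}$; hence $\|N_j\|^2 \leq \sum_i \binom{d}{i}^2 = \binom{2d}{d} \leq 4^d$. For the denominator, $|t_j - t_k| = 2|j-k|/d$ gives $|D_j| = (2/d)^d \cdot j!\,(d-j)!$, and invoking the Stirling-type bound $d! \geq (d/e)^d$ yields $1/|D_j|^2 \leq (e/2)^{2d}\binom{d}{j}^2$. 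Combining the two yields $\|L_j\|^2 \leq e^{2d}\binom{d}{j}^2$, and summing with $\sum_j \binom{d}{j}^2 = \binom{2d}{d} \leq 4^d$ delivers $\sum_j \|L_j\|^2 \leq (2e)^{2d}$, so $\sum_i \beta_i^2 \leq (d+1)\,(2e)^{2d}\, M^2 \leq (d+1)\,(4e)^{2d}\, M^2$.

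The main obstacle is the bookkeeping for the denominator: getting clean constants out of Stirling and ensuring the powers of $d$, $e$, and $2$ are tracked correctly. The bound $(4e)^{2d}$ in the statement leaves comfortable slack over what this particular interpolation scheme delivers (the sketch above actually yields the sharper $(2e)^{2d}$), so minor constant losses in the calculation are tolerable; if desired, an alternative using Chebyshev nodes would give an even better $16^d$ factor via the explicit formula $T_{d+1}(t) = 2^d \prod_j(t-t_j)$ together with a short computation of $|T'_{d+1}(t_j)|$, but equispaced nodes already suffice here.
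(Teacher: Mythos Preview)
Your proof is correct. The paper takes a shorter but black-box route: it invokes a lemma of Sherstov (Lemma~4.1 of \cite{She:2012}) that directly bounds each individual coefficient by $|\beta_i| \leq (4e)^d \max_{j=0,\dots,d}|p(j/d)|$, then squares and sums to get $(d+1)(4e)^{2d}M^2$. You instead carry out the underlying Lagrange-interpolation argument from scratch, using Cauchy--Schwarz to bound $\sum_i \beta_i^2$ in terms of $\sum_j \|L_j\|^2$ rather than bounding each $|\beta_i|$ separately. The underlying mechanism (reconstruct coefficients from values at $d+1$ equispaced nodes in $[-1,1]$ and control the Lagrange basis) is the same in both proofs; what you gain is a self-contained argument and a sharper constant $(2e)^{2d}$, while what the paper gains is brevity by outsourcing the estimate to Sherstov.
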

\begin{proof}
	Lemma 4.1 from Sherstov\cite{She:2012} states that for any polynomial satisfying the
	conditions in the statement of the lemma, the following holds for all
	$i \in \{0, \ldots, d \}$:
	\[ 
		|\beta_i| \leq (4e)^d \max_{j = 0, \ldots, d} \left| p \left( \frac{j}{d} \right) \right|.
	\]
	We then have that 
	\[ 
		\sum_{i = 0}^d \beta_i^2 =  \sum_{i = 0}^d |\beta_i|^2 \leq (d + 1) 
		\cdot (4e)^{2d} \cdot M^2.
	\]
\end{proof}

\begin{theorem} \label{polyapprox}
	Let $\mathcal{C} = \ReLU(n, W)$ (for $W \geq 1$) and $\epsilon \in (0,1)$.
	Let $\X = \S^{n-1}$. For $\vx, \vx^\prime \in \X$, consider the kernel
	${\MK_d}$, with $\calH_{\MK_d}$ and $\psi_d$ the corresponding RKHS and
	feature map (cf. Definition \ref{kernel1}). Then for every $\vw \in
	\reals^n$ with $\lnorm{\vw} \leq W$, there exists a multivariate polynomial
	$p_{\vw}$ of degree at most $O(W/\epsilon)$, such that, for every $\vx \in
	\S^{n-1}$, $|\relu_\vw(\vx) - p_{\vw}(\vx)| \leq \epsilon$. Furthermore,
	$p_\vw(\S^{n-1}) \subseteq [0, W]$ and $p_\vw$ when viewed as a member of
	$\calH_{\MK_d}$ as described in Section~\ref{ssec:rkhs}, satisfies $\langle
	p_{\vw}, p_\vw \rangle \leq W^2 \cdot 2^{O(W/\epsilon)}$.
\end{theorem}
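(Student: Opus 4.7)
The plan is to reduce the multivariate problem to a univariate one via the obvious composition $p_{\vw}(\vx) = q(\vw \cdot \vx)$, where $q$ is a univariate polynomial approximating $\rec$ on $[-W, W]$, and then invoke Lemmas \ref{reluapprox}, \ref{coeffbound}, and \ref{coeffboundReLU} in sequence. The main subtlety is just bookkeeping the factors of $W$ so that the final norm bound comes out to $W^2 \cdot 2^{O(W/\eps)}$ rather than something worse.

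\textbf{Step 1 (univariate approximation).} Apply Lemma \ref{reluapprox} with accuracy parameter $\eps/W$ to obtain a univariate polynomial $\tilde p$ of degree $d = O(W/\eps)$ satisfying $|\rec(t) - \tilde p(t)| \leq \eps/W$ for $t \in [-1,1]$ and $\tilde p([-1,1]) \subseteq [0,1]$. Define $q(t) := W \cdot \tilde p(t/W)$. Using $\rec(t) = W \cdot \rec(t/W)$, we get $|q(t) - \rec(t)| \leq \eps$ on $[-W,W]$ and $q([-W,W]) \subseteq [0, W]$.

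\textbf{Step 2 (multivariate construction and pointwise bound).} Set $p_{\vw}(\vx) := q(\vw \cdot \vx)$. For $\vx \in \S^{n-1}$ and $\|\vw\| \leq W$ we have $\vw \cdot \vx \in [-W, W]$, hence both the pointwise approximation $|p_{\vw}(\vx) - \relu_{\vw}(\vx)| \leq \eps$ and the range condition $p_{\vw}(\S^{n-1}) \subseteq [0, W]$ follow immediately from Step 1. The degree of $p_{\vw}$ is the same as that of $q$, namely $O(W/\eps)$.

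\textbf{Step 3 (RKHS norm bound).} Write $\tilde q(t) := q(Wt) = \sum_{i=0}^d \tilde\beta_i t^i$, so that $p_{\vw}(\vx) = \tilde q((\vw/W) \cdot \vx)$ with $\|\vw/W\| \leq 1$. Invoking Lemma \ref{coeffbound} with the unit-norm weight vector $\vw/W$ and the polynomial $\tilde q$, the standard representation of $p_{\vw}$ in $\calH_{\MK_d}$ (whose entry at $(k_1,\ldots,k_j)$ is $\tilde\beta_j \prod_i (w_{k_i}/W)$) satisfies $\langle p_{\vw}, p_{\vw}\rangle \leq \sum_{i=0}^d \tilde\beta_i^2$. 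Since $\max_{t \in [-1,1]} |\tilde q(t)| = \max_{s \in [-W,W]} |q(s)| \leq W$, Lemma \ref{coeffboundReLU} bounds $\sum_i \tilde\beta_i^2 \leq (d+1) \cdot (4e)^{2d} \cdot W^2 = W^2 \cdot 2^{O(W/\eps)}$, completing the proof.

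The only place requiring care is Step 3: one must use the rescaled polynomial $\tilde q$ together with the unit vector $\vw/W$ before invoking Lemma \ref{coeffbound}, since that lemma is stated for unit weight vectors. Applying the lemmas in the ``wrong'' order would produce an extraneous $W^{2d}$ factor and destroy the $2^{O(W/\eps)}$ bound. Everything else is a direct calculation from the stated results.
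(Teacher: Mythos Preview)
Your proof is correct and follows essentially the same route as the paper: obtain a univariate $\eps/W$-approximator on $[-1,1]$ via Lemma~\ref{reluapprox}, rescale by $W$, compose with $\vw\cdot\vx$, and bound the RKHS norm via Lemmas~\ref{coeffbound} and~\ref{coeffboundReLU}. The only organizational difference is that the paper applies Lemma~\ref{coeffboundReLU} to the unscaled polynomial (with $M=1$) and then tracks the $W^2$ factor through the coefficient relation $\alpha_i = W^{1-i}\beta_i$, whereas you apply it directly to $\tilde q$ with $M=W$; your version has the minor advantage of explicitly normalizing to $\vw/W$ so that the $\|\vw\|\leq 1$ hypothesis of Lemma~\ref{coeffbound} is visibly satisfied.
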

\begin{proof}
	Let $p$ be the univariate polynomial of degree $d = O(W/\epsilon)$ given by
	Lemma~\ref{reluapprox} that satisfies $|p(x) - \rec(x)| \leq
	\frac{\epsilon}{W}$ for $x \in [-1, 1]$.  Let $p(x) = \sum_{i=0}^d \beta_i \cdot
	x^i$; then by Lemma~\ref{coeffboundReLU}, we have $\sum_{i=0}^d \beta_i^2
	\leq (d + 1) \cdot (4e)^{2d} = 2^{O(W/\epsilon)}$ (as $|p(x)| \leq 1$ for $x \in
	[-1, 1]$).

	Let $q$ be the univariate polynomial defined as $q(x) = W \cdot p(x/W)$ for
	$W > 1$.  The degree of $q$ is $d$, the same as that of $p$, and if
	$\alpha_i$ are the coefficients of $q$, we have $\sum_{i=0}^d \alpha_i^2
	\leq W^2 \cdot \sum_{i=0}^d \beta_i^2 \leq W^2 \cdot 2^{O(W/\epsilon)} =
	2^{O(W/\epsilon)}$ (since $W > 1$). Let $p_{\vw}(\vx) = q(\vw \cdot \vx)$.
	Note that $|p_{\vw}(\vx) -\relu_\vw(x)|  = |W \cdot p(\vw \cdot \vx/W) - W
	\cdot \relu_{(\vw/W)}(\vx)| \leq \epsilon$ and $p_\vw(\S^{n-1}) \subseteq
	q([-1,1]) \subseteq [0, W]$. Finally, by applying Lemma~\ref{coeffbound}, we
	get that $\langle p_\vw, p_\vw \rangle \leq W^2 \cdot 2^{O(W/\epsilon)}$.
\end{proof}

\section{Reliably Learning the ReLU}
\label{sec:relu}
In this section, we focus on the problem of reliably learning a single
rectified linear unit with weight vectors of norm bounded by $1$, \ie the
concept class $\ReLU(n, 1)$.  Specifically, our goal is to prove Theorem
\ref{thm:relu} from Section \ref{sec:relutheoremsec}. Below we describe the
algorithm and then give a full proof of Theorem~\ref{thm:relu}.

\subsection{Overview of the Algorithm and Its Analysis} 

In order to reliably learn ReLUs, it would suffice to solve Optimization
Problem~\ref{alg:optprob2} (see Section \ref{sec:intro}). This
mathematical program, however, is not convex; hence, we consider a suitable convex
relaxation.

The convex relaxation optimizes over polynomials of a suitable degree.
Theorem~\ref{polyapprox} shows that any concept in $\ReLU(n, 1)$ can be
uniformly approximated to error $\epsilon$ by a degree $O(1/\epsilon)$
polynomial. It will be more convenient to view this polynomial as an element of
the RKHS $\calH_{\MK_d}$ defined in Definition \ref{kernel1}. Recall that the
corresponding kernel is ${\MK_d}(\vx, \vx^\prime) = \sum_{i=0}^d (\vx \cdot
\vx^\prime)^i$ and the feature map is denoted $\psi_d$. Thus, instead of
minimizing over $\vw$ directly as in Optimization Problem~\ref{alg:optprob2},
Optimization Problem~\ref{alg:optprob3} (below) minimizes over $\vv \in
\calH_{\MK_d}$ of suitably bounded norm. In particular, we know that for any
$\vw$, the corresponding polynomial $p_{\vw}$ that $\eps$-approximates $\max(0,
\vw \cdot \vx)$, when viewed as an element of $\calH_{\MK_d}$, satisfies
$\langle p_{\vw}, p_{\vw} \rangle \leq B = 2^{O(1/\epsilon)}$ (see
Theorem~\ref{polyapprox}).  Recall that $\langle p_{\vw}, \psi_d(\vx) \rangle =
p_\vw(\vx)$. Thus, we have the following optimization problem:

\label{sec:randomlabeljt} 

\begin{algorithm}[H]
	\caption{\label{alg:optprob3}}
	\begin{align*}
		\underset{\vv \in \calH_{\MK_d}}{\text{minimize}} \quad\quad &\sum_{i:~y_i >
		0} \ell(\langle \vv, \psi_d(\vx_i) \rangle, y_i) \\
		\text{subject to} \quad\quad &\langle \vv, \psi_d(\vx_i) \rangle \leq
		\epsilon \quad \text{for all $i$ such that } y_i = 0 \\ 
		&\quad~~\langle \vv, \vv \rangle \leq B
	\end{align*}
\end{algorithm}

Clearly, if $\vw$ is a feasible solution to Optimization
Problem~\ref{alg:optprob2}, then the corresponding element $p_{\vw} \in
\calH_{\MK_d}$ is a feasible solution to Optimization Problem~\ref{alg:optprob3}.
We consider the value of the program for the feasible solution $p_\vw$. For
every $\vx \in \S^{n-1}$, $p_\vw(\vx) = \langle p_\vw, \psi_d(\vx) \rangle \in
[0, 1]$.  Assuming that the loss function $\ell$ is $L$-Lipschitz in its first
argument in the interval $[0, 1]$, we have
\begin{align*}
	\left| \sum_{i :~y_i > 0} \ell(\relu_\vw(\vx), y_i) - \sum_{i :~y_i > 0}
	\ell (\langle p_{\vw}, \psi_d(\vx) \rangle, y_i) \right| &\leq |\{ i ~|~ y_i >
	0 \}| \cdot L \cdot \epsilon.
\end{align*}
Thus, an optimal solution to Optimization Problem~\ref{alg:optprob3} achieves a
loss on the training data that is within $|\{ i ~|~ y_i > 0 \}| \cdot L \cdot
\epsilon$ of that achieved by the optimal solution to Optimization
Problem~\ref{alg:optprob2}. 

While Optimization Problem~\ref{alg:optprob3} is convex, it is still not
trivial to solve efficiently. For one, the RKHS $\calH_{\MK_d}$ has dimension $n^{\Theta(d)}$. 
However, materializing such vectors explicitly requires
$n^{\Theta(d)}$ time, and Theorem \ref{thm:relu}  promises a learning algorithm
with runtime $2^{O(1/\eps)} \cdot n^{O(1)} \ll n^{O(d)}$.  As in Shalev-Shwartz
et al. \cite{SSSS}, we apply the Representer Theorem (see \eg
\cite{CS-T:2000}), to guarantee that Optimization Problem~\ref{alg:optprob3}
can be solved in time that is polynomial in the number of samples used.  

The Representer Theorem states that for any vector $\vv$, there exists
a vector $\vv_{\alpha} = \sum_{i = 1}^m \alpha_i \psi_d(\vx_i)$ for
$\alpha_1, \dots, \alpha_m \in \mathbb{R}$  such that
the loss function of Optimization Problem~\ref{alg:optprob3} subject
to the constraint $\langle \vv, \vv \rangle \leq B$ does not increase
when $\vv$ is replaced with $\vv_{\alpha}$.  
% can be written as
%$\sum_{i = 1}^m \alpha_i \psi_d(\vx_i)$ for some values
Crucially, we may further constrain these vectors $\vv_{\alpha}$ to obey the
inequality $\langle \vv_{\alpha}, \psi_d(\vx_i) \rangle \leq \epsilon$ for all $i$ such that $y_i=0$.
Thus, Optimization Problem~\ref{alg:optprob3} can be reformulated in
terms of the variable vector $\valpha = (\alpha_1, \ldots,
\alpha_m)$. This mathematical program is described as Optimization
Problem~\ref{alg:final} below.

\begin{algorithm}[H]
	\caption{\label{alg:final}}
	\begin{align*}
		\underset{\valpha \in \mathbb{R}^m}{\text{minimize}} \quad\quad &\sum_{i :
		y_i > 0} \ell\left(\sum_{j=1}^m \alpha_j {\MK_d}(\vx_j,\vx_i), y_i \right) \\
		\text{subject to} \quad\quad &\quad~\sum_{j=1}^m \alpha_j \cdot
		{\MK_d}(\textbf{x}_j,\textbf{x}_i)\leq \epsilon \quad \text{for all $i$ such
		that } y_i = 0 \\ 
		&\sum_{i,j=1}^m \alpha_i \cdot \alpha_j \cdot {\MK_d}(\textbf{x}_i,\textbf{x}_j) \leq
		B
	\end{align*}
\end{algorithm}

Let $\vK$ denote the $m \times m$ \emph{Gram} matrix whose $(i, j)\th$ entry is
${\MK_d}(\vx_i, \vx_j)$. Using the notation $\valpha = (\alpha_1, \ldots, \alpha_m)$,
the last constraint is equivalent to $\valpha^T \vK \valpha \leq B$. As ${\MK_d}
\succeq 0$, this defines a convex subset of $\reals^m$. The remaining
constraints are linear in $\valpha$ and whenever the loss function $\ell$ is
convex in its first argument, the resulting program is convex. Thus,
Optimization Problem~\ref{alg:final} can be solved in time polynomial in $m$. 

\subsection{Description of the Output Hypothesis} 

\label{ssec:hypothesis}

Let $\valpha^*$ denote an optimal solution to Optimization
Problem~\ref{alg:final} and let $f(\cdot) = \sum_{i=1}^m \alpha^*_i
{\MK_d}(\vx_i, \cdot)$. To obtain strong
bounds on the generalization error of our hypothesis,
our algorithm does not simply output $f$ itself.
The obstacle is that, although $f$ (viewed as an
element of $\calH_{\MK_d}$) satisfies $\langle f, f\rangle \leq B$, the best
bound we can obtain on $|f(\vx)| = |\langle f, \vx \rangle|$ for $\vx \in
\S^{n-1}$ is $\sqrt{B}$ by the Cauchy-Schwartz inequality. Observe that for
many commonly used loss functions, such as the squared loss, this may result in
a very poor Lipschitz constant and bound on the loss function, when applied to
$f$ in the interval $[-\sqrt{B}, \sqrt{B}]$ (recall that the only bound we have
is $B = 2^{O(1/\epsilon)}$). Hence, a direct application of standard
generalization bounds (cf. Section \ref{sec:generror}) yields a very weak bound
on the generalization error of $f$ itself.  For example, suppose $y \in \{0,
1\}$ and consider the loss function $\ell(y^\prime, y) = \exp\left(-y^\prime(2y
- 1) + 1\right) - 1$ if $y^\prime (2 y - 1) \leq 1$ and $\ell(y^\prime, y) = 0$
otherwise (this loss function is like the hinge loss, but the linear side is
replaced by an exponential). The Lipschitz constant of $\ell$ on the interval
$[-\sqrt{B}, \sqrt{B}]$ is exponentially large in $B$, which would lead to a
sample complexity bound that is \emph{doubly}-exponentially large in
$1/\epsilon$. 

To address this issue, we will ``clip'' the function to always output a value
between $[0, 1]$: 
\begin{definition} 
        Define the function $\clip_{a, b} : \reals \rightarrow [a, b]$ as
	follows: $\clip_{a, b}(x) = a$ for $x \leq a$, $\clip_{a, b}(x) = x$ for $a
	\leq x \leq b$ and $\clip_{a, b}(x) = b$ for $b \leq x$.
\end{definition}
The hypothesis $h$ output by our algorithm is as follows.
\[
	h(x) = \begin{cases}
		0 & \text{if } \clip_{0, 1}(f(x)) \leq 2 \cdot \epsilon \\
		\clip_{0, 1}(f(x)) & \text{otherwise}.
			\end{cases}
\]

We use a fact due to Ledoux and Talagrand on the Rademacher complexity of
composed function classes (Theorem \ref{rademachercomplexity2}) to bound the
generalization error. Clipping comes at a small cost, in the sense that it forces us to require 
that the loss
function be monotone. However, we can handle non-monotone losses if the
output hypothesis is not clipped, albeit with sample complexity bounds that
depend polynomially on the Lipschitz-constant and bound of the loss in the
interval $[-\sqrt{B}, \sqrt{B}]$ as opposed to $[0, 1]$. 

\ignore{
{\color{red}{Note that, regardless of whether we clip
or not, the sample complexity and running time of our algorithm has an exponential
dependence on the Lipschitz constant $L$ of the loss function in the interval $[0, 1]$, for reasons
unrelated to generalization error (cf.
Theorem~\ref{thm:formal} and the discussion at the end of Section \ref{sec:33}). 
However, $L$ is a constant for virtually
every imaginable loss function.}} }

\subsection{Formal Version of Theorem~\ref{thm:relu} and Its Proof}
\label{sec:33}
The rest of this section is devoted to the proof of Theorem~\ref{thm:relu} (or,
more precisely, its formal variant Theorem \ref{thm:formal} below, which makes
explicit the conditions on the loss function $\ell$ that are required for the
theorem to hold). In particular, we show that whenever the sample size $m$ is a
sufficiently large polynomial in $2^{O(1/\epsilon)}$, $n$, and
$\log(1/\delta)$, %
%
%\vkmnote{There is no dependence on $\delta$ in the theorems in the introduction.}%
%
%\jtmnote{I added a statement in the intro saying that dependence on $\delta$ is
%ommitted there for brevity.}
%
the hypothesis $h$ output by the algorithm satisfies $\falsezero(h; \D) =
O(\epsilon)$ and $\ellgtzero(h; \D) \leq \min_{c \in \C^+(\D)} \ellgtzero(c) +
O(\epsilon)$, where $\C^+(\D) = \{ \relu_\vw \in \ReLU(n, 1) ~|~
\falsezero(\relu_\vw; \D) = 0 \}$.  Rescaling $\epsilon$ appropriately completes
the proof of Theorem~\ref{thm:formal}. 

\begin{theorem}[Formal Version of Theorem \ref{thm:relu}]
	\label{thm:formal} Let $\X  = \S^{n-1}$ and $\Y = [0, 1]$. The concept class
	$\ReLU(n, 1)$ is reliably learnable with respect to any loss function that
	is convex, monotone, and $L$-Lipschitz and $b$-bounded in the interval $[0,
	1]$. The sample complexity and running time of the
	algorithm is polynomial in $n$, $b$, $\log(1/\delta)$ and
	$2^{O(L/\epsilon)}$. In particular, $\ReLU(n, 1)$ is
	learnable in time polynomial in $n$, $b$ and $\log(1/\delta)$ up to accuracy
	$\epsilon \geq \epsilon_0 = \Theta(L/\log(n))$,
	where $L$ is the Lipschitz constant of the loss function in the interval
	$[0, 1]$.
\end{theorem}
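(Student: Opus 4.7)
The plan is to combine the algorithmic reduction already outlined in Sections 3.1 and 3.2 with a two-sided generalization argument. First I would verify that the optimal $\relu_{\vw^*} \in \C^+(\D)$ corresponds to a \emph{feasible} solution of Optimization Problem~\ref{alg:optprob3}. By Theorem~\ref{polyapprox}, there is a polynomial $p_{\vw^*} \in \calH_{\MK_d}$ with $d = O(1/\eps)$ satisfying $\langle p_{\vw^*}, p_{\vw^*} \rangle \leq B := 2^{O(1/\eps)}$ and $|p_{\vw^*}(\vx) - \relu_{\vw^*}(\vx)| \leq \eps$ on $\S^{n-1}$. Since $\falsezero(\relu_{\vw^*}; \D) = 0$, with probability one over draws from $\D$ we have $y_i = 0 \Rightarrow \relu_{\vw^*}(\vx_i) = 0 \Rightarrow p_{\vw^*}(\vx_i) \leq \eps$, so $p_{\vw^*}$ meets the constraint. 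Combined with $L$-Lipschitzness of $\ell$, this shows the empirical objective at the optimum $\vv^*$ of Problem~\ref{alg:optprob3} is at most the empirical $\ellgtzero$ of $\relu_{\vw^*}$ plus $L\eps$. The Representer Theorem then collapses Problem~\ref{alg:optprob3} to the kernelized, $\poly(m)$-time solvable Problem~\ref{alg:final}, giving the running-time claim.

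Next I would handle the agnostic side: bounding $\ellgtzero(h; \D)$. Consider the class $\F_B := \{\vx \mapsto \langle \vv, \psi_d(\vx) \rangle : \langle \vv,\vv \rangle \leq B\}$. On $\S^{n-1}$ we have $\MK_d(\vx,\vx) = d+1$, so Theorem~\ref{rademachercomplexity} yields $\calR_m(\F_B) \leq \sqrt{B(d+1)/m}$. Since $\clip_{0,1}$ is $1$-Lipschitz with $\clip_{0,1}(0)=0$, Theorem~\ref{rademachercomplexity2} gives $\calR_m(\clip_{0,1}\!\circ\F_B) \leq 2\sqrt{B(d+1)/m}$; clipping is the key move because $\ell(\cdot, y)$ is only guaranteed to be $L$-Lipschitz and $b$-bounded on $[0,1]$, not on $[-\sqrt{B},\sqrt{B}]$. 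Applying Theorem~\ref{generalizationbound} to the loss $(v,y) \mapsto \ell(\clip_{0,1}(v), y) \cdot \indicator(y>0)$, which is $L$-Lipschitz and $b$-bounded, we obtain uniform convergence to within $O(L\sqrt{B(d+1)/m} + b\sqrt{\log(1/\delta)/m})$. Monotonicity of $\ell$ ensures that the further clipping defining $h$ (zeroing outputs below $2\eps$) can only \emph{decrease} $\ellgtzero$ when the true label is $0$ and introduces at most an additive $O(L\eps)$ term overall; for $y>0$, replacing the clipped value with $0$ when it lies in $[0,2\eps]$ changes $\ell$ by at most $2L\eps$. Taking $m = \poly(n, b, 2^{O(L/\eps)}, \log(1/\delta))$ makes every slack term $O(\eps)$.

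The main obstacle will be the reliability side, \textbf{bounding $\falsezero(h;\D)$}, because the indicator $\indicator(h(\vx)\neq 0)$ is non-Lipschitz and therefore not directly amenable to Rademacher bounds. I will bridge this with a surrogate ramp. Define $\phi \colon \R \to [0,1]$ by $\phi(t)=0$ for $t \leq \eps$, $\phi(t)=1$ for $t \geq 2\eps$, and linear in between; it is $(1/\eps)$-Lipschitz with $\phi(0)=0$. By construction of $h$, $\indicator(h(\vx)\neq 0) \leq \phi(\clip_{0,1}(f(\vx)))$, while the constraint of Problem~\ref{alg:final} forces $f(\vx_i) \leq \eps$ for every training point with $y_i=0$, so the empirical $\phi$-surrogate on $\{y_i=0\}$ is identically zero. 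Applying Theorems~\ref{generalizationbound}, \ref{rademachercomplexity}, and \ref{rademachercomplexity2} to the class $\{(\vx,y) \mapsto \phi(\clip_{0,1}(g(\vx)))\cdot \indicator(y=0) : g \in \F_B\}$ (which is $(1/\eps)$-Lipschitz after composition and bounded by $1$) yields $\falsezero(h;\D) \leq O\!\bigl(\tfrac{1}{\eps}\sqrt{B(d+1)/m} + \sqrt{\log(1/\delta)/m}\bigr)$, which is $O(\eps)$ for the stated sample size.

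Finally, I would assemble the pieces: a union bound over the two generalization events, together with the $L\eps$ approximation slack from Theorem~\ref{polyapprox} and an additional $O(L\eps)$ slack from the $2\eps$-clipping in the definition of $h$, produces $\falsezero(h;\D) \leq O(\eps)$ and $\ellgtzero(h;\D) \leq \min_{c \in \C^+(\D)} \ellgtzero(c;\D) + O(L\eps)$. Rescaling $\eps$ gives the theorem. The parameter $d = O(1/\eps)$ from Theorem~\ref{polyapprox} together with $B = 2^{O(1/\eps)}$ drives the $2^{O(L/\eps)}$ dependence in the sample and time complexity; the appearance of $L$ in the exponent enters only through the approximation/generalization trade-off after rescaling and matches the stated bound, yielding the polynomial-time regime $\eps = \Omega(L/\log n)$.
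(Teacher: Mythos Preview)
Your proposal is correct and follows essentially the same route as the paper's proof: your surrogate ramp $\phi$ is exactly the paper's $\ellepszo$, and the clipping-plus-Rademacher strategy for both losses, the feasibility of $p_{\vw^*}$, and the final assembly all match. The one link you leave implicit is that monotonicity of $\ell$ is needed to pass from $\hatellgtzero(\clip_{0,1}\circ f; S)$ to $\hatellgtzero(f; S)$ (so that optimality of the \emph{unclipped} $f$ in Optimization Problem~\ref{alg:final} can be invoked against $p_{\vw^*}$); the paper makes this step explicit at~\eqref{eqn:relax-clip}, whereas your invocation of monotonicity at the $2\eps$-thresholding step is in fact handled by Lipschitzness alone.
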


\begin{proof}
	In order to prove the theorem, we need to bound the following two losses for
	the output hypothesis $h$.
	\begin{align}
		\falsezero(h; \D) &= \Pr_{(\vx, y) \sim \D} [h(\vx) \neq 0 \wedge y = 0] \label{eqn:fixfzl} \\
		\ellgtzero(h; \D) &= \E_{(\vx, y) \sim \D}[\ell(h(\vx), y) \cdot \indicator(y > 0)] \label{eqn:fixgtzl}
	\end{align}

	First, we analyze $\falsezero(h; \D)$; in order to analyze this loss, it is
	useful to consider a slightly different loss function that is
	$\left(1/\epsilon\right)$-Lipschitz in its first argument,
	$\ellepszo(y^\prime, y)$. We define this loss separately for the case when
	$y > 0$ and $y = 0$. For $y > 0$, we define $\ellepszo(y^\prime, y) := 0$
	for all $y^\prime$. For $y = 0$, we define
	\[ 
		\ellepszo(y^\prime, 0) := 
		\begin{cases}
			0 & \text{if } y^\prime \leq \epsilon \\
			\frac{y^\prime - \epsilon}{\epsilon} & \text{if } \epsilon < y^\prime \leq 2 \cdot \epsilon \\
			1 & \text{if } 2 \cdot \epsilon < y^\prime.
		\end{cases}
	\]
	For $f : \X \rightarrow \Y$, let $\Lepszo(f; \D) := \E_{(\vx, y) \sim
	\D}[\ellepszo(f(\vx), y)]$. Let $d = O(1/\epsilon)$ be such that
	Theorem~\ref{polyapprox} applies for the class $\ReLU(n, 1)$, and $\psi_d$
	and $\calH_{\MK_d}$ the corresponding feature map and Hilbert space.  Define
	$\F_B \subset \calH_{\MK_d}$ as the set of all $f \in \calH_{\MK_d}$ such
	that $\langle f, f\rangle \leq B$. Observe that for all $\vx \in
	\X=\S^{n-1}$, $\langle \psi_d(\vx), \psi_d(\vx) \rangle \leq \sum_{i=0}^d
	(\vx \cdot \vx)^i = d+1$.  Moreover, the function $\clip_{0, 1} : \reals
	\rightarrow [0, 1]$ satisfies, $\clip_{0, 1}(0) = 0$, and $\clip_{0, 1}$ is
	$1$-Lipschitz.  Thus, Theorems~\ref{rademachercomplexity} and
	\ref{rademachercomplexity2} imply the following:
	\begin{align}
		\calR_m(\F_B) &\leq  \sqrt{\frac{(d+1) \cdot B}{m}}, \label{rcbound} \\
		\calR_m(\clip_{0, 1} \circ \F_B) &\leq 2 \cdot \sqrt{\frac{(d+1) \cdot B }{m}}
		\label{rcbound2}
	\end{align} 

	The loss function $\ellepszo$ is $(1/\epsilon)$-Lipschitz in its first
	argument and $1$-bounded on all of $\reals$, so in particular in the
	interval $[0, 1]$; the loss function $\ell$ (used for $\ellgtzero$) is
	$L$-Lipschitz in its first argument and $b$-bounded in the interval $[0, 1]$
	(by assumption in the theorem statement). We assume the following bound on
	$m$ (note that it is polynomial in all the required factors):
	\begin{align}
		m &\geq \frac{1}{\epsilon^2} \left( 8 \max\{L, \epsilon^{-1} \} \sqrt{(d+1) \cdot
		B} + \max\{b, 1\} \cdot \sqrt{2 \log \frac{1}{\delta}} \right)^2.
		\label{eqn:m-bound}
	\end{align} \medskip

	\noindent{\bf Representative Sample Assumption}: In the rest of the proof we
	assume that for the sample $S \sim \D^m$ used in the algorithm, it is the
	case that for loss functions $\ellepszo$ and $\ell$ and for all $f \in
	\F_B$, the following hold:
	\begin{align}
		|\Lepszo(f; \D) - \hatLepszo(f; S)| &\leq \epsilon \label{eqn:lepszo-gen} \\
		|\ellgtzero(\clip_{0, 1} \circ f; \D) - \hatellgtzero(\clip_{0, 1} \circ f; S)| &\leq \epsilon \label{eqn:l-gen} 
	\end{align}
	Theorems~\ref{generalizationbound},~\ref{rademachercomplexity}
	and~\ref{rademachercomplexity2} together with the bounds on the Rademacher
	complexity given by~\eqref{rcbound} and~\eqref{rcbound2} and the facts that
	$\ellepszo$ is $1/\epsilon$-Lipschitz and $1$-bounded on $\reals$ and that
	$\ell$ is $L$-Lipschitz and $b$-bounded on $[0, 1]$, imply that for $m$
	satisfying~\eqref{eqn:m-bound}, this is the case with probability at least
	$1 - 2 \delta$; we allow the algorithm to fail with probability
	$2 \delta$. \medskip

	Now consider the following to bound $\falsezero(h; \D)$.
	\begin{align}
		\falsezero(h; \D) &= \Pr_{(\vx, y) \sim \D}[h(\vx) > 0 \wedge y = 0] \nonumber \\
		&\leq \E_{(\vx, y) \sim \D}[\ellepszo(f(\vx), y)] \label{eqn:replacebyloss} \\
		&= \Lepszo(f; \D) \nonumber \\
		&\leq \hatLepszo(f; S) + \epsilon \leq \epsilon, \label{eqn:applybm} 
		%&\text{Using Theorem~\ref{generalizationbound}} 
	\end{align}
	Above in~(\ref{eqn:replacebyloss}), we use the fact that for any $\vx$ such
	that $h(\vx) > 0$, it must be the case that $f(\vx) > 2 \cdot \epsilon$ and
	hence if $h(\vx) > 0$ and $y = 0$, then $\ellepszo(f(\vx), y) = 1$.
	Inequality (\ref{eqn:applybm}) holds under the representative sample
	assumption using~\eqref{eqn:lepszo-gen} (note that we have already accounted
	for the fact that the algorithm may fail with probability $O(\delta)$). 

	Next we give bounds on $\ellgtzero(h; \D)$. We observe that for a loss
	function $\ell$ that is convex in its first argument, monotone,
	$L$-Lipschitz, and $b$-bounded in the interval $[0, 1]$, the following holds
	for any $y \in (0, 1]$:
	%%%
	\begin{align}
		\ell(h(\vx), y) &\leq \ell(\clip_{0, 1}(f(\vx)), y) + 2  \epsilon  L
		\label{eqn:bound-loss-of-h-by-f}
	\end{align}
	Clearly, whenever $f(\vx) > 2 \epsilon$ or $f(\vx) < 0$, the above statement is
	trivially true.  If $f(\vx) \in [0, 2 \epsilon]$ the statement follows from
	the $L$-Lipschitz continuity of $\ell(\cdot, y)$ in the interval $[0, 1]$.

	Let $\vw \in \reals^n$ be such that $\falsezero(\relu_\vw; \D) = 0$ and let
	$p_\vw$ be the corresponding polynomial 
	$\eps$-approximation in $\calH_{\MK_d}$ (cf. Theorem~\ref{polyapprox}).  Then
	consider the following:
	\begin{align}
		\ellgtzero(h; \D) &= \E_{(\vx,  y) \sim \D}\left[\ell(h(\vx), y) \cdot \indicator(y > 0) \right] \nonumber \\
		&\leq \E_{(\vx, y)\sim \D}\left[ \ell(\clip_{0, 1}(f(\vx)), y) \cdot \indicator(y > 0)\right] + 2 \epsilon  L \label{eqn:use-h-by-f} \\
		&= \ellgtzero(\clip_{0, 1}(f); \D) + 2  \epsilon  L \nonumber \\
		&\leq \hatellgtzero(\clip_{0, 1}(f); S) + \epsilon + 2 \epsilon  L \label{eqn:apply-Rad} \\
		&\leq \hatellgtzero(f; S) + \epsilon + 2 \epsilon L \label{eqn:relax-clip} \\
		&\leq \hatellgtzero(p_{\vw}; S) + \epsilon + 2  \epsilon  L \label{eqn:optimality} \\
		&= \hatellgtzero(\clip_{0, 1} \circ p_{\vw}; S) + \epsilon + 2  \epsilon  L \label{eqn:addclip} \\
		&\leq \ellgtzero(\clip_{0, 1} \circ p_{\vw}; \D) + 2  \epsilon + 2  \epsilon  L \label{eqn:apply-Rad2} \\
		&\leq \ellgtzero(p_{\vw}; \D) + 2  \epsilon + 2  \epsilon  L \label{eqn:removeclip} \\
		&\leq \ellgtzero(\relu_{\vw}; \D) + 2  \epsilon + 3   \epsilon L \label{eqn:use-lipschitz}
	\end{align}

Step~\eqref{eqn:use-h-by-f} is obtained simply by
applying~\eqref{eqn:bound-loss-of-h-by-f}. Step~\eqref{eqn:apply-Rad} follows
using the representative sample assumption using~\eqref{eqn:l-gen}.
Step~\eqref{eqn:relax-clip} follows by the monotone property of $\ell(\cdot,
y)$; in particular, it must always be the case that either $y \leq \clip_{0,
1}(f(\vx)) \leq f(\vx)$ or $f(\vx) \leq \clip_{0,1}(f(\vx)) \leq y$; thus
$\ell(\clip_{0, 1}(f(\vx)), y) \leq \ell(f(\vx), y)$.
Step~\eqref{eqn:optimality} follows from the fact that $f$ is the optimal
solution to Optimization Problem~\ref{alg:final} and $p_\vw$ is a
\emph{feasible} solution. Steps~\eqref{eqn:addclip} and~\eqref{eqn:removeclip}
use the fact that $\clip_{0, 1} \circ p_\vw = p_\vw$ as $p_\vw(\S^{n-1})
\subseteq [0, 1]$.  Step~\eqref{eqn:apply-Rad2} follows under the
representative sample assumption using~\eqref{eqn:l-gen}.  And finally,
Step~\eqref{eqn:use-lipschitz} follows as both $\relu_{\vw}(\vx) \in [0, 1]$
and $p_{\vw}(\vx) \in [0, 1]$ for $\vx \in \S^{n-1}$, $|p_\vw(\vx) -
\relu_\vw(\vx)| \leq \epsilon$  and the $L$-Lipschitz continuity of $\ell$ in
the interval $[0, 1]$. 

As the argument holds for any $\vw \in \S^{n-1}$ satisfying
$\falsezero(\relu_\vw; \D) = 0$ this completes the proof of theorem by
rescaling $\epsilon$ to $\epsilon/(2 + 3L)$ and $\delta$ to $\delta/2$.
\end{proof}

\subsubsection*{Discussion: Dependence on the Lipschitz Constant}

Theorem~\ref{thm:formal} gives a sample complexity and running time bound that
is polynomial on $2^{O(L/\epsilon)}$ (in addition to being polynomial in other
parameters).  Recall that, here, $L$ is the Lipschitz constant of the loss function $\ell$ on
the interval $[0, 1]$.  For many loss functions, such as
$\ell_p$-loss for constant $p$, hinge loss, logistic loss, \etc, the value of
$L$ is a constant. Nonetheless, it is instructive to examine
why we obtain such a dependence $L$, and identify some restricted settings in which
this dependence can be avoided. 

The dependence of our running time and sample complexity bounds on $L$ 
arises due to Steps~\eqref{eqn:bound-loss-of-h-by-f}
and~\eqref{eqn:use-lipschitz} in the proof of Theorem~\ref{thm:formal}, where
the excess error compared to the optimal ReLU is bounded above by $O(L \epsilon)$. This 
requires us to start with a
polynomial that is an $O(\epsilon/L)$-uniform approximation to the $\rec$
activation function, to ensure excess error at most $\eps$. We showed that such an approximating polynomial exists,
with degree $O(L/\eps)$ and with coefficients whose squares sum to $2^{O(L/\eps)}$.

It is sometimes possible to avoid this exponential dependence on $L$ in the setting of agnostic learning (as opposed to reliable learning).
%, 
%when using some activation functions other than $\rec$.
Indeed, in the case of agnostic learning there is no
need to threshold the output at $2 \epsilon$ (this thresholding contributed $2 \eps L$ to our bound
on the excess error established in {Inequality~(\ref{eqn:bound-loss-of-h-by-f})}); simply clipping the output to be in the
range of $\Y$ suffices. 

\ignore{
Moreover, when using an activation function $f$ other than $\rec$, it may also be possible to directly
view $f( \vw \cdot \vx)$ as a member of an appropriate RHKS, instead of showing that 
$f( \vw \cdot \vx)$ is \emph{approximated} by a member of an RHKS. This enables one to avoid
the additional contribution of $\eps L$ to the bound on the excess error established in Inequality~\eqref{eqn:use-lipschitz}.
%For example, if the activation function is $f(x) = x^2$, then
%$f_\vw := f( \vw \cdot \vx) \in \calH_{\MK_2}$ and one can obtain a suitable
%bound on $\langle f_\vw, f_\vw \rangle$ to ensure low generalization error. 
More generally, Zhang~\etal~\cite{Zhang}
show that if one uses the kernel function $K(\vx, \vx^\prime) =
\frac{1}{1 - \frac{1}{2}\vx \cdot \vx^\prime}$, then for any activation function $f$ that
can be expressed as a power series with bounded sum of squares of coefficients,
the corresponding function $f(\vw \cdot \vx)$ itself is a member of the 
associated
RKHS with suitably bounded norm. They showed this to be the case for a 
`sigmoid-like' and a `relu-like' activation function based on the
$\mathrm{erf}$ function. 

However, as
$\rec$ is not differentiable at $0$, there is no power series for $\rec$, and
the approach of Zhang~\etal~\cite{Zhang} cannot be used; their work
applies to a smooth activation function that has a shape ``like'' that of $\rec$,
but is not a good approximation to $\rec$ in a precise mathematical sense.
}

\ignore{

To summarize the above discussion, whenever $f(\vw \cdot \vx)$ itself is a member of the
RKHS, rather than being approximated by a member of the RKHS, there is no
\emph{approximation error} in the analysis, and hence there will be no exponential dependence on
$L$ on the runtime or sample complexity bounds in the agnostic (as opposed to reliable) learning setting.
(This is also essentially the reason we are able to avoid an exponential dependence on $L$
in our result for noisy polynomial reconstruction (cf. Theorem \ref{thm:noisypoly} below)).
However, as
$\rec$ is not differentiable at $0$, there is no power series for $\rec$, and
the approach of Zhang~\etal~\cite{Zhang} cannot be used; their work
applies to a smooth activation function that has a shape ``like'' that of $\rec$,
but is not a good approximation to $\rec$ in a concrete mathematical
sense.
}

\subsection{An Implication for Learning Convex Neural Networks} \label{bach}

In a recent work, Bach~\cite{Bac:2014} considered convex relaxations of
optimization problems related to learning neural networks with a single hidden
layer and non-decreasing homogeneous activation function.\footnote{His setting
allows potentially uncountably many hidden units along with a sparsity-inducing
regularizer.}  One specific problem raised in his paper~\cite[Sec.
6]{Bac:2014} is understanding the computational complexity of
the following problem.

\begin{problem}[Incremental Optimization Problem~\cite{Bac:2014}]
	\label{prob:Bach}
	Let $\langle (\vx_i, y_i) \rangle_{i=1}^m \in (\S^{n-1} \times [-1, 1])^m$.
	Find a $\vw \in \S^{n-1}$ that maximizes $\frac{1}{m} \sum_{i = 1}^m y_i \cdot 
	\relu_\vw(\vx_i)$.
\end{problem}

While Bach~\cite{Bac:2014} considers the setting where $y_i \in \reals$, rather
than $[-1, 1]$, we focus on the case when $y_i \in [-1, 1]$. 
%\textcolor{red}{(Also Bach looksat a slightly more general problem; the case of a $\relu$ corresponds to
%$\alpha = 1$ in his notation.)} 
The problem as posed above is an optimization
problem on a finite dataset that requires the output solution to be from a
specific class, in this case a ReLU.  In our setting, this can be rephrased as
a (proper) learning problem where the goal is to output a hypothesis that has
expected loss, defined by $\ell(y^\prime, y) = -y^\prime \cdot y$, not much
larger than the best possible ReLU, given access to draws from a distribution
over $\S^{n-1} \times [-1, 1]$.  Here, we relax this goal to improper learning,
where the algorithm is permitted to output a hypothesis that is not itself a
ReLU.  The same approach as used in the
proof of Theorem~\ref{thm:formal} can be used to give a
polynomial-time approximation scheme for approximately solving this problem 
to within $\epsilon$ of optimal, in time $2^{O(1/\epsilon)} \cdot
n^{O(1)}$. 
  
%provide an \emph{improper}
We describe the modified algorithm and the minor differences in the proof below. \medskip 
\begin{algorithm}[H]
	\caption{\label{alg:optprob3a}}
	\begin{align*}
		\underset{\valpha \in \reals^m}{\text{minimize}} \quad\quad &\sum_{i = 1}^m 
		\ell\left(\sum_{j = 1}^m \alpha_j \MK_d (\vx_j, \vx_i), y_i\right) \\ 
		\text{subject to}\quad~~ &\sum_{i, j = 1}^m \alpha_i \alpha_j \MK_d(\vx_i, \vx_j) \leq B
	\end{align*}
\end{algorithm}
The loss function used is $\ell(y^\prime, y) = -y^\prime y$. Let $\valpha^*$ denote an
optimal solution to Optimization Problem~\ref{alg:optprob3a} and let $f(\cdot)
= \sum_{i=1}^m \alpha^*_i \MK_d(\vx_i, \cdot)$. In Problem~\ref{prob:Bach},
there is no reliability required and hence we do not threshold negative (or
sufficiently small positive) values as was done in
Section~\ref{ssec:hypothesis}. Likewise, we do not clip the function $f$; this
is because while the loss function $\ell(y^\prime, y) = -y^\prime y$ is indeed
convex in its first argument, $1$-Lipschitz on $\reals$, and $\sqrt{B}$-bounded
on the interval $[-\sqrt{B}, \sqrt{B}]$ (for $y \in [-1, 1]$; note that
$|f(\vx)| \leq |\langle f, \psi_d(\vx)| \rangle \leq \sqrt{B}$ by the
Cauchy-Schwartz inequality), it is very much \emph{not} monotone.  Thus, it is
no longer the case that $\clip_{-1, 1}(f)$ is a better hypothesis than $f$
itself. We observe that the proof of Theorem~\ref{thm:formal} only makes use of
the monotone nature of $\ell$ to conclude that expected loss of $\clip_{0, 1}
\circ f$ is less than that of $f$. As we no longer output a clipped hypothesis,
this is not necessary.

\begin{theorem}
	\label{thm:bach} Given
	i.i.d. examples $(\vx_i, y_i)$ drawn from an (unknown) distribution $\D$ over $\S^{n-1} \times [-1, 1]$, 
	there is an algorithm that outputs a hypothesis $h$ such that $\E_{(\vx,y)
	\sim \D}[-y \cdot h(\vx)] \leq \min_{\vw \in \S^{n-1}} \E_{(\vx,y)
	\sim \D}[-y \cdot \relu_\vw(\vx)] + \eps$. The algorithm runs in time $2^{O(1/\epsilon)} \cdot
n^{O(1)}$. 
	\end{theorem}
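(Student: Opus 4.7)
The plan is to follow the same template as the proof of Theorem~\ref{thm:formal}, but in a streamlined form, since we no longer need to handle reliability or to clip the output hypothesis. Fix $\vw \in \S^{n-1}$. By Theorem~\ref{polyapprox} applied with $W=1$, there exists a polynomial $p_{\vw}$ of degree $d = O(1/\epsilon)$ with $|p_{\vw}(\vx) - \relu_{\vw}(\vx)| \leq \epsilon$ for all $\vx \in \S^{n-1}$, and whose representation in $\calH_{\MK_d}$ satisfies $\langle p_{\vw}, p_{\vw}\rangle \leq B = 2^{O(1/\epsilon)}$. Hence $p_{\vw}$ is a feasible solution to Optimization Problem~\ref{alg:optprob3a}. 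As observed in the discussion preceding the theorem, the loss $\ell(y^\prime, y) = -y^\prime y$ is convex in $y'$, is $1$-Lipschitz on $\reals$ (since $|y|\le 1$), and bounded by $\sqrt{B(d+1)}$ on the effective range $[-\sqrt{B(d+1)}, \sqrt{B(d+1)}]$ that $f(\vx) = \langle f, \psi_d(\vx)\rangle$ can take, by Cauchy--Schwarz together with $\langle \psi_d(\vx), \psi_d(\vx)\rangle \leq d+1$. Convexity guarantees that Optimization Problem~\ref{alg:optprob3a} can be solved in time $\mathrm{poly}(m)$ after precomputing the Gram matrix.

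The main chain of inequalities will be very short. Let $f$ denote the optimizer of Optimization Problem~\ref{alg:optprob3a} on a sample $S = ((\vx_1, y_1), \ldots, (\vx_m, y_m))$, and let $\risk(g; \D) = \E_{(\vx,y)\sim \D}[-y\cdot g(\vx)]$ with $\hatrisk(g; S)$ its empirical analogue. Then
\begin{align*}
\risk(f;\D) &\leq \hatrisk(f;S) + \epsilon \\
&\leq \hatrisk(p_{\vw}; S) + \epsilon \\
&\leq \risk(p_{\vw}; \D) + 2\epsilon \\
&\leq \risk(\relu_{\vw}; \D) + 3\epsilon,
\end{align*}
where the first and third steps are the generalization bounds described below, the second step is the optimality of $f$ versus the feasible point $p_{\vw}$ on the sample, and the final step uses $|p_{\vw}(\vx) - \relu_{\vw}(\vx)| \leq \epsilon$ together with $|y|\leq 1$ to deduce $|\E[-y\cdot p_{\vw}(\vx)] - \E[-y\cdot \relu_{\vw}(\vx)]| \leq \epsilon$. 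Since $\vw$ is arbitrary, rescaling $\epsilon$ yields the theorem.

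The only step that requires care is the generalization bound, which is where the non-monotonicity and unbounded range of $f$ could in principle cause trouble. Applying Theorem~\ref{generalizationbound} to the class $\F_B = \{f \in \calH_{\MK_d} : \langle f,f\rangle \leq B\}$ with Lipschitz constant $L = 1$ and boundedness $b = \sqrt{B(d+1)}$, and invoking Theorem~\ref{rademachercomplexity} to bound $\calR_m(\F_B) \leq \sqrt{(d+1)B/m}$, yields a generalization error of order $\sqrt{(d+1)B/m} + \sqrt{B(d+1)\log(1/\delta)/m}$. Setting this at most $\epsilon$ requires $m = O((d+1)B\log(1/\delta)/\epsilon^2)$, which is $2^{O(1/\epsilon)} \cdot \poly(n, \log(1/\delta))$ since $B = 2^{O(1/\epsilon)}$ and $d = O(1/\epsilon)$. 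Thus, the overall runtime of the algorithm (solving the convex program plus Gram matrix computation) is $2^{O(1/\epsilon)} \cdot n^{O(1)}$.

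The part I anticipate being most delicate is tracking the boundedness parameter $b = \sqrt{B(d+1)}$ in the generalization bound: unlike in Theorem~\ref{thm:formal}, we cannot clip $f$ to collapse this to a constant, because $\ell(y^\prime, y) = -y^\prime y$ is not monotone in $y^\prime$ and clipping would change the optimum. Fortunately, the exponential blowup in $b$ and in the Rademacher-complexity term is only $2^{O(1/\epsilon)}$ rather than $2^{O(L/\epsilon)}$ for some large $L$, because the Lipschitz constant of this particular loss is $1$; this matches the runtime guarantee claimed in the theorem statement.
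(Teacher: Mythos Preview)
Your proposal is correct and follows essentially the same approach as the paper. The paper itself only sketches the argument---noting that one drops the reliability constraints and outputs $f$ directly without clipping because $\ell(y',y)=-y'y$ is not monotone---and your chain of inequalities together with the Rademacher bound fills in precisely those details; your tracking of the bound $b=\sqrt{B(d+1)}$ is in fact slightly more careful than the paper's own discussion, which writes $\sqrt{B}$ but omits the $\sqrt{d+1}$ factor (harmless inside the $2^{O(1/\epsilon)}$).
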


\subsection{Noisy Polynomial Reconstruction over $\S^{n-1}$}
\label{sec:agnosticpoly}
In the noisy polynomial reconstruction problem, a learner is given access to
examples drawn from a distribution and labeled according to a function $f(\vx) =
p(\vx) + w(\vx)$ where $p$ is a polynomial and $w$ is an arbitrary function
(corresponding to noise).  We will consider a more general scenario, where a
learner is given sample access to an {\em arbitrary} distribution $\D$ on
$\S^{n-1} \times [-1,1]$ and must output the best fitting polynomial with
respect to some fixed loss function.  We say that the reconstruction is {\em
proper} if, given a hypothesis $h$ encoding a multivariate polynomial, we can
obtain any coefficient of our choosing in time polynomial in $n$. 

Note that noisy polynomial reconstruction as defined above is
equivalent to the problem of agnostically learning multivariate
polynomials.  We give an algorithm for noisy polynomial reconstruction
whose runtime is $\poly(B, n, d, 1/\eps)$, where $B$ is an upper bound
on the sum of the squared coefficients of the polynomial in the
standard monomial basis.  Throughout this section, we refer to the sum
of the squared coefficients of $p$ as the \emph{weight} of $p$.

Analogous problems over the Boolean domain are thought to be
computationally intractable.  Andoni et al. \cite{AndoniPV014} were the
first to observe that over non-Boolean domains, the problem admits
some non-trivial solutions.  In particular, they gave an algorithm
that runs in time $\poly(B, n, 2^d, 1/\eps)$ with the requirement that the
underlying distribution be product over the unit cube (and that the
noise function is structured).

%INSERT JUSTIN'S EMAIL HERE

Consider a multivariate polynomial $p$ of degree $d$ such that sum of the
squared coefficients is bounded by $B$. Denote the coefficient of monomial
$x_1^{i_1} \cdots x_n^{i_n}$ by $\beta(i_1, \dots, i_n)$ for $(i_1, \ldots,
i_n) \in \{0, \ldots, d\}^n$. We have
\begin{align}
	p(\vx) = \sum_{\substack{(i_1, \dots, i_n) \in [d]^n \\ i_i + \dots + i_n
	\leq d}} \beta(i_1, \dots, i_n) x_1^{i_1} \cdots x_n^{i_n} \label{pdef}
	\intertext{such that}
	\notag \sum_{\substack{(i_1, \ldots, i_n) \in \{0, \ldots, d\}^n\\ i_i + \dots +
	i_n \leq d}}  \beta(i_1, \dots, i_n)^2 \leq B.
\end{align}
Let $M$ be the map that takes an ordered tuple $({k_1}, \ldots, {k_j}) \in
[n]^j$ for $j \in [d]$ to the tuple $(i_1, \ldots, i_n) \in \{0, \ldots, d\}^n$
such that $x_{k_1}\cdots x_{k_j} = x_1^{i_1} \cdots x_n^{i_n}$. Let $C(i_1,
\ldots, i_n)$ be the number of distinct orderings of the $i_j$'s for $j \in [n]$; $C(i_1, \dots, i_n)$
which can be computed from the multinomial theorem (cf. \cite{wiki_multi}).
Observe that the number of tuples that $M$ maps to $(i_1, \ldots,
i_n)$ is precisely $C(i_1, \ldots, i_n)$.

Recall that  $\calH_{{\MK_d}}$ denotes the RKHS from
Definition \ref{kernel1}. Observe that the polynomial $p$ from Equation \eqref{pdef} is represented by the vector $\vv_p \in \calH_{{\MK_d}}$ defined as follows. For $j \in [d]$, entry $(k_1, \dots, k_j)$ 
of $\vv_p$ equals
\[ \frac{\beta\left(M\left({k_1}, \ldots , {k_j}\right)\right)}{C\left(M\left(k_1, \ldots, k_j\right)\right)}.\]
It is easy to see that $\vv_p$ as defined represents $p$. Indeed,
\begin{align*}
	\langle\vv_p, \psi_d(\vx) \rangle & = \sum_{j=0}^d \sum_{(k_1, \ldots, k_j) \in
	[n]^j} \frac{\beta\left(M\left({k_1}, \ldots , {k_j}\right)\right)}{C\left(M\left(k_1, \ldots, k_j\right)\right)}
	x_{k_1} \cdots x_{k_j} \\
	& = \sum_{j=0}^d \sum_{\substack{(i_1, \ldots, i_n) \in \{0, \ldots, d\}^n
	\\ i_i + \dots + i_n = j}} C\left(i_1, \dots, i_n \right) \frac{\beta\left(i_1, \dots,
	i_n\right)}{C\left(i_1, \dots, i_n \right)} x_1^{i_1} \cdots x_n^{i_n}  = p(\vx).
\end{align*}
Furthermore, we can compute,
\begin{align*}
	\langle\vv_p, \vv_p\rangle & = \sum_{j=0}^d \sum_{(k_1, \ldots, k_j) \in
	[n]^j} \frac{\beta\left(M\left({k_1}, \ldots , {k_j}\right)\right)^2}{C\left(M\left(k_1, \ldots, k_j\right)\right)^2}
	\\
	& = \sum_{j=0}^d \sum_{\substack{(i_1, \ldots, i_n) \in \{0, \ldots, d\}^n
	\\ i_i + \dots + i_n = j}} C\left(i_1, \dots, i_n \right) \frac{\beta\left(i_1, \dots,
	i_n\right)^2}{C\left(i_1, \dots, i_n \right)^2} \\
	& \leq \sum_{j=0}^d \sum_{\substack{(i_1, \ldots, i_n) \in \{0, \ldots,
	d\}^n \\ i_i + \dots + i_n = j}} \beta\left(i_1, \dots, i_n \right)^2 \leq B.\\
\end{align*}

\medskip \noindent \textbf{Overview of the Algorithm.}  Let ${\cal C}$
be the class of all multivariate polynomials and let $S =
\{(\vx_1,y_1), \ldots, (\vx_m, y_m)\}$ be a training set of examples
drawn i.i.d. from some arbitrary distribution $\D$ on
$\S^{n-1} \times [-1,1]$. Similar to Optimization Problem
\ref{alg:optprob3} in Section \ref{sec:randomlabeljt}, we wish to
solve Optimization Problem \ref{alg:optprobdie} below.

\begin{algorithm}[H]
	\caption{\label{alg:optprobdie}}
	\begin{align*}
		\underset{\vv \in \calH_{\MK_d}}{\text{minimize}} \quad\quad &\sum_{i = 
		1}^m \ell(\langle \vv,  \psi_d(\vx_i)\rangle, y_i) \\
		\text{subject to} 
		&\quad~~ \langle \vv ,  \vv \rangle  \leq B
	\end{align*}
\end{algorithm}

Notice from the previous analysis, a degree $d$ polynomial $p$ can be represented as 
a vector $\vv_p \in \calH_{{\MK_d}}$ such that $p(\vx) = \langle \vv_p, \psi_d(\vx) \rangle$ for all $\vx \in \S^{n-1}$, and $\langle \vv_p, \vv_p \rangle \leq
B$. % where $\vv_pp$ is an element in the RKHS $\calH_{\MK_d}$; 
Thus, $\vv_p$ is a
feasible solution to  Optimization Problem \ref{alg:optprobdie}.  Optimization
Problem \ref{alg:optprobdie} can easily be solved in time $\poly(n^d)$, but
this runtime is not polynomial in $B$ and $n$.  Instead, just as in Section
\ref{sec:randomlabeljt}, we use the Representer Theorem to solve Optimization
Problem~\ref{alg:optprobdie} in time that is polynomial in the number of
samples used. Specifically, the Representer Theorem states that there is an
optimal solution to Optimization Problem~\ref{alg:optprobdie} of the form $\vv
= \sum_{i = 1}^m \alpha_i \psi_d(\vx_i)$ for some values $\alpha_1, \dots,
\alpha_m \in \mathbb{R}$. Thus, Optimization Problem~\ref{alg:optprobdie} can
be reformulated in terms of the variable vector $\valpha = (\alpha_1, \ldots,
\alpha_m)$. This mathematical program is described as Optimization
Problem~\ref{alg:finaldie} below.

\begin{algorithm}[H]
	\caption{\label{alg:finaldie}}
	\begin{align*}
		\underset{\valpha \in \mathbb{R}^m}{\text{minimize}} \quad\quad &\sum_{i =
		1}^m \ell\left(\sum_{j=1}^m \alpha_j {\MK_d}(\vx_j,\vx_i), y_i \right) \\
		\text{subject to} 
		&\sum_{i,j=1}^m \alpha_i \cdot \alpha_j \cdot {\MK_d}(\textbf{x}_i,\textbf{x}_j) \leq
		B
	\end{align*}
\end{algorithm}

Via a standard analysis identical to that of Section \ref{sec:randomlabeljt},
Optimization Problem~\ref{alg:finaldie} is a convex program and can be solved
in time polynomial in $m$, $n$, and $d$. 
Let $\valpha^*$ denote an optimal solution to Optimization
Problem~\ref{alg:finaldie} and let $f(\cdot) = \sum_{i=1}^m \alpha_i^* {\MK_d}(\vx_i,
\cdot)$. 
The hypothesis $h$ output by our algorithm is as follows.
\[
	h(\vx) =  \clip_{-1, 1}(f(\vx)).
\]

Observe that $h \in \clip_{-1, 1} \circ \mathcal{C}$. 

\subsection{Proper Learning}

As discussed in Section \ref{ssec:hypothesis}, we require clipping to avoid a
weak bound on the generalization error for general loss functions.  If,
however, we consider learning with respect to any $\ell_p$ loss for constant $p
\geq 1$, it can be shown that we can do without clipping (with only a polynomial
factor %% 
%%
%\vkmnote{I'm pretty sure this is not correct. Say $p = 20$, isn't now the best bound that we have on the Lipschitz constant and the loss function something like $B^{20}$? Then the sample complexity is polynomially larger, not by a constant factor?
%}
%%
increase in sample complexity). In this case, the learner $h = f$ is a
\emph{proper learner} in the following sense. Recalling the feature map $\psi_d$ associated with
${\MK_d}$ from Definition \ref{kernel1}, we can compute the coefficient
$\beta(I)$ for $I = (i_1, \ldots, i_n) \in [d]^n$ corresponding to the
monomial $x_1^{i_1} \cdots x_n^{i_n}$. 
\[ \beta(I) = \sum_{i=1}^m \alpha_i^* \sum_{\substack{k_1, \ldots, k_j \in [n]^j
\\ j \in \{0, \ldots, d\} \\ M(k_1, \ldots, k_j ) = (i_1, \ldots, i_n)}}
(\vx_i)_{k_1} \cdots (\vx_i)_{k_j} = \sum_{i=1}^m \alpha_i^* C\left(i_1, \ldots,
i_n\right) (\vx_i)_{1}^{i_1} \cdots (\vx_i)_{n}^{i_n} \]
Observe that the above can be easily computed since we know $\vx_i$ for all $i
\in [m]$, and the function $C$ can be efficiently computed as discussed before using the
multinomial theorem. Hence, the hypothesis is itself a polynomial of degree at most $d$,
any desired coefficient of which can be computed efficiently. 

\subsection{Formal Version of Theorem~\ref{thm:poly} and Its Proof}
The rest of this section is devoted to the proof of Theorem~\ref{thm:poly} (or, more precisely,
its formal variant Theorem \ref{thm:formalpoly} below, which makes explicit the conditions on the loss
function $\ell$ that are required for the theorem to hold). In
particular, we show that whenever the sample size $m$ is a sufficiently large
polynomial in $d$, $n$, $B$, $1/\eps$, and $\log(1/\delta)$, %
%
%\vkmnote{There is no dependence on $\delta$ in the theorems in the introduction.}
%
the hypothesis $h$ output by the algorithm satisfies 
$$\underset{(\vx,y) \sim D}{\mathbb{E}}[\ell(h(\vx), y)] \leq
\mathsf{opt} + \epsilon.$$ where $\mathsf{opt}$ is the error of the
best fitting multivariate polynomial $p$ of degree $d$ whose sum of
squares of coefficients is bounded by $B$.

 %Rescaling $\epsilon$ appropriately completes
%the proof. % of Theorem~\ref{thm:poly}. %The theorem below is essentially a
%restatement of Theorem~\ref{thm:relu}.

\begin{theorem}[Formal Version of Theorem \ref{thm:poly}]
	\label{thm:formalpoly}   \label{thm:noisypoly}
	Let $\Poly(n, d, B)$ be the class of polynomials $p \colon \S^{n-1} \to [-1, 1]$
	in $n$ variables such that that the total degree of $p$ is at most $d$, and
	the sum of squares of coefficients of $p$ (in the standard monomial basis)
	is at most $B$. Let $\ell$ be any loss function such that is convex,
	monotone, and $L$-Lipschitz and $b$-bounded in the interval $[-1, 1]$.  Then
	$\poly(n, d, B)$ is agnostically learnable under any (unknown) distribution over
	$\mathbb{S}^{n-1} \times [-1, 1]$ with respect to the loss function $\ell$
	in time $\mathsf{poly}(n, d, B, 1/\epsilon, L, b, \log\frac{1}{\delta})$. The learning algorithm
	is proper if the loss function $\ell$ equals $\ell_p$ for constant $p > 0$.\jtmnote{Are we happy that I
	inserted this last sentence into the theorem?} 
\end{theorem}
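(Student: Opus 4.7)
The plan is to mimic the structure of the proof of Theorem~\ref{thm:formal}, but exploiting the crucial fact (established by the construction of $\vv_p$ preceding this statement) that every polynomial $p \in \Poly(n,d,B)$ is \emph{exactly} representable by a vector $\vv_p \in \calH_{\MK_d}$ with $\langle \vv_p, \vv_p\rangle \leq B$. Thus, unlike in the ReLU case, there is no polynomial-approximation step and hence no source of an $\eps$-approximation error to propagate. The algorithm solves Optimization Problem~\ref{alg:finaldie}, obtains $\valpha^*$ and $f(\cdot) = \sum_i \alpha_i^* \MK_d(\vx_i, \cdot)$, and outputs $h = \clip_{-1,1} \circ f$. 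By the Representer Theorem, this is solvable in time polynomial in $m$, $n$, and $d$, since the program is convex whenever $\ell$ is convex in its first argument.

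The generalization step is standard. Let $\F_B = \{\vv \in \calH_{\MK_d} : \langle \vv,\vv\rangle \leq B\}$. For $\vx \in \S^{n-1}$ we have $\langle \psi_d(\vx), \psi_d(\vx)\rangle = \sum_{j=0}^d 1 = d+1$, so Theorem~\ref{rademachercomplexity} gives $\calR_m(\F_B) \leq \sqrt{(d+1)B/m}$, and Theorem~\ref{rademachercomplexity2} (with $\clip_{-1,1}$ being $1$-Lipschitz with $\clip_{-1,1}(0)=0$) gives $\calR_m(\clip_{-1,1}\circ \F_B) \leq 2\sqrt{(d+1)B/m}$. Plugging into Theorem~\ref{generalizationbound} with the assumed $L$-Lipschitz, $b$-bounded loss $\ell$ on $[-1,1]$, a sample of size $m = \mathsf{poly}(n,d,B,L,b,1/\eps,\log(1/\delta))$ suffices so that, with probability $\geq 1-\delta$, simultaneously for all $g \in \clip_{-1,1}\circ \F_B$,
\begin{equation*}
|\risk(g;\D) - \hatrisk(g;S)| \leq \eps.
\end{equation*}

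Now chain inequalities exactly as in the proof of Theorem~\ref{thm:formal}, but in the agnostic rather than reliable setting. Let $p^\star \in \Poly(n,d,B)$ be an optimal concept and $\vv_{p^\star}$ its RKHS representative. Then
\begin{align*}
\risk(h;\D) &= \risk(\clip_{-1,1}\circ f;\D) \\
&\leq \hatrisk(\clip_{-1,1}\circ f;S) + \eps \\
&\leq \hatrisk(f;S) + \eps \\
&\leq \hatrisk(\vv_{p^\star};S) + \eps \\
&= \hatrisk(\clip_{-1,1}\circ \vv_{p^\star};S) + \eps \\
&\leq \risk(p^\star;\D) + 2\eps.
\end{align*}
The second line uses monotonicity of $\ell(\cdot,y)$ together with $y \in [-1,1]$ (so clipping moves $f(\vx)$ at least as close to $y$). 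The third line is optimality of $f$ for Optimization Problem~\ref{alg:finaldie}, where $\vv_{p^\star}$ is feasible because $\langle \vv_{p^\star}, \vv_{p^\star}\rangle \leq B$. The fourth line uses $p^\star(\S^{n-1}) \subseteq [-1,1]$, so clipping is a no-op. Rescaling $\eps$ by a constant and $\delta$ by $1/2$ (to absorb the failure probabilities of the generalization bounds) yields the claimed agnostic learning guarantee.

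For the proper-learning claim under $\ell_p$ loss, the point is that clipping can be avoided entirely: on the interval $[-\sqrt{B},\sqrt{B}]$ in which $|f(\vx)|$ lies (by Cauchy-Schwarz), the loss $\ell_p(y',y) = |y'-y|^p$ is bounded by $(\sqrt{B}+1)^p$ and is $p(\sqrt{B}+1)^{p-1}$-Lipschitz, both polynomial in $B$ for constant $p$. Applying Theorems~\ref{generalizationbound} and~\ref{rademachercomplexity} directly to $\F_B$ (rather than $\clip_{-1,1}\circ \F_B$) still yields a polynomial sample bound, and the same chain of inequalities applies with $f$ in place of $\clip_{-1,1}\circ f$. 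Since $f = \sum_i \alpha_i^* \MK_d(\vx_i,\cdot)$ is itself a polynomial of degree at most $d$, each of its monomial coefficients $\beta(I)$ can be read off from $\valpha^*$ using the formula displayed in Section~\ref{sec:agnosticpoly}, which makes the learner proper. The main subtlety, which is handled by the above argument, is ensuring that the switch from reliable to agnostic learning allows us to drop the exponential dependence on the Lipschitz constant that appeared in Theorem~\ref{thm:formal}; this works precisely because $p^\star$ is a member of $\calH_{\MK_d}$ rather than merely an approximate member, so no accuracy is lost moving between the concept class and the RKHS.
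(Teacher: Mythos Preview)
Your proposal is correct and follows essentially the same approach as the paper's own proof: bound the Rademacher complexity of $\F_B$ and $\clip_{-1,1}\circ\F_B$ via Theorems~\ref{rademachercomplexity} and~\ref{rademachercomplexity2}, invoke Theorem~\ref{generalizationbound} for uniform generalization, and then chain the same four steps (generalization for $h$, monotonicity to drop clipping, optimality of $f$ against the feasible $\vv_{p^\star}$, generalization back for $p^\star$). Your explicit reinsertion of $\clip_{-1,1}$ before the final generalization step (using $p^\star(\S^{n-1})\subseteq[-1,1]$) is a slightly cleaner justification than the paper's, and your paragraph on the $\ell_p$ proper-learning case makes explicit what the paper handles in the preceding subsection rather than in the proof body.
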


\begin{proof}
	In order to prove the theorem, we need to bound 
	\begin{align}
	\notag	\falsejt(h; D) &= \E_{(\vx, y) \sim D}[\ell(h(\vx), y)].
	\end{align}

%	We first observe that for all $\vx \in \X=\S^{n-1}$, $\psi_d(\vx) \cdot \psi_d(\vx)
%	 \leq d$. To see this, we mimic the proof of Lemma \ref{coeffbound}:
%	\begin{align*}
%		%
%		\psi_d(\vx) \cdot \psi_d(\vx) &= \sum_{i=0}^d \sum_{(k_1, \ldots, k_i)
%		\in [n]^i \colon k_1 \leq k_2 \dots \leq k_i} x_{k_1}^2 \cdot \cdots \cdot x_{k_i}^2 \\
%		%%
%		&\leq \sum_{i=0}^d \sum_{(k_1, \ldots, k_i)
%		\in [n]^i} x_{k_1}^2 \cdot \cdots \cdot x_{k_i}^2 \\
%		%%
%		&= \sum_{i=0}^d \lnorm{\vx}_2^{2i} = \sum_{i=0}^d
%                1^{2i} = d +1.
%		\end{align*}

%	 Moreover, letting $\vv \in \R^{{n+d \choose d}}$ be the coefficient vector of a 
%	 polynomial $p \in \mathcal{C}$, it holds that $\vv \cdot \vv \leq B$ by definition of $\mathcal{C}$. 

	We know that for all $\vx \in \S^{n-1}$, $\langle \psi_d(\vx), \psi_d(\vx)
	\rangle = d+1$.  Moreover, letting $\vv_p$ be the corresponding element of
	the RKHS for polynomial $p \in \mathcal{C}$, we know from previous analysis
	that $\langle \vv_p, \vv_p \rangle \leq B$.  In addition, the function
	$\clip_{-1, 1} : \reals \rightarrow [-1, 1]$ satisfies $\clip_{-1, 1}(0) =
	0$, and $\clip_{-1, 1}$ is $1$-Lipschitz.  Thus,
	Theorems~\ref{rademachercomplexity} and \ref{rademachercomplexity2} imply
	the following:
	\begin{align}
		\calR_m(\mathcal{C}) &\leq  \sqrt{\frac{(d+1) \cdot B}{m}}, \label{rcbounddie} \\
		\calR_m(\clip_{-1, 1} \circ \mathcal{C}) &\leq 2 \cdot \sqrt{\frac{(d + 1) \cdot B }{m}}.
		\label{rcbound2die}
	\end{align} 

	By assumption, $\ell$ is $L$-Lipschitz in its first argument and $b$-bounded
	in the interval $[-1, 1]$.  We assume the following bound on $m$ (note that
	it is polynomial in all the required factors):
	\begin{align}
		m &\geq \frac{1}{\epsilon^2} \left( 8 \mathsf{max}\{L, \epsilon^{-1} \} \sqrt{(d + 1) \cdot
		B} + \mathsf{max}\{b, 1\} \cdot \sqrt{2 \log \frac{1}{\delta}} \right)^2.
		\label{eqn:m-bounddie}
	\end{align}

	In the rest of the proof we assume that for every $f \in \Poly(n, d, B)$,
	the following hold:
	\begin{align}
		|\falsejt(f; D) - \hat{\falsejt}(f; S)| &\leq \epsilon. \label{eqn:jtl} \\
		|\falsejt(\clip_{-1, 1} \circ f; D) - \hat{\falsejt}(\clip_{-1, 1} \circ f; S)| &\leq \epsilon. \label{eqn:jtlclip}
	\end{align}
	Using Theorem~\ref{generalizationbound} together with the bounds on
	Rademacher complexity given by~\eqref{rcbounddie} and~\eqref{rcbound2die}
	and the $L$-Lipschitz continuity in its first argument and $b$-boundedness
	of $\ell$ on the interval $[-1, 1]$, we get that the above inequalities hold
	with probability at least $1 - 2\delta$. We let the algorithm fail with
	probability $2 \delta$.

	Now consider the following to bound $\falsejt(h; D)$.  Letting $p$ be any
	polynomial in $\Poly(n, d, B)$,
	\begin{align}
		\falsejt(h; D) & \leq \hat{\falsejt}(h; S) + \epsilon \label{eqn:fff1} \\
		& \leq \hat{\falsejt}(f; S) + \epsilon \label{eqn:fff2} \\
		& \leq \hat{\falsejt}(p; S) + \epsilon \label{eqn:fff3} \\
		& \leq \falsejt(p; D) + 2 \cdot \epsilon \label{eqn:fff4}
	\end{align}
	Above in~(\ref{eqn:fff1}), we appeal to~\eqref{eqn:jtlclip}.
	In~(\ref{eqn:fff2}), we use the fact that $D$ is a distribution over $\S^{n-1} \times [-1, 1]$,
	and $\ell$ is monotone. In~(\ref{eqn:fff3}), we use the fact that the coefficient vector of $p$
	is a feasible solution to Optimization Problem \ref{alg:optprobdie}, and Optimization
	Problem \ref{alg:finaldie} is a reformulation of 
	Optimization Problem \ref{alg:optprobdie}. Finally, in \eqref{eqn:fff4},
	we appeal~\eqref{eqn:jtl}. 
	
	The theorem now follows by replacing $\epsilon$ with $\epsilon/2$, $\delta$
	with $\delta/2$, and observing that the algorithm runs in time $\poly(m) =
	\mathsf{poly}(n, d, B, 1/\epsilon, L, b, \log\frac{1}{\delta})$.
\end{proof}

\section{Networks of ReLUs}
\label{sec:networkrelu}
In this section, we extend learnability results for a single ReLU to network
of ReLUs.  Our results in this section apply to the standard agnostic model of
learning in the case that the output is a linear combination of hidden units.
If our output layer, however, is a single ReLU, then our results can be
extended to the reliable setting using similar techniques from Section
\ref{sec:relu}.

We will use the same framework as Zhang~\etal~\cite{Zhang}, who showed
how to learn networks where the activation function is computed {\em
  exactly} by a power series (with bounded sum of squares of
coefficients $B$) with respect to loss functions that are bounded on
a domain that is a function of $B$.  Their algorithm works by repeatedly
composing the kernel of Shalev-Shwartz et al. \cite{SSSS} and
optimizing in the corresponding RKHS.

Note, however, that since $\rec$ is not differentiable at $0$, there is no
power series for $\rec$, and the approach of Zhang~\etal~\cite{Zhang}
cannot be used; their work applies to a smooth activation function
that has a shape that is ``Sigmoid-like''  or ``ReLU-like,'' but is not a good
approximation to $\rec$ in a precise mathematical sense.

We generalize their results to activation functions that are {\em
  approximated} by polynomials.  This allows us to capture many
classes of activation functions including ReLUs.  Our clipping
technique also allows us to work with respect to a broader class of
loss functions.

%Livni~\etal~\cite{LivniSS14} also considered activation functions that are well-approximated by low-degree polynomials, but their results are weaker than ours for a number of reasons (these are discussed further at the end of this Section \ref{sec:sigmoid}).

Our results for learning networks of ReLUs have a number of new applications.  First, we give the first efficient algorithms for learning
``parameterized'' ReLUs and ``leaky'' ReLUs.  Second, we obtain the first
polynomial-time approximation schemes for convex piecewise-linear regression
(see Section \ref{sec:convexpiecewise} for details).  As far as we are aware,
there were no provably efficient algorithms known for these types of
multivariate piecewise-linear regression problems.

%that for activation functions $\sigma$ that are exactly computed as polynomials (with bounded sum of squares of coefficients), there is an algorithm for agnostically learning fully-connected networks. This cannot be directly applied to many activation functions since they do not have such a polynomial representation. However, Lini~\etal~\cite{LivniSS14} consider activation functions that are well-approximated by low-degree polynomials but their results are weaker than ours for a number of reasons. These are discussed further at the end of this section.

%We extend Zhang ~\etal results for activation functions that are \textit{low-weight approximable}. We first show that replacing the activation function with a low-weight polynomial approximation gives a polynomial network that uniformly approximates the original network. Hence, we may apply the approach of Zhang~\etal on the polynomial network thus obtained and show that it implies learnability of the original neural network. 

%\jtnote{I commented out what this sentence previously said, since it implied that Zhang et al. \cite{Zhang} considered reliability...}% working in a different RKHS (see \cite{Zhang}).

\subsection{Notation}

We use the following notation of Zhang~\etal~\cite{Zhang}. Consider a network
with $D$ hidden layers and an output unit (we assume that the output is
one-dimensional). Let $\sigma: \mathbb{R} \rightarrow \mathbb{R}$ denote the
activation function applied at each unit of all the hidden layers. Let
$n^{(i)}$ denote the number of units in hidden layer $i$ with $n^{(0)} = n$
(i.e., input dimension) and $w^{(i)}_{jk}$ be the weight of the edge between
unit $j$ in layer $i$ and unit $k$ in layer $i+1$. We define, $y^{(i)}_j$ to be
the function that maps $\vx \in \mathcal{X}$ to the output of unit $j$ in layer
$i$,
\[
y_j^{(i)}(\vx) = \sigma\left(\sum_{k=1}^{n^{(i-1)}} w_{jk}^{(i-1)} \cdot y_k^{(i-1)}(\vx) \right),
\]
where $y^{(0)}_j(\vx) = \vx$ for all $j$. We similarly define $h^{(i)}_j$ to be
the function that maps $\vx \in \mathcal{X}$ to the input of unit $j$ in layer
$i+1$:
\[
h^{(i)}_j(\vx) = \sum_{k=1}^{n^{(i)}} w^{(i)}_{jk} \cdot y_k^{(i)}(\vx).
\]
Finally, we define the output of the network as a function $\mathcal{N}: \mathbb{R}^n \rightarrow \mathbb{R}$ as
\[
\mathcal{N}(\vx) =  \sum_{k=1}^{n^{(D)}} w_{1k}^{(D)} \cdot y_k^{(D)} (\vx).
\]
For a better understanding of the above notation, consider a fully-connected network $\mathcal{N}_1$ with a single hidden layer (these are also known as depth-2 networks) consisting of $k$ units:
\[
\mathcal{N}_1: \vx \mapsto \sum_{i=1}^k u_i \sigma(\vw_i \cdot \vx).
\]
In this case, output of unit $i \in [k]$ in the hidden layer is $y^{(1)}_i(\vx) = \sigma(\vw_i \cdot \vx)$ and the input to the same unit is $h^{(0)}_i(\vx) = \vw_i \cdot \vx$.

We consider a class of networks with edge weights of bounded $\ell_1$ or $\ell_2$ norm. The class is formalized as follows.

\begin{definition}[Weight-bounded Networks]
	Let $\mathcal{N}[\sigma, D, W, M]$ be the class of fully-connected networks
	with $D$ hidden layers and $\sigma$ as the activation function.
	Additionally, the weights are constrained such that $\sum_{j=1}^n
	(w_{ij}^{(0)})^2 \leq M^2$ for all units $i$ in layer 0 and
	$\sum_{k=1}^{n^{(i)}} |w_{jk}^{(i)}| \leq W$ for all units $j$ in all layers
	$i \in \{1, \ldots, D\}$. Also, the inputs to each unit are bounded in magnitude by $M$,
	\ie $h^{(l)}_j(\vx) \in [-M, M]$ with $M \geq1$ for each $l < D$ and $j=1, \dots, n^{(l+1)}$.
\end{definition}

We consider activation functions which can be approximated by polynomials with
sum of squares of coefficients bounded. We term them \textit{low-weight
approximable} activation functions, formalized as follows.

%\begin{definition}[Low-weight Approximable Functions]
%For activation function $\sigma$, given $\epsilon \in (0,1)$ and $M \geq 1$, polynomial $p_{\{\sigma, \epsilon, M\}}(a)$ of degree $d_{\{\sigma, \epsilon, M\}}$ with coefficient vector $\beta_{\{\sigma, \epsilon, M\}}$ is a $\{\epsilon, M\}$ low-weight approximation of $\sigma$ if $\sum_{i=0}^{d_{\{\sigma, \epsilon, M\}}} 2^i\beta_{\{\sigma, \epsilon, M\}}(i)$ is bounded by $B_{\{\sigma, \epsilon, M\}}$ and that $\forall a \in [-M,M]$, $|\sigma(a) - p_{\{\sigma, \epsilon, M\}}(a)| \leq \epsilon$.

%We say that $\sigma$ is \textit{low-weight approximable} if for all $\epsilon \in (0,1)$ and $M \geq 1$, there exists a $\{\epsilon, M\}$ low-weight approximator of $\sigma$.
%\end{definition}

\begin{definition}[Low-weight Approximable Functions] \label{defn:lowwtpolyapprox}
	For activation function $\sigma : \reals \rightarrow \reals$, for $\epsilon
	\in (0, 1)$, $M \geq 1$, $B \geq 1$, we say that a polynomial $p(t) =
	\sum_{i = 1}^d \beta_i t^i$ is a degree $d$, $(\epsilon, M, B)$-approximation
	to $\sigma$ if for every $t \in [-M, M]$, $|\sigma(t) - p(t)| \leq \epsilon$
	and furthermore, $\sum_{i=0}^d 2^i \beta_i^2 \leq B$.
\end{definition}

%\jtnote{Should we also discuss the results of Livni et al. in the paragraph below? They do consider polynomial approximations, but they seem to focus on the realizable (non-agnostic) setting. Is this the main difference between 
%our results and theirs, or are there others?} \sgnote{Livni~\etal don't have any result on ReLU, should we highlight that too?}

%We first show that replacing the activation function with a low-weight polynomial approximation gives a polynomial network that uniformly approximates the original network. Hence, we may apply the approach of Zhang~\etal on the polynomial network thus obtained and show that it implies learnability of the original neural network.  More formally, we prove the following two theorems.

\subsection{Approximate Polynomial Networks}

We first bound the error incurred when each activation function is replaced by
a corresponding low-weight polynomial approximation.

\begin{theorem}[Approximate Polynomial Network]
\label{thm:poly_network}
	Let $\sigma$ be an activation function that is 1-Lipschitz%
	\footnote{Note that this is not a restriction, as we have not explicitly
	constrained the weights $W$. Thus, to allow a Lipschitz constant $L$, we
	simply replace $W$ by $WL$. \label{ftn:Lipschitz}}
	and such that there exists a degree $d$ polynomial $p$ that is a
	$(\frac{\epsilon}{W^D D}, 2M, B)$ approximation for $\sigma$, with $\epsilon
	\in (0, 1)$, with $d, M, B \geq 1$. Then, for all $\mathcal{N} \in
	\mathcal{N}[\sigma, D, W, M]$, there exists $\bar{\mathcal{N}} \in
	\mathcal{N}[p, D, W, 2M]$ such that
	\[
		\sup_{\vx \in \S^{n-1}} \left|\mathcal{N}(\vx) -
		\bar{\mathcal{N}}(\vx)\right| \leq \epsilon.
	\]
\end{theorem}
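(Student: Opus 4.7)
The plan is to construct $\bar{\mathcal{N}}$ by keeping the edge weights of $\mathcal{N}$ unchanged and simply replacing every occurrence of the activation $\sigma$ in the hidden layers by the polynomial approximation $p$. Since the weights are unchanged, the $\ell_2$-norm bound on input-layer weights and the $\ell_1$-norm bound on subsequent-layer weights are automatically inherited. What must be verified is (i) that the error between $\mathcal{N}$ and $\bar{\mathcal{N}}$ propagates in a controlled fashion, and (ii) that the ``pre-activations'' $\bar{h}_j^{(i)}(\vx)$ stay in $[-2M,2M]$ so that the pointwise approximation guarantee $|\sigma(t)-p(t)|\le \epsilon/(W^D D)$ is applicable at every layer, which is also what shows $\bar{\mathcal{N}}\in\mathcal{N}[p,D,W,2M]$.

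I will proceed by induction on the layer index $i$. Let $\delta := \epsilon/(W^D D)$ and define $E_i := \max_j \sup_{\vx\in\S^{n-1}}|y_j^{(i)}(\vx)-\bar y_j^{(i)}(\vx)|$. The base case is $E_0=0$ since $y^{(0)}=\vx=\bar y^{(0)}$. For the inductive step, pick any hidden unit $j$ in layer $i+1$ and decompose
\begin{align*}
|y_j^{(i+1)}(\vx)-\bar y_j^{(i+1)}(\vx)|
&=|\sigma(h_j^{(i)}(\vx))-p(\bar h_j^{(i)}(\vx))|\\
&\le |\sigma(h_j^{(i)}(\vx))-\sigma(\bar h_j^{(i)}(\vx))|+|\sigma(\bar h_j^{(i)}(\vx))-p(\bar h_j^{(i)}(\vx))|.
\end{align*}
The first term is bounded by $|h_j^{(i)}(\vx)-\bar h_j^{(i)}(\vx)|\le W\cdot E_i$ using the $1$-Lipschitz property of $\sigma$ and the $\ell_1$ weight bound (for $i=0$ one instead uses Cauchy--Schwarz with the $\ell_2$ bound, but the input errors are zero so this term vanishes). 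The second term is bounded by $\delta$ provided $\bar h_j^{(i)}(\vx)\in[-2M,2M]$. This gives the recursion $E_{i+1}\le W\,E_i+\delta$, which unrolls to $E_i\le \delta\cdot(1+W+\cdots+W^{i-1})\le i\,W^{i-1}\delta$ (for $W\ge 1$).

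To close the induction I must verify the required containment. Since $|h_j^{(i)}(\vx)|\le M$ by the definition of $\mathcal{N}[\sigma,D,W,M]$, and
\[
|\bar h_j^{(i)}(\vx)-h_j^{(i)}(\vx)|\le W\,E_i \le W\cdot i\,W^{i-1}\delta \le D\,W^{D-1}\cdot \frac{\epsilon}{W^D D}=\frac{\epsilon}{W}\le 1\le M,
\]
the quantity $\bar h_j^{(i)}(\vx)$ lies in $[-2M,2M]$, which both keeps the inductive hypothesis applicable and certifies $\bar{\mathcal{N}}\in\mathcal{N}[p,D,W,2M]$. Finally, for the output unit,
\[
|\mathcal{N}(\vx)-\bar{\mathcal{N}}(\vx)|=\Bigl|\sum_{k}w_{1k}^{(D)}\bigl(y_k^{(D)}(\vx)-\bar y_k^{(D)}(\vx)\bigr)\Bigr|\le W\cdot E_D\le W\cdot D\,W^{D-1}\delta=\epsilon,
\]
completing the bound. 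The main subtlety—and the only place the proof does any real work—is calibrating the error budget $\delta=\epsilon/(W^D D)$ so that the geometric-series blowup by the factor $W^{D-1}D$ coming from Lipschitz propagation still leaves the final error at most $\epsilon$, while simultaneously leaving enough slack to guarantee $\bar h_j^{(i)}\in[-2M,2M]$ at every intermediate layer.
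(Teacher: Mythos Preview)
Your proposal is correct and follows essentially the same approach as the paper: construct $\bar{\mathcal{N}}$ by keeping all weights and replacing $\sigma$ by $p$, then propagate the error layer by layer using the $1$-Lipschitz property of $\sigma$, the $\ell_1$ weight bound, and the pointwise approximation guarantee (valid because the perturbed pre-activations stay in $[-2M,2M]$). The only cosmetic difference is that the paper carries the induction on the pre-activation errors $|h_j^{(i)}-\bar h_j^{(i)}|$ directly (obtaining the bound $i\epsilon/(W^{D-i}D)$), whereas you track the post-activation errors $E_i$ via the recursion $E_{i+1}\le W E_i+\delta$; unrolling your recursion and multiplying by $W$ recovers exactly the paper's bound.
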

\begin{proof}
	Let $\mathcal{N} \in \mathcal{N}[\sigma, D, W, M]$ and let $\bar{\mathcal{N}} \in \mathcal{N}[p, D, W, M]$ be such that it has the same structure and weights as $\mathcal{N}$ with the activation replaced with $p$. For $\mathcal{N}$ let $h^{(i)}(\vx)$ be the inputs to layer $i+1$ and $y^{(i)}(\vx)$ be the outputs of unit $j$ of layer $i$ as defined previously. Correspondingly, for $\bar{\mathcal{N}}$ let $\bar{h}^{(i)}(\vx)$ be the inputs to layer $i+1$ and $\bar{y}^{(i)}(\vx)$ be the outputs of layer $i$. We prove by induction on layer $i$ that for all units $j$ of layer $i$,
\begin{equation}
\label{eqn:input_approx}
\sup_{\vx \in \S^{n-1}} \left|h^{(i)}_j(\vx) - \bar{h}^{(i)}_j(\vx)\right| \leq \frac{i\epsilon}{W^{D-i}D}.
\end{equation}

For layer $i=0$, we have $h^{(0)}(\vx)_j = \bar{h}^{(0)}(\vx)_j = \textbf{w}_{j}^{(0)} \cdot \vx \in [-M,M]$ which trivially satisfies (\ref{eqn:input_approx}). Now, we prove that the desired property holds for layer $l$, assuming the following holds for layer $l-1$. We have for all units $j$ in layer $l-1$,
\begin{equation}
\label{eqn:inductionhypothesis}
\sup_{\vx \in \S^{n-1}} \left|h_{j}^{(l-1)}(\vx) - \bar{h}_j^{(l-1)}(\vx)\right| \leq \frac{(l-1)\epsilon}{W^{D-l+1}D}.
\end{equation}
Note that this implies that $\left|\bar{h}^{(l-1)}_j(\vx)\right| \leq \left| h^{(l-1)}_j(\vx)\right| + \frac{(l-1)\epsilon}{W^{D-l+1}D} \leq 2M$. Here the second inequality follows from the assumption that inputs to each unit are bounded by $M$ and $\epsilon < 1$. We have for all $\vx$ and $j$, 
\begin{align}
\left|h_{j}^{(l)}(\vx) - \bar{h}_j^{(l)}(\vx)\right| &= \left|\sum_{k=1}^{n^{(l)}} w^{(l)}_{jk} \cdot \sigma\left(h_k^{(l-1)}(\vx) \right) - \sum_{k=1}^{n^{(l)}} w^{(l)}_{jk } \cdot p\left(\bar{h}_k^{(l-1)}(\vx)  \right) \right| \nonumber \\
&= \sum_{k=1}^{n^{(l)}} \left|w^{(l)}_{jk} \right| \left| \sigma\left(h_k^{(l-1)}(\vx) \right) - p\left(\bar{h}_k^{(l-1)}(\vx)\right) \right| \nonumber \\
&\leq \sum_{k=1}^{n^{(l)}} \left|w^{(l)}_{jk}\right| \left( \left| \sigma\left(h_k^{(l-1)}(\vx) \right) - \sigma\left(\bar{h}_k^{(l-1)}(\vx)\right) \right| + \frac{\epsilon}{W^DD} \right) \label{eqn:replacebyapprox}\\
&\leq \sum_{k=1}^{n^{(l)}} \left|w^{(l)}_{jk} \right| \left( \left| h_k^{(l-1)}(\vx)  -\bar{h}_k^{(l-1)}(\vx) \right| +  \frac{\epsilon}{W^{D}D} \right) \label{eqn:useLipschitz}\\
&\leq \sum_{k=1}^{n^{(l)}} \left|w^{(l)}_{jk} \right| \left(\frac{(l-1)\epsilon}{W^{D-l+1}D} + \frac{\epsilon}{W^{D}D} \right) \label{eqn:inductionstep}\\
&= \|\textbf{w}_j^{(l)}\|_1\frac{l \cdot \epsilon}{W^{D-l+1}D} \nonumber \\
&\leq \frac{l \cdot \epsilon}{W^{D-l}D} \label{eqn:l1normbound}
\end{align}
Step (\ref{eqn:replacebyapprox}) follows since $\bar{h}^{(l-1)}_j(\vx) \in [-2M, 2M]$  and $p$ uniformly $\frac{\epsilon}{W^DD}$-approximates $\sigma$ in $[-2M, 2M]$. Step (\ref{eqn:useLipschitz}) follows from $\sigma$ being 1-Lipschitz. Step (\ref{eqn:inductionstep}) follows from (\ref{eqn:inductionhypothesis}). Finally Step (\ref{eqn:l1normbound}) follows from $\|\textbf{w}_j^{(l)}\|_1 \leq W$ which is given. This completes the inductive proof.

We conclude by noting that $\mathcal{N}(\vx) = h_{1}^{(D)}(\vx)$ and $\bar{\mathcal{N}}(\vx) = \bar{h}_{1}^{(D)}(\vx)$. Thus, from above we get,
\[
\sup_{\vx \in \S^{n-1}} \left|\mathcal{N}(\vx) - \bar{\mathcal{N}}(\vx)\right| = \sup_{\vx \in \S^{n-1}} \left|h_{1}^{(N)}(\vx) - \bar{h}_1^{(N)}(\vx)\right| \leq  \epsilon.
\]
This completes the proof.
\end{proof}

Given the above transformation to a polynomial network and associated
error bounds, we apply the main theorem of Zhang et al. \cite{Zhang}
combined with the clipping technique from Section \ref{sec:relu} to obtain the following result:

\begin{theorem}[Learnability of Neural Network]
\label{thm:learn_network}
	Let $\sigma$ be an activation function that is
	1-Lipschitz\footref{ftn:Lipschitz} and such that there exists a degree $d$
	polynomial $p$ that is an $(\frac{\epsilon}{(L + 1)\cdot W^D \cdot D}, 2M,
	B)$ approximation for $\sigma$, for $d, B, M \geq 1$. 
	Let $\ell$ be a loss function that is convex, $L$-Lipschitz in the first
	argument, and $b$ bounded on $[-2M \cdot W, 2M \cdot W]$.
	Then there exists an
	algorithm that outputs a predictor $\hat{f}$ such that with probability at
	least $1-\delta$, for any (unknown) distribution $\D$ over $\S^{n-1} \times
	[-M \cdot W, M \cdot W]$,
	\[
		\E_{(\vx,y) \sim \D}[\ell(\hat{f}(\vx), y)] \leq
		\min_{\mathcal{N} \in \mathcal{N}[\sigma, D, W, M]} \E_{(\vx,y) \sim
		\D}[\ell(\mathcal{N}(\vx), y)] + \epsilon.
	\]
%The time complexity of the above algorithm is bounded by $n^{O(1)} \cdot B_{\{\sigma, \epsilon_0, 2M\}}^{O(d_{\{\sigma, \epsilon_0, 2M\}})^{D-1}} \cdot \log(1/\delta)$ where $\epsilon_0 = \frac{\epsilon}{L\cdot W^{D} \cdot D}$.
	The time complexity of the above algorithm is bounded by $n^{O(1)} \cdot
	B^{O(d)^{D-1}} \cdot \log(1/\delta)$, where $d$ is the degree of $p$, and
	$B$ is a bound on $\sum_{i=0}^d 2^i \beta_i^2$ (see
	Defn.~\ref{defn:lowwtpolyapprox}).
	\sgmnote{Varun: Are we adding dependence on $b$?}
\end{theorem}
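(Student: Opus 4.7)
The plan is to combine the polynomial approximation guarantee of Theorem \ref{thm:poly_network} with the kernel-based composition approach of Zhang~\etal~\cite{Zhang}, then patch in the clipping technique developed in Section \ref{sec:relu} to handle the broader class of convex Lipschitz losses. First, I would set $\epsilon' = \epsilon/(2(L+1))$ and apply Theorem \ref{thm:poly_network} to assert that every $\mathcal{N} \in \mathcal{N}[\sigma, D, W, M]$ is uniformly $\epsilon'$-approximated on $\S^{n-1}$ by some $\bar{\mathcal{N}} \in \mathcal{N}[p, D, W, 2M]$. Because $\ell$ is $L$-Lipschitz on the relevant interval, this transfers to an $\epsilon/2$ gap in expected loss between the best $\sigma$-network and the best $p$-network, reducing the problem to agnostically learning the class $\mathcal{N}[p, D, W, 2M]$ of polynomial networks.

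Next, I would realize each polynomial network as a vector in an appropriate composed RKHS. Concretely, for each layer the input-to-unit function $h^{(l)}_j$ is a polynomial of degree at most $d$ in the layer-below outputs, and the bound on $\sum_i 2^i \beta_i^2$ ensures (via a kernel composition analogous to Lemma \ref{coeffbound}, iteratively applied) that the representation of the whole network as a polynomial of degree at most $d^D$ lives in an RKHS with squared norm bounded by $B^{O(d)^{D-1}}$. This is essentially the norm-composition lemma of Zhang~\etal; the $2^i$ weighting in Definition \ref{defn:lowwtpolyapprox} is what makes the bound go through at each layer when the range is $[-2M, 2M]$. Let $\mathcal{F}_{B'}$ denote the class of clipped linear functionals $\vx \mapsto \clip_{-MW,MW}(\langle \vv, \psi_{d^D}(\vx)\rangle)$ with $\langle \vv,\vv\rangle \leq B'$, where $B' = B^{O(d)^{D-1}}$.

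I would then set up the obvious convex program: minimize $\sum_i \ell(\langle \vv, \psi_{d^D}(\vx_i)\rangle, y_i)$ over $\vv$ with $\langle \vv,\vv\rangle \leq B'$. By the Representer Theorem this reduces to optimizing over $\valpha \in \reals^m$ via the composed kernel, which is a convex program solvable in $\poly(m)$ time using only kernel evaluations (each of which is computable in time $n^{O(1)}$ by closed-form composition, following Zhang~\etal). Let $f$ denote the resulting functional and output $\hat{f}(\vx) = \clip_{-MW,MW}(f(\vx))$.

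For generalization, I would use Theorem \ref{rademachercomplexity} to bound the Rademacher complexity of the bounded-norm RKHS class by $\sqrt{(d^D+1)B'/m}$, then use Theorem \ref{rademachercomplexity2} (Lipschitz composition) to pass through the $1$-Lipschitz $\clip_{-MW,MW}$ map, and finally apply Theorem \ref{generalizationbound} with the $L$-Lipschitz, $b$-bounded loss. Taking $m$ polynomial in $L, b, B', 1/\epsilon, \log(1/\delta)$ gives empirical-to-population loss error $\epsilon/4$ uniformly. The overall chain of inequalities is then: population loss of $\hat{f}$ $\leq$ empirical loss of $\hat{f}$ $+\epsilon/4$ $\leq$ empirical loss of the (clipped) polynomial network approximating the optimal $\mathcal{N}^*$ $+\epsilon/4$ (using optimality of $f$ and the fact that clipping does not increase monotone-loss; for general convex losses we use that the optimal network already lies in $[-MW,MW]$) $\leq$ population loss of the polynomial approximation $+\epsilon/2$ $\leq$ population loss of $\mathcal{N}^*$ $+\epsilon$ by the transfer in step one. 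The hard part, and the step requiring the most care, is the norm-composition bound on the RKHS representation of the polynomial network, since the doubly-exponential factor $B^{O(d)^{D-1}}$ must be propagated correctly through $D$ layers and the resulting kernel must admit an efficient closed-form evaluation; this is where the Zhang~\etal\ composition machinery, adapted from their power-series kernel to our multinomial kernel $\MK_{d^D}$, does the heavy lifting.
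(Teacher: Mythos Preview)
Your proposal is correct and follows essentially the same approach as the paper: reduce to polynomial networks via Theorem~\ref{thm:poly_network}, use the $L$-Lipschitz property of $\ell$ to transfer the $\epsilon/(L+1)$ uniform approximation to loss, then invoke the kernel composition machinery of Zhang~\etal\ together with clipping. The only notable difference is that the paper treats \cite[Theorem~1]{Zhang} as a black box for the polynomial-network learning step, whereas you unpack the RKHS norm-composition bound and generalization argument explicitly; both routes arrive at the same $B^{O(d)^{D-1}}$ runtime.
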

\begin{proof}
	From Theorem \ref{thm:poly_network} we have that for all $\mathcal{N} \in \mathcal{N}[\sigma, D, W, M]$, there is a network $\bar{\mathcal{N}} \in \mathcal{N}[p, D, W, M]$ such that
\[
\sup_{\vx \in \S^{n-1}} \left|\mathcal{N}(\vx) - \bar{\mathcal{N}}(\vx)\right| \leq \frac{\epsilon}{L+1}.
\]

Since the loss function $\ell$ is $L$-Lipschitz, this implies that
\begin{equation}
\label{eqn:loss_err}
\ell(\bar{\mathcal{N}}(x), y)  - \ell(\mathcal{N}(x), y)  \leq L \cdot |\bar{\mathcal{N}}(x) - \mathcal{N}(x)| \leq  \frac{L}{L+1} \cdot \epsilon.
\end{equation}

Let $\mathcal{N}_{\min} =  \operatorname*{arg\,min}_{\mathcal{N} \in \mathcal{N}[\sigma, D, W, M]} \E_{(\vx,y) \sim \D}[\ell(\mathcal{N}(\vx), y)]$. By the above, we get that there exists $\bar{\mathcal{N}}_{\min} \in \mathcal{N}[p, D, W, M]$ such that
\begin{align*}
 \min_{\bar{\mathcal{N}} \in \mathcal{N}[p, D, W, M]} \E_{(\vx,y) \sim \D}[\ell(\bar{\mathcal{N}}(\vx), y)] & \leq  \E_{(x,y) \sim \D}[\ell(\bar{\mathcal{N}}_{\min}(\vx), y)]  \\
 & \leq \E_{(\vx,y) \sim \D}[\ell(\mathcal{N}_{\min}(\vx), y)] + L \cdot \epsilon  \\
 & = \min_{\mathcal{N} \in \mathcal{N}[\sigma, D, W, M]} \E_{(\vx,y) \sim \D}[\ell(\mathcal{N}(\vx), y)] + \frac{L}{L+1} \cdot \epsilon.
\end{align*}

Now from \cite[Theorem 1]{Zhang}, we know that there exists an algorithm that outputs a predictor $\hat{f}$ such that with probability at least $1-\delta$ for any distribution $\D$ %and loss function $\ell$ 
\begin{equation*}
\E_{(\vx,y) \sim \D}[\ell(\hat{f}(\vx), y)] \leq \min_{\bar{\mathcal{N}} \in \mathcal{N}[p, D, W, M]} \E_{(\vx,y) \sim \D}[\ell(\bar{\mathcal{N}}(\vx), y)] + \frac{\epsilon}{L+1}.
\end{equation*}

For loss functions that take on large values on the range of the
predictor, we instead output the clipped version of the predictor
$\mathsf{clip}(\hat{f})$ in order to satisfy the requirements of the
Rademacher bounds (as in Section \ref{sec:relu}).

The runtime of the algorithm is $\poly(n, (L+1)/\epsilon, \log(1/\delta), H^D(1))$,  where $H(a) = \sqrt{\sum_{i=0}^d 2^i\beta_ia^{2i}}$, and $H^{(D)}$ is obtained by composing $H$ with itself $D$ times. %\jtnote{What is the degree $D$ composition of $H(a)$? I don't know what that means...}\sgnote{Better?} 
By simple algebra, we conclude that $H^D(1)$ is bounded by $B^{O(d)^{D - 1}}$.

Combining the above inequalities, we have 
\[
\E_{(\vx,y) \sim \D}[\ell(\hat{f}(\vx), y)] \leq \min_{\mathcal{N} \in \mathcal{N}[D, W, M, \rec]} \E_{(\vx,y) \sim \D}[\ell(\mathcal{N}(\vx), y)] + \epsilon.
\]
This completes the proof.
\end{proof}

We can now state the learnability result for ReLU networks as follows.
\begin{corollary}[Learnability of ReLU Network]
\label{thm:learn_relu_network}
There exists an algorithm that outputs a predictor $\hat{f}$ such that with probability at least $1-\delta$ for any distribution $\D$ over $\S^{n-1} \times [-M \cdot W, M \cdot W]$, and loss function $\ell$ which is convex, $L$-Lipschitz in the first argument, and $b$ bounded on $[-2M \cdot W, 2M \cdot W]$,
	%\jtnote{Didn't Theorem \ref{thm:learn_network} also require $\ell$ to be convex and bounded? Don't we need to state these assumptions here? Same with the corollaries below.}
\[
\E_{(\vx,y) \sim \D}[\ell(\hat{f}(\vx), y)] \leq \min_{\mathcal{N} \in \mathcal{N}[D, W, M, \relu]} \E_{(\vx,y) \sim \D}[\ell(\mathcal{N}(\vx), y)] + \epsilon.
\]
The time complexity of the above algorithm is bounded by $n^{O(1)} \cdot 2^{((L+1)\cdot M \cdot W^{D} \cdot D \cdot \epsilon^{-1})^D} \cdot \log(1/\delta)$.
\end{corollary}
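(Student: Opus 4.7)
The plan is to deduce the corollary as an instantiation of Theorem \ref{thm:learn_network}, with $\sigma = \relu$. To do this, I must check the two hypotheses of that theorem: that $\relu$ is $1$-Lipschitz (which is immediate from the definition of $\relu(t) = \max(0,t)$), and that for every target accuracy $\epsilon' = \epsilon/((L+1)\cdot W^D \cdot D)$ there exists a degree-$d$ polynomial $p$ that is an $(\epsilon', 2M, B)$-approximation to $\relu$ on $[-2M, 2M]$, in the sense of Definition \ref{defn:lowwtpolyapprox}.

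To build such a $p$, I would start from Lemma \ref{reluapprox}, which furnishes a polynomial of degree $O(1/\eta)$ that uniformly $\eta$-approximates $\relu$ on $[-1,1]$ and takes values in $[0,1]$. Rescaling by $2M$ (i.e.\ defining $q(t) = 2M\cdot \tilde p(t/(2M))$ with an appropriate choice of $\eta$) yields a degree-$d$ polynomial $q$ which $\epsilon'$-approximates $\relu$ on $[-2M, 2M]$, with $d = O(M \cdot (L+1) \cdot W^D \cdot D / \epsilon)$. The coefficient bound from Lemma \ref{coeffboundReLU} (which, like in the proof of Theorem \ref{polyapprox}, is invoked on a polynomial bounded by $O(M)$ on $[-1,1]$) then yields a bound of the form $\sum_{i=0}^d \beta_i^2 \leq 2^{O(d)}$. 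The extra factor $2^i$ in Definition \ref{defn:lowwtpolyapprox} is absorbed into the exponent, giving $B = \sum_{i=0}^d 2^i\beta_i^2 \leq 2^{O(d)} = 2^{O(M\cdot (L+1)\cdot W^D \cdot D/\epsilon)}$.

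With this polynomial approximation in hand, Theorem \ref{thm:learn_network} immediately yields an algorithm with the desired excess-risk guarantee against any distribution $\D$ on $\S^{n-1} \times [-MW, MW]$ and any loss $\ell$ satisfying the hypotheses. For the runtime, Theorem \ref{thm:learn_network} gives a bound of $n^{O(1)} \cdot B^{O(d)^{D-1}} \cdot \log(1/\delta)$. Substituting the values of $d$ and $B$ obtained above, the exponent of $2$ becomes $O(d) \cdot O(d)^{D-1} = O(d)^D = ((L+1)\cdot M\cdot W^D \cdot D \cdot \epsilon^{-1})^D$ (up to constants absorbed into the exponent), matching the stated bound $n^{O(1)}\cdot 2^{((L+1)\cdot M\cdot W^D \cdot D \cdot \epsilon^{-1})^D}\cdot \log(1/\delta)$.

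The only step that requires real care is the bookkeeping of how the various parameters compose: propagating the target uniform-approximation accuracy $\epsilon/((L+1)W^D D)$ through the degree bound in Lemma \ref{reluapprox}, then through the coefficient bound in Lemma \ref{coeffboundReLU} (together with the rescaling that turns the $[-1,1]$-approximation into a $[-2M,2M]$-approximation), and finally through the iterated exponential $B^{O(d)^{D-1}}$ in Theorem \ref{thm:learn_network}. Everything else — the Lipschitz check, the invocation of the clipping technique from Section \ref{sec:relu} to preserve the Rademacher-complexity-based generalization bounds when the range of $\ell$ is not automatically controlled, and the application of Theorem \ref{thm:learn_network} itself — is routine once the approximation is in place.
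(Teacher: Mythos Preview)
Your proposal is correct and follows essentially the same approach as the paper: the paper's proof is a single sentence invoking Theorem~\ref{thm:learn_network} for $\sigma=\rec$, citing that $\rec$ is $1$-Lipschitz and low-weight approximable via Lemmas~\ref{reluapprox} and~\ref{coeffboundReLU}. You have simply spelled out the rescaling and parameter bookkeeping that the paper leaves implicit.
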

The proof of the corollary follows from applying Theorem \ref{thm:learn_network} for the activation function $\rec$ since $\rec$ is 1-Lipschitz and \textit{low-weight approximable} (from Theorem \ref{reluapprox} and \ref{coeffboundReLU}). We obtain the following corollary specifically for depth-2 networks.
\begin{corollary}
Depth-2 networks with $k$ hidden units and activation function $\rec$ such that the weight vectors have $\ell_2$-norm bounded by 1 are agnostically learnable over $\S^{n-1} \times [-\sqrt{k}, \sqrt{k}]$ with respect to loss function $\ell$ which is convex, $O(1)$-Lipschitz in the first argument, and $b$ bounded on $[-2\sqrt{k}, 2\sqrt{k}]$ in time $n^{O(1)} \cdot 2^{O(\sqrt{k}/\epsilon)} \cdot \log(1/\delta)$.
\end{corollary}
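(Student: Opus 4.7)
The proof is essentially a direct instantiation of Corollary \ref{thm:learn_relu_network} with the parameters $D$, $W$, $M$ specialized to the depth-$2$ case. The plan is as follows.

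First, I would observe that a depth-$2$ network with $k$ hidden units has one hidden layer, so in the notation of $\mathcal{N}[\sigma, D, W, M]$ we have $D = 1$. Since each input weight vector $\vw_i$ (for $i \in [k]$) satisfies $\|\vw_i\|_2 \leq 1$, the assumption $\sum_{j=1}^n (w_{ij}^{(0)})^2 \leq M^2$ is met with $M = 1$, and for $\vx \in \S^{n-1}$ we have $|\vw_i \cdot \vx| \leq 1 = M$, so the input-magnitude constraint is likewise satisfied. The main move is to convert the $\ell_2$ bound on the output-layer weights $\vu = (u_1, \dots, u_k)$ into an $\ell_1$ bound via Cauchy--Schwarz: if $\|\vu\|_2 \leq 1$ then $\|\vu\|_1 \leq \sqrt{k} \cdot \|\vu\|_2 \leq \sqrt{k}$, so we may take $W = \sqrt{k}$. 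Thus every depth-$2$ ReLU network with $\ell_2$-bounded weights lies in $\mathcal{N}[\rec, 1, \sqrt{k}, 1]$.

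Next, I would substitute these parameters into the runtime bound from Corollary \ref{thm:learn_relu_network}, namely $n^{O(1)} \cdot 2^{((L+1)\cdot M \cdot W^{D} \cdot D \cdot \epsilon^{-1})^D} \cdot \log(1/\delta)$. With $D=1$, $M=1$, $W=\sqrt{k}$, and $L = O(1)$, the double exponent collapses and we obtain
\[
n^{O(1)} \cdot 2^{O(\sqrt{k}/\epsilon)} \cdot \log(1/\delta),
\]
which is the claimed bound. The output range $[-M \cdot W, M \cdot W] = [-\sqrt{k}, \sqrt{k}]$ and the interval of loss boundedness $[-2MW, 2MW] = [-2\sqrt{k}, 2\sqrt{k}]$ likewise come out exactly as stated. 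Finally, since $\rec$ is $1$-Lipschitz and low-weight approximable (by Lemma \ref{reluapprox} together with Lemma \ref{coeffboundReLU}, as used in Corollary \ref{thm:learn_relu_network}), the hypotheses of the corollary are satisfied and the learnability guarantee carries through verbatim.

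There is essentially no obstacle here beyond bookkeeping; the only non-trivial observation is the $\|\vu\|_1 \leq \sqrt{k}\|\vu\|_2$ conversion, which is what produces the $\sqrt{k}$ (rather than $k$) in the exponent and explains why the depth-$2$ case admits a considerably better bound than the general-depth expression.
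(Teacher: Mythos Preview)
Your proposal is correct and matches the paper's own proof essentially verbatim: the paper simply states that the corollary follows by setting $L=O(1)$, $D=1$, $M=1$, and $W=\sqrt{k}$ in Corollary~\ref{thm:learn_relu_network}, with $W=\sqrt{k}$ obtained by bounding the $\ell_1$-norm of the output weights given the $\ell_2$ bound. Your write-up is in fact more detailed than the paper's, making the Cauchy--Schwarz step and the verification of the range and boundedness intervals explicit.
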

The proof of the corollary follows from setting $L=1$, $D=1$, $M = 1$ and $ W = \sqrt{k}$ in Theorem \ref{thm:learn_relu_network}. $W = \sqrt{k}$ follows from bounding the $\ell_1$-norm of the weights given a bound on the $\ell_2$-norm.

%\subsection{Sigmoidal Networks} \label{sec:sigmoid}
We remark here that the above analysis holds for fully-connected networks with activation function $\sig(x) = \frac{1}{1 + e^{-x}}$ (Sigmoid function). Note that $\sig$ is 1-Lipschitz. The following lemma due to Livni~\etal~\cite[Lemma 2]{LivniSS14} exhibits a low degree polynomial approximation for $\sig$. It is in turn based on a result of Shalev-Shwartz~\etal~\cite[Lemma 2]{SSSS}.
\begin{lemma}[Livni et al. \cite{LivniSS14}]
\label{sigapprox}
For $\epsilon \in (0,1)$, there exists a polynomial $p(a) = \sum_{i=1}^d \beta_i a^i$ for $d = O(\log(1/\epsilon))$ such that for all $a \in [-1,1]$, $|p(a) - \sig(a)| \leq \epsilon$.
\end{lemma}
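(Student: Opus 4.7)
My plan is to prove this by truncating the Taylor expansion of $\sig$ about $0$, using the fact that $\sig$ extends to a function analytic on a large disk in $\mathbb{C}$, so its Taylor coefficients decay geometrically.

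First, I would observe that $\sig(z) = 1/(1 + e^{-z})$ extends to a meromorphic function on $\mathbb{C}$ whose only singularities occur where $e^{-z} = -1$, i.e., at $z = i\pi(2k+1)$ for $k \in \mathbb{Z}$. The nearest singularities to the origin are $\pm i\pi$, so $\sig$ is analytic on the open disk $\{z \in \mathbb{C} : |z| < \pi\}$. Fix any radius $R$ with $1 < R < \pi$, say $R = 2$. By Cauchy's integral formula on the circle $|z| = R$, the Taylor coefficients $\beta_k := \sig^{(k)}(0)/k!$ satisfy
\[
|\beta_k| \;\leq\; M \cdot R^{-k}, \qquad \text{where } M := \max_{|z| = R} |\sig(z)|
\]
is an absolute constant (finite since $R < \pi$).

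Next, let $p_d(a) := \sum_{k=0}^d \beta_k a^k$ be the degree-$d$ Taylor polynomial of $\sig$ about $0$. For any $a \in [-1, 1]$, since $|a| \leq 1 < R$, the tail of the series is bounded by a geometric sum:
\[
|\sig(a) - p_d(a)| \;=\; \Bigl|\sum_{k=d+1}^{\infty} \beta_k a^k\Bigr| \;\leq\; \sum_{k=d+1}^{\infty} M \cdot R^{-k} \;=\; \frac{M \cdot R^{-d}}{R - 1}.
\]
Choosing $d = \lceil \log_R(M/((R-1)\epsilon)) \rceil = O(\log(1/\epsilon))$ makes the right-hand side at most $\epsilon$, delivering the desired polynomial.

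The only small cosmetic issue is that the statement writes $p(a) = \sum_{i=1}^d \beta_i a^i$ with no $i=0$ term, whereas $\sig(0) = 1/2$. I would either read the sum as starting from $i = 0$ (which is clearly the intended meaning, since the analysis and conclusion are identical), or equivalently produce a Taylor polynomial for $\sig(a) - 1/2$, which does vanish at the origin. There is no real obstacle here; the heart of the argument is the Cauchy estimate, which only requires knowing where the complex singularities of $\sig$ sit, and that is immediate from the closed form.
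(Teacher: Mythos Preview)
The paper does not actually prove this lemma; it is quoted verbatim from Livni~\etal~\cite[Lemma 2]{LivniSS14} (itself attributed to Shalev-Shwartz~\etal~\cite{SSSS}) and used as a black box. So there is no ``paper's own proof'' to compare against.

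Your argument is correct and self-contained. Locating the poles of $\sig$ at the odd multiples of $i\pi$, applying Cauchy's estimate on a circle of radius $R\in(1,\pi)$, and bounding the Taylor tail by a geometric series is exactly the right mechanism, and it immediately yields degree $O(\log(1/\epsilon))$. This is in fact the same idea that underlies the cited results: the sigmoid's Taylor coefficients decay geometrically because the function is analytic in a disk strictly containing $[-1,1]$. Your handling of the cosmetic $i=1$ versus $i=0$ issue is fine; the statement as printed is a typo, since $\sig(0)=1/2\neq 0$, and the paper clearly intends a polynomial (with constant term) of degree $d$.
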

Let $p(a) = \sum_{i=1}^d \beta_i a^i$ be the uniform $\epsilon$-approximation $\sig$ which is guaranteed to exist by the above lemma. Using a similar trick as in Lemma \ref{reluapprox}, we can further bound $p([-1, 1]) \subseteq [0, 1]$. Also, using Lemma \ref{coeffboundReLU}, we can show that $\sum_{i=0}^d 2^i\beta_i^2$ is bounded by $(1/\epsilon)^{O(1)}$.  This shows that $\sig$ is \textit{low-weight approximable}.

Using Theorem \ref{thm:learn_network}, we state the following learnability result for depth-2 sigmoid networks.
\begin{corollary} \label{oneofflabel}
Depth-2 networks with $k$ hidden units and sigmoidal activation function such that the weight vectors have $\ell_2$-norm bounded by 1 are agnostically learnable over $\S^{n-1} \times [-\sqrt{k}, \sqrt{k}]$ with respect to loss function $\ell$ which is convex, $O(1)$-Lipschitz in the first argument, and $b$ bounded on $[-2\sqrt{k}, 2\sqrt{k}]$ in time $\poly(n, k , 1/\epsilon, \log(1/\delta))$.
\end{corollary}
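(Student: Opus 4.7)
The plan is to directly invoke Theorem~\ref{thm:learn_network} with $\sigma = \sig$, setting the network parameters $D = 1$, $M = 1$, and $W = \sqrt{k}$. The depth-2 assumption gives $D=1$; since $\vx \in \S^{n-1}$ and each weight vector $\vw_i$ satisfies $\|\vw_i\|_2 \leq 1$, the input $\vw_i \cdot \vx$ to each hidden sigmoid unit is bounded by $1$ in magnitude, yielding $M = 1$. The bound $W = \sqrt{k}$ on the $\ell_1$-norm of the output-layer weights follows from Cauchy--Schwarz applied to the $\ell_2$-norm bound of $1$ across $k$ hidden units (this is the same reduction already used for the ReLU corollary).

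The next step is to exhibit a low-weight polynomial approximation for $\sig$ of the quality required by Theorem~\ref{thm:learn_network}, namely an $(\epsilon', 2M, B) = (\epsilon/((L+1)\sqrt{k}), 2, B)$-approximation with $B = \poly(k, 1/\epsilon)$. I would start from Lemma~\ref{sigapprox}, which supplies such a polynomial on the interval $[-1, 1]$ of degree $O(\log(1/\epsilon'))$, and rescale it by considering $q(t) = p(t/2)$ (or more precisely, approximate $\sig(2u)$ on $[-1,1]$, which is still a smooth, monotone, bounded function amenable to the same truncated-Chebyshev/power-series construction underlying Lemma~\ref{sigapprox}). The rescaling at most multiplies $|\beta_i|$ by $2^{-i}$, so the quantity $\sum_{i=0}^d 2^i \beta_i^2$ is non-increasing after rescaling, and the degree remains $d = O(\log(1/\epsilon')) = O(\log(\sqrt{k}/\epsilon))$. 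Combining with Lemma~\ref{coeffboundReLU} (as was done for sigmoid in the text immediately above the corollary) then gives $B = (1/\epsilon')^{O(1)} = \poly(k, 1/\epsilon)$.

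With these ingredients in hand, the hypotheses of Theorem~\ref{thm:learn_network} are satisfied: $\sig$ is $1$-Lipschitz, the loss is convex, $O(1)$-Lipschitz, and bounded on $[-2M\cdot W, 2M\cdot W] = [-2\sqrt{k}, 2\sqrt{k}]$, and the distribution is supported on $\S^{n-1} \times [-M\cdot W, M\cdot W]$. Plugging into the runtime bound $n^{O(1)} \cdot B^{O(d)^{D-1}} \cdot \log(1/\delta)$ with $D = 1$ collapses the double-exponential tower to $B^{O(d)^{0}} = B^{O(1)}$, giving total runtime $n^{O(1)} \cdot \poly(k, 1/\epsilon) \cdot \log(1/\delta) = \poly(n, k, 1/\epsilon, \log(1/\delta))$, as required.

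The only mildly delicate step is the rescaling/approximation of $\sig$ on $[-2, 2]$ rather than $[-1, 1]$; this is the main place where one must be careful, but because $\sig$ remains a smooth bounded function on any fixed compact interval, the construction underlying Lemma~\ref{sigapprox} goes through with only polylogarithmic changes in degree and a polynomial change in the coefficient-weight bound, which is absorbed by the $\poly(k, 1/\epsilon)$ targeted bound on $B$.
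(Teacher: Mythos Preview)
Your proposal is correct and follows essentially the same route as the paper: set $D=1$, $M=1$, $W=\sqrt{k}$, invoke the low-degree low-weight approximation of $\sig$ (Lemma~\ref{sigapprox} plus Lemma~\ref{coeffboundReLU}) to get $B=(1/\epsilon')^{O(1)}$, and plug into Theorem~\ref{thm:learn_network}. If anything, you are more careful than the paper about the interval $[-2M,2M]=[-2,2]$ versus $[-1,1]$; your observation that one can instead approximate $\sig(2u)$ on $[-1,1]$ and then rescale (only shrinking the coefficient weights) is a clean way to handle this, though for $D=1$ the issue is actually moot since the inputs to the single hidden layer incur no approximation error and lie exactly in $[-1,1]$.
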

%\jtnote{Someone please insert a sentence or two comparing the above corollary to whatever results
%were obtained by Livni et al. \cite{LivniSS14}.} \sgnote{Added a line and commented the one before. Do you think we should add more?} \jtnote{Thanks a lot! I have a question about the Livni et al. result after glancing at it. They seem to claim a time bound of about $d^{O(L \log(L))}$ where $d$ is the depth of the circuit and $L$ is a bound on the $\ell_1$ norm of the weights. Now of course this must be wrong, since there is no dependence on $n$! But the point I am trying to raise is that I don't see an exponential dependence in $n$ in their claim (I suspect they really ``mean'' to claim dependence of $\poly(n, 1/\eps, d^{O(L \log L)})$?) Can we clarify this point below? Also, Livni et al. seem to not consider agnostic learning, just PAC learning, and that is probably worth mentioning too?}

Observe that the above result is polynomial in all parameters.
Livni~\etal (cf. \cite[Theorem 5]{LivniSS14}) state an incomparable
result for learning sigmoids: their runtime is superpolynomial in $n$
for $L = \omega(1)$, where $L$ is the bound on $\ell_1$-norm of the
weight vectors ($L$ may be as large as $\sqrt{k}$ in the setting of
Corollary \ref{oneofflabel}).  They, however, work over the Boolean
cube (whereas we are working over the domain $\S^{n-1}$). 
% Another difference is that their result is stated in the PAC model whereas we work in the more challenging agnostic setting.
%This is interesting in light of known hardness for depth-2 sigmoid networks proved by Livni et al. \cite{LivniSS14}). This further highligh that learning over $\S^{n-1}$ is often tractable, even when learning the analogous concept class defined over the Boolean hypercube is is not.

\subsection{Application: Learning Parametric Rectified Linear Unit} \label{sec:prelu}
A Parametric Rectified Linear Unit (PReLU) is a generalization of ReLU introduced by He~\etal~\cite{he2015delving}. Compared to the ReLU, it has an additional parameter that is learned. Formally, it is defined as
\begin{definition}[Parametric Rectifier]
	The parametric rectifier (denoted by $\sigma_{\text{PReLU}}$) is an activation function defined as
	\[
	\sigma_{\text{PReLU}}(x) = \begin{cases}
		x & \text{if } x \geq 0 \\
		a \cdot x & \text{if } x < 0
			\end{cases}
	\]
	where $a$ is a learnable parameter.
\end{definition}
Note that we can represent $\sigma_{\text{PReLU}}(x) = \mathsf{max}(0, x) - a \cdot \mathsf{max}(0, -x) = \rec(x) - a \cdot \rec(-x)$ which is a depth-2 network of ReLUs. Therefore, we can state the following learnability result for a single PReLU parameterized by a weight vector $\vw$ based on learning depth-2 ReLU networks.
\begin{corollary}
Let PReLU with the parameter $a$ be such that $|a|$ is bounded by a constant and the weight vector $\vw$ has 2-norm bounded by 1. Then, PReLU is agnostically learnable over $\S^{n-1}$ with respect to any $O(1)$-Lipschitz loss function in time $n^{O(1)} \cdot 2^{O(1/\epsilon)} \cdot \log(1/\delta)$.
\end{corollary}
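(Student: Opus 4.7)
The plan is to exhibit the class of PReLUs with bounded parameter $a$ as a subclass of the depth-$2$ ReLU networks handled by Theorem \ref{thm:learn_network} (equivalently, Corollary \ref{thm:learn_relu_network}), and then read off the runtime bound from that theorem after a careful choice of the parameters $D, W, M$.

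First, I would use the identity already noted right before the corollary:
\[
\sigma_{\text{PReLU}}(\vw\cdot\vx) \;=\; \rec(\vw\cdot\vx) \;-\; a\cdot \rec(-\vw\cdot\vx),
\]
which realizes any PReLU as a depth-$2$ ReLU network with exactly two hidden units. The two hidden-layer weight vectors are $\vw$ and $-\vw$, each with $\ell_2$-norm at most $1$, and the output-layer weights are $1$ and $-a$, so their $\ell_1$-norm is $1+|a| = O(1)$ by the assumption that $|a|$ is bounded by a constant. For any $\vx\in\S^{n-1}$ the inputs to the two hidden units are $\pm\vw\cdot\vx \in [-1,1]$, so the ``inputs bounded by $M$'' condition in the definition of $\mathcal{N}[\sigma, D, W, M]$ is met with $M=1$. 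Therefore the class of PReLUs with $|a|=O(1)$ and $\|\vw\|_2\leq 1$ is contained in $\mathcal{N}[\rec, D, W, M]$ for $D=1$, $W=O(1)$, $M=1$.

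Next I would apply Corollary \ref{thm:learn_relu_network} to this network class. That corollary outputs a hypothesis $\hat f$ whose expected loss is within $\epsilon$ of the best network in $\mathcal{N}[\rec, 1, O(1), 1]$; since the optimal PReLU lies in this class, $\hat f$ is in particular $\epsilon$-competitive with the best PReLU. Plugging $L=O(1)$, $D=1$, $W=O(1)$, $M=1$ into the runtime bound $n^{O(1)}\cdot 2^{((L+1)\cdot M\cdot W^{D}\cdot D\cdot \epsilon^{-1})^D}\cdot \log(1/\delta)$ of Corollary \ref{thm:learn_relu_network} collapses the double-exponential tower (because $D=1$) to a single exponential in $1/\epsilon$, giving the stated $n^{O(1)}\cdot 2^{O(1/\epsilon)}\cdot \log(1/\delta)$ bound. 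The output range of any network in this class lies in $[-W\cdot M, W\cdot M]=[-O(1), O(1)]$, on which an $O(1)$-Lipschitz loss function is automatically $O(1)$-bounded, so the boundedness hypothesis of the corollary is satisfied.

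The only mildly subtle point is conceptual rather than technical: the two hidden units of our realization are forced to share the same underlying weight direction $\vw$, while the ambient network class $\mathcal{N}[\rec,1,O(1),1]$ allows the two hidden units to have independent weight vectors. This only enlarges the comparator class, so the guarantee of Corollary \ref{thm:learn_relu_network} remains an upper bound on the excess loss with respect to the best PReLU; no additional work is needed to exploit the tied-weight structure. With these observations, the corollary follows directly from Theorem \ref{thm:learn_network} applied to $\rec$.
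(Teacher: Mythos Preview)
Your proposal is correct and follows essentially the same approach as the paper: represent the PReLU as a depth-$2$ ReLU network via the identity $\sigma_{\text{PReLU}}(\vw\cdot\vx)=\rec(\vw\cdot\vx)-a\,\rec(-\vw\cdot\vx)$, then invoke Corollary~\ref{thm:learn_relu_network} with $L=O(1)$, $D=1$, $M=1$, and $W=O(1)$. The paper's proof is a single sentence to this effect; your additional remarks on the tied-weight structure and boundedness of the loss are sound elaborations but not part of the paper's own argument.
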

The proof of the corollary follows from setting $L=1$, $D=1$, $M = 1$ and $ W = O(1)$ in Theorem \ref{thm:learn_relu_network}.

The condition that $|a|$ be bounded by 1 is reasonable as in practice the value of $a$ is very rarely above 1 as observed by He~\etal~\cite{he2015delving}. Also note that Leaky-ReLUs \cite{maas2013rectifier} are PReLUs with fixed $a$ (usually 0.01). Hence, we can agnostically learn them under the same conditions using an identical argument as above. Note that a network of PReLU can also be similarly learned as a ReLU by replacing each ReLU in the network by a linear combination of two ReLUs as described before.

\subsection{Application: Learning the Piecewise Linear Transfer Function} \label{sec:pw}
Several functions have been used to relax the $0/1$ loss in the
context of learning linear classifiers.  The best example is the
sigmoid function discussed earlier.  Here we consider the piecewise linear transfer function. Formally, it is defined as
\begin{definition}[Piecewise Linear Transfer Function]
	The $C$-Lipschitz piecewise linear transfer function (denoted by $\sigma_{\text{pw}}$) is an activation function defined as
	\[
	\sigma_{\text{pw}}(x) = \mathsf{max} \left(0, \mathsf{min} \left( \frac{1}{2} + Cx, 1 \right) \right).
	\]
\end{definition}
Note that we can represent $\sigma_{\text{pw}}(x) = \mathsf{max}\left(0, \frac{1}{2} + Cx \right) - \mathsf{max}\left(0, -\frac{1}{2} + Cx\right) = \rec\left(\frac{1}{2} + Cx\right)  - \rec\left(-\frac{1}{2} + Cx\right) $ which is a depth-2 network of ReLU. Therefore, we can state the following learnability result for a piecewise linear transfer function parameterized by weight vector $\vw$ following a similar argument as in the previous section.
\begin{corollary}
The class of $C$-Lipschitz piecewise linear transfer functions parametrized by weight vector $\vw$ with 2-norm bounded by 1 is agnostically learnable over $\S^{n-1}$ with respect to any $O(1)$-Lipschitz loss function in time $n^{O(1)} \cdot 2^{O(C/\epsilon)} \cdot \log(1/\delta)$.
\end{corollary}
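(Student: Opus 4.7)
The plan is to reduce the claim to Corollary~\ref{thm:learn_relu_network} via the depth-2 ReLU identity
\[
\sigma_{\text{pw}}(\vw \cdot \vx) \;=\; \rec\!\left(\tfrac{1}{2} + C\,\vw \cdot \vx\right) \;-\; \rec\!\left(-\tfrac{1}{2} + C\,\vw \cdot \vx\right),
\]
which is already displayed just before the corollary. The only real subtlety is that this depth-2 representation contains an additive bias ($\pm 1/2$) and an input scaling factor $C$, neither of which fits directly into the model $\mathcal{N}[\rec, D, W, M]$ over $\S^{n-1}$. I would handle this by a standard augmentation: extend each $\vx \in \S^{n-1}$ to $\tilde{\vx} = (\vx, 1)/\sqrt{2} \in \S^{n}$ and, for every hidden ReLU, use the augmented weight vector $\tilde{\vw}^\pm = \sqrt{2}(C\vw, \pm 1/2) \in \reals^{n+1}$. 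Then $\tilde{\vw}^\pm \cdot \tilde{\vx} = C\,\vw \cdot \vx \pm \tfrac{1}{2}$, so the resulting depth-2 ReLU network computes precisely $\sigma_{\text{pw}}(\vw \cdot \vx)$, and the examples still live on a unit sphere (of dimension $n{+}1$), so the hypothesis class of the original theorem applies.

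Next I would verify the parameter bounds required by Corollary~\ref{thm:learn_relu_network}. Since $\lnorm{\vw}_2 \leq 1$, we have $\lnorm{\tilde{\vw}^\pm}_2 = \sqrt{2}\sqrt{C^2 + 1/4} = O(C)$, and the pre-activation values satisfy $|\tilde{\vw}^\pm \cdot \tilde{\vx}| \leq C + 1/2 = O(C)$; therefore the input-layer norm bound $M$ can be taken to be $\Theta(\max\{C,1\}) = O(C)$. The output layer is a linear combination with coefficients $+1$ and $-1$, so the $\ell_1$ weight bound $W$ can be taken to be a constant. The depth is $D = 1$, the activation $\rec$ is $1$-Lipschitz (so effective Lipschitz constant $L = O(1)$ in the theorem), and the loss is $O(1)$-Lipschitz on the relevant bounded range by hypothesis.

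Plugging $D=1$, $W = O(1)$, $M = O(C)$, and $L = O(1)$ into the runtime bound $n^{O(1)} \cdot 2^{((L+1)\cdot M \cdot W^D \cdot D \cdot \epsilon^{-1})^D} \cdot \log(1/\delta)$ from Corollary~\ref{thm:learn_relu_network} yields exactly $n^{O(1)} \cdot 2^{O(C/\epsilon)} \cdot \log(1/\delta)$, matching the claim. The agnostic guarantee for $\sigma_{\text{pw}}$-based concepts then follows because the class $\{ \vx \mapsto \sigma_{\text{pw}}(\vw \cdot \vx) : \lnorm{\vw}_2 \leq 1\}$ embeds (via the augmentation) into $\mathcal{N}[\rec, 1, W, M]$, so the minimum loss over the latter class is at most that of the former, and the corollary's $\epsilon$-excess-loss guarantee transfers.

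I do not expect a real obstacle: the only thing worth double-checking is that the ranges are consistent, i.e., that $[-M \cdot W, M\cdot W] = [-O(C), O(C)]$ contains the range $[0,1]$ of $\sigma_{\text{pw}}$ and the labels (which the problem implicitly assumes to lie in a bounded interval on which $\ell$ is Lipschitz and bounded), so that Corollary~\ref{thm:learn_relu_network} applies verbatim. All of these are immediate from $C \geq 0$ and the hypotheses on $\ell$.
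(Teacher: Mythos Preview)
Your proposal is correct and follows essentially the same approach as the paper: express $\sigma_{\text{pw}}(\vw\cdot\vx)$ as a depth-2 ReLU network and invoke Corollary~\ref{thm:learn_relu_network} with $D=1$ and $L=O(1)$. The only cosmetic difference is where you put the factor $C$: you keep it in the first-layer weights and take $M=O(C)$, $W=O(1)$, whereas the paper uses the positive homogeneity $\rec(Ct)=C\,\rec(t)$ to push $C$ into the output weights and take $M=O(1)$, $W=O(C)$; since the exponent in the runtime bound depends on the product $M\cdot W$, both choices yield $2^{O(C/\epsilon)}$. Your explicit handling of the bias via the $(\vx,1)/\sqrt{2}$ augmentation is in fact more careful than the paper's one-line proof, which glosses over this point.
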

The proof of the corollary follows from setting $L=1$, $D=1$, $M = 1$ and $ W = O(L)$ in Theorem \ref{thm:learn_relu_network}.

Shalev-Shwartz~\etal~\cite{SSSS} in Appendix A solved the above problem for $l_1$ loss and gave a running time with dependence on $C, \epsilon$ as $\poly\left(\exp\left(\frac{C^2}{\epsilon^2}\log \left(\frac{C}{\epsilon}\right)\right)\right)$. Our approach gives an exponential improvement in terms of $\frac{C}{\epsilon}$ and works for general constant Lipschitz loss functions.

\subsection{Application: Convex Piecewise-Linear Fitting}
\label{sec:convexpiecewise}
In this section we can use our learnability results for networks of
ReLUs to give polynomial-time approximation schemes for convex
piecewise-linear regression \cite{Mag}.  These problems have been studied in
optimization and notably in machine learning in the context of
Multivariate Adaptive Regression Splines (MARS \cite{MARS}). Note that
these are {\em not} the same as {\em univariate} piecewise or segmented regression
problems, for which polynomial-time algorithms are known.  Although
our algorithms run in time exponential in $k$ (the number of affine
functions), we note that no provably efficient algorithms were known prior to our work even for
the case $k = 2$.\footnote{Boyd and Magnani \cite{Mag} specifically focus on the case of
small $k$, writing ``Our interest, however, is in the case when the number of terms $k$ is relatively small, say no more than 10,
or a few 10s.''}

The key idea will be to reduce piecewise regression problems to an optimization
problem on networks of ReLUs using simple ReLU ``gadgets.''  We formally
describe the problems and describe the gadgets in detail. 

%Tie-in MARS? 

%solve the problem of
%agnostically learning the class of convex piecewise linear function
%discussed extensively by Magnani~\etal~\cite{Mag}. To our kno Specifically, we show positive results for two special cases of this class. The first result is for \textit{sums of max 2-affi%ne functions}. The second is for \textit{max affine functions}.

\subsubsection{Sum of Max 2-Affine}
We start with a simple class of convex piecewise linear functions represented as a sum of a fixed number of functions where each of these functions is a maximum of 2 affine functions. This is formally defined as follows.
%\jtnote{As flagged in the definition in the introduction regarding convex linear piecewise fitting,
%the definitions in this section are not adequate, since they only consider proper learning, and they ask for exact optimality (as opposed to achieving error within $\eps$ of the optimum classifier in the concept class. We need to edit the definitions to make them sufficiently precise.} \sgnote{Added that the result need not be proper and is within $\eps$ of the best.}
\begin{definition} [Sum of $k$ Max 2-Affine Fitting \cite{Mag}]
\label{sum-max}
Let ${\cal C}$ be the class of functions of the form $f(x) =
\sum_{i=1}^k \mathsf{max}(\vw_{2i - 1} \cdot \vx, \vw_{2i} \cdot \vx)$ with $\vw_{1}, \ldots, \vw_{2k} \in \S^{n-1}$ mapping $\S^{n-1}$
to $\R$.  Let $\D$ be an (unknown) distribution on $\S^{n-1} \times [-k,k]$.  Given
i.i.d. examples drawn from $\D$, for any $\eps \in (0,1)$ find a function $h$ (not necessarily in {\cal C}) such that 
$\E_{(\vx,y) \sim \D}[ (h(\vx) - y)^2] \leq \min_{c \in {\cal C}} \E_{(\vx,y) \sim \D}[ (c(\vx) - y)^2] + \eps$.   
\end{definition}

It is easy to see that $\mathsf{max}(a, b) = \mathsf{max}(0, a-b) + \mathsf{max}(0, b) - \mathsf{max}(0, -b) = \rec(a-b) + \rec(b) - \rec(-b)$ where $\rec(a) = \mathsf{max}(0,a)$. This is simply a linear combination of ReLUs. We can thus represent $\mathsf{max}(\vw_1 \cdot \vx, \vw_2 \cdot \vx)$ as a depth-2 network (see Figure \ref{max-relu}).
\begin{figure}
\centering
\begin{tikzpicture}[shorten >=1pt,->,draw=black, node distance=3cm]
    \tikzstyle{unit}=[circle,minimum size=18pt]
    \tikzstyle{input unit}=[unit, fill=green!50!white];
    \tikzstyle{max unit}=[unit, fill=red!50!white];
    \tikzstyle{output unit}=[unit, fill=blue!50!white];
    \tikzstyle{layer} = [text width=4em, text centered]
    \foreach \name / \y in {1,...,2}
        \node[input unit] (I-\name) at (0,-\y*1.5) {$\vw_\y \cdot \vx$};
    \foreach \name / \y in {1,...,3}
        \path[yshift=0.75cm]
            node[max unit] (H1-\name) at (3cm,-\y*1.5 cm) {$\rec$};
    \node[output unit,pin={[pin edge={->}]right:$output$}] (O) at (6cm, -2.5cm) {};
\path[line width=0.3mm]
    (I-1) edge[black] (H1-1)
    (I-2) edge[dashed] (H1-1)
    (I-2) edge[black] (H1-2) 
    (I-2) edge[dashed] (H1-3) 
     
     (H1-1) edge[black] (O) 
    (H1-2) edge[black] (O) 
    (H1-3) edge[dashed] (O) ;     
\end{tikzpicture}
\caption{Representation of $\mathsf{max}(\vw_1 \cdot \vx, \vw_2 \cdot \vx)$ as a depth-2 ReLU network. Note that solid edges represent a weight of 1, dashed edges represent a weight of -1, and the absence of an edge represents a weight of 0.}
\label{max-relu}
\end{figure}
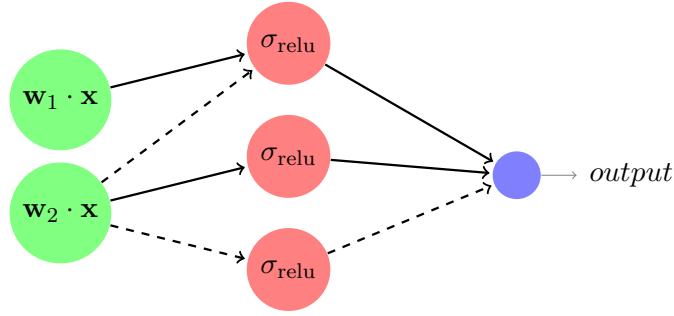
Adding copies of this, we can represent a sum of $k$ max 2-affine functions as a depth-2 network $\mathcal{N}_{\Sigma}$ with $3k$ hidden units and activation function $\rec$ satisfying the following properties,
\begin{itemize}
\item $\|\vw_j^{(0)}\| \leq 2$
\item $\|\vw_1^{(1)}\|_1 \leq 3k $
\item Each input to each unit is bounded in magnitude by 2.
%\jtnote{I edited this but am in a rush and am not 100 percent certain it is what was meant. Please check that I didn't introduce an error.}
\end{itemize}
The first property holds as $\|\vw_j^{(0)}\| \leq \mathsf{max}(\|\vw_{2j - 1} - \vw_{2j}\|, \|\vw_{2j - 1}\|, \|\vw_{2j}\|) \leq  \|\vw_{2j - 1}\| + \|\vw_{2j}\| \leq 2$ using the triangle inequality. The second holds because each of the $k$ max sub-networks contributes 3 to $\|\vw_1^{(1)}\|_1$. The third is implied by the fact that each input to each unit is bounded by $|\mathsf{max}(\vw_1 \cdot \vx, -\vw_1 \cdot \vx, (\vw_1 - \vw_2)  \cdot \vx)| \leq 2$.
%\jtnote{Please check that my edit to the previous sentence did not introduce an error.} With the above construction and corresponding properties, we obtain the following theorem.
\begin{theorem}
Let ${\cal C}$ be as in Definition \ref{sum-max}, then there is
an algorithm ${\cal A}$ for solving sum of $k$ max 2-affine
fitting problem in time $n^{O(1)}2^{O\left((k^2/\epsilon)\right)} \log(1/ \delta)$. 
\end{theorem}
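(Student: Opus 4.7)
The plan is to reduce the sum-of-$k$-max-$2$-affine fitting problem to the problem of agnostically learning a depth-$2$ ReLU network, and then invoke Theorem~\ref{thm:learn_relu_network}. The gadget construction and the three structural properties of the resulting network $\mathcal{N}_{\Sigma}$ ($3k$ hidden units; every layer-$0$ weight vector of Euclidean norm at most $2$; output-layer $\ell_1$-weight at most $3k$; every hidden-unit input bounded by $2$ in magnitude) are already in place in the excerpt. Together these show $\mathcal{C} \subseteq \mathcal{N}[\rec, 1, 3k, 2]$, so any algorithm approximately minimizing the squared loss over $\mathcal{N}[\rec, 1, 3k, 2]$ also approximately minimizes it over $\mathcal{C}$.

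Next, I would check that the squared loss $\ell(y', y) = (y' - y)^2$ satisfies the hypotheses of Theorem~\ref{thm:learn_relu_network} with the right quantitative parameters. Taking $D = 1$, $M = 2$, $W = 3k$, the network output lies in $[-MW, MW] = [-6k, 6k]$ and the labels in $[-k, k] \subseteq [-6k, 6k]$, so on $[-2MW, 2MW] = [-12k, 12k]$ the loss is convex in its first argument, $L$-Lipschitz with $L = O(k)$ (since $|\partial_{y'} \ell| = 2|y' - y| \leq O(k)$), and $b$-bounded with $b = O(k^2)$. These constants are polynomial in $k$ and independent of $n, \epsilon$, which is all that is needed.

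Finally, I would plug these parameters directly into the runtime guarantee of Theorem~\ref{thm:learn_relu_network},
$$ n^{O(1)} \cdot 2^{((L+1)\cdot M\cdot W^{D}\cdot D\cdot \epsilon^{-1})^{D}} \cdot \log(1/\delta), $$
which with $D = 1$, $M = 2$, $W = 3k$, $L = O(k)$ simplifies to $n^{O(1)} \cdot 2^{O(k^2 / \epsilon)} \cdot \log(1/\delta)$, exactly the bound claimed.

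There is no genuine technical obstacle here; the proof is essentially bookkeeping on top of Theorem~\ref{thm:learn_relu_network}. The one point worth verifying carefully is the source of the $k^2$ in the exponent: it is the product of the Lipschitz factor $L = O(k)$ (which is why one cannot use the $O(1)$-Lipschitz depth-$2$ corollary directly) and the $\ell_1$-output-weight bound $W = 3k$ (rather than $\sqrt{k}$, which is what the preceding corollary for $\ell_2$-bounded weights gives). This is what distinguishes the present bound $2^{O(k^2/\epsilon)}$ from the $2^{O(\sqrt{k}/\epsilon)}$ bound obtained there.
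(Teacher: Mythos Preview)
Your proposal is correct and follows essentially the same route as the paper: use the gadget to embed $\mathcal{C}$ into $\mathcal{N}[\rec, 1, 3k, 2]$ and invoke Theorem~\ref{thm:learn_relu_network} with $D=1$, $M=2$, $W=3k$. The paper's own proof is extremely terse and does not spell out the Lipschitz-constant calculation; your observation that the squared loss has $L = O(k)$ on the relevant interval (because the labels lie in $[-k,k]$ and the predictions in $[-O(k), O(k)]$), and that this $L$ combines with $W = 3k$ to give the $k^2$ in the exponent, is exactly the missing bookkeeping that justifies the stated bound.
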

\begin{proof}
As per our construction, we know that there exists a network $\mathcal{N}_{\Sigma}$ with activation function $\rec$ and 1 hidden layer such that $\|\vw_j^{(0)}\|_2 \leq 2$ and $\|\vw_1^{(1)}\|_1 \leq 3k$. Also, input to each unit is bounded in magnitude by 2. 
%\jtnote{Please check that my edit to the previous sentence did not introduce an error.}
Thus, using Theorem \ref{thm:learn_relu_network} with $K =1$, $M=2$ and $W = 3k$ we get that there exists an algorithm that solves the sum of $k$ max 2-affine fitting problem in time $n^{O(1)} 2^{(O(k^2/\epsilon))} \log(1/\delta)$.
\end{proof}

\subsubsection{Max $k$-Affine}
In this section, we move to a more general convex piecewise linear function represented as the maximum of $k$ affine functions. This is formally defined as follows.
\begin{definition} [Max $k$-Affine Fitting \cite{Mag}]
\label{max-k}
Let ${\cal C}$ be the class of functions of the form $f(x) =
\mathsf{max}(\vw_{1} \cdot \vx, \ldots, \vw_{k} \cdot \vx)$ with $\vw_{1}, \ldots, \vw_{n} \in \S^{n-1}$ mapping $\S^{n-1}$
to $\R$.  Let $\D$ be a distribution on $\S^{n-1} \times [-1,1]$.  Given
i.i.d. examples drawn from $\D$, for any $\eps \in (0,1)$ find a function $h$ (not necessarily in {\cal C}) such that 
$\E_{(\vx,y) \sim \D}[ (h(\vx) - y)^2] \leq \min_{c \in {\cal C}} \E_{(\vx,y) \sim \D}[ (c(\vx) - y)^2] + \eps$.    
\end{definition}
Note that this form is universal since any convex piecewise-linear function can be expressed as a max-affine function, for some value of $k$. However, we focus on bounded $k$ and give learnability bounds in terms of $k$.

Observe that max $k$-affine can be expressed in a complete binary tree structure of height $\lceil \log k \rceil$ with a $\mathsf{max}$ operation at each unit and $\vw_{i} \cdot \vx$ for $i \in [k]$ at the $k$ leaf units (for example, see Figure \ref{k-max-tree} 
%\jtnote{this label isn't displaying properly. It seems to be interpreted by latex as a section number, but I'm not sure why. Please fix.}). 
Note that if $k$ is not a power of 2, then we can trivially add leaves with value $\vw_1 \cdot \vx$ and make it a complete tree.
\begin{figure}
\centering
\begin{tikzpicture}[shorten >=1pt,->,draw=black, node distance=3cm]
    \tikzstyle{unit}=[circle,minimum size=18pt]
    \tikzstyle{input unit}=[unit, fill=green!50!white];
    \tikzstyle{max unit}=[unit, fill=red!50!white];
    \tikzstyle{layer} = [text width=4em, text centered]
    \foreach \name / \y in {1,...,4}
        \node[input unit] (I-\name) at (0,-\y*1.5) {$\vw_\y \cdot \vx$};
    \foreach \name / \y in {1,...,2}
        \path[yshift=0.625cm,line width=0.3mm]
            node[max unit] (H-\name) at (3cm,-\y*3 cm) {max};
    \node[max unit] (O) at (6cm, -4cm) {max};
    \foreach \source in {1,...,2}
            \path[line width=0.3mm] (I-\source) edge (H-1);
    \foreach \source in {3,...,4}
            \path[line width=0.3mm] (I-\source) edge (H-2);
     \foreach \source in {1,...,2}
        \path[line width=0.3mm] (H-\source) edge (O);
\end{tikzpicture}
\caption{Tree structure for evaluating max $k$-affine with $k=4$.}
\label{k-max-tree}
\end{figure}
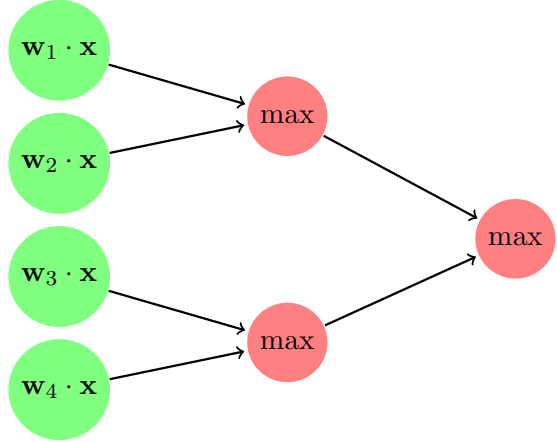

Thus, the class of convex piecewise linear functions can be expressed as a network of ReLUs with $\lceil \log k \rceil$ hidden layers by replacing each $\mathsf{max}$ unit in the tree by 3 ReLUs and adding an output unit. See Figure \ref{k-max-relu} 
%\jtnote{ditto for this label, it's being interpreted as a section number. Please fix.} 
for the construction for $k=4$.
\begin{figure}
\centering
\begin{tikzpicture}[shorten >=1pt,->,draw=black, node distance=3cm]
    \tikzstyle{unit}=[circle,minimum size=18pt]
    \tikzstyle{input unit}=[unit, fill=green!50!white];
    \tikzstyle{max unit}=[unit, fill=red!50!white];
    \tikzstyle{output unit}=[unit, fill=blue!50!white];
    \tikzstyle{layer} = [text width=4em, text centered]
    \foreach \name / \y in {1,...,2}
        \node[input unit] (I-\name) at (0,-\y*1.5) {$\vw_\y \cdot \vx$};
    \foreach \name / \y in {3,...,4}
        \node[input unit] (I-\name) at (0,-1.5-\y*1.5) {$\vw_\y \cdot \vx$};
    \foreach \name / \y in {1,...,6}
        \path[yshift=0.75cm]
            node[max unit] (H1-\name) at (3cm,-\y*1.5 cm) {$\rec$};
    \foreach \name / \y in {1,...,3}
        \path[yshift=1.5cm]
            node[max unit] (H2-\name) at (6cm,-\y*3 cm) {$\rec$};
    \node[output unit,pin={[pin edge={->}]right:$output$}] (O) at (9cm, -4.5cm) {};
\path[line width=0.3mm]
    (I-1) edge[black] (H1-1)
    (I-2) edge[dashed] (H1-1)
    (I-2) edge[black] (H1-2) 
    (I-2) edge[dashed] (H1-3) 
    (I-3) edge[black] (H1-4) 
    (I-4) edge[dashed] (H1-4) 
    (I-4) edge[black] (H1-5)
    (I-4) edge[dashed] (H1-6)
    
    (H1-1) edge[black] (H2-1) 
    (H1-2) edge[black] (H2-1) 
    (H1-3) edge[dashed] (H2-1) 
    (H1-4) edge[dashed] (H2-1) 
    (H1-5) edge[dashed] (H2-1) 
    (H1-6) edge[black] (H2-1) 
    (H1-4) edge[black] (H2-2) 
    (H1-5) edge[black] (H2-2) 
    (H1-6) edge[dashed] (H2-2) 
    (H1-4) edge[dashed] (H2-3) 
    (H1-5) edge[dashed] (H2-3) 
    (H1-6) edge[black] (H2-3) 
     
     (H2-1) edge[black] (O) 
    (H2-2) edge[black] (O) 
    (H2-3) edge[dashed] (O) ;     

\end{tikzpicture}
\caption{Network with $\rec$ for evaluating max $k$-affine with $k=4$. Note that solid edges represent a weight of 1, dashed edges represent a weight of -1 and the absence of an edge represents a weight of 0.}
\label{k-max-relu}
\end{figure}
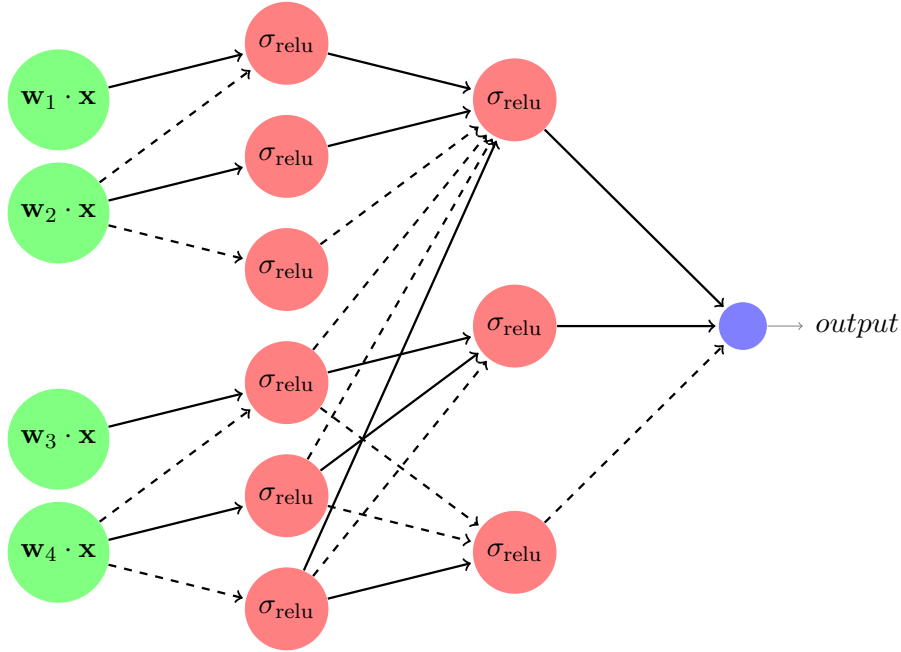

More formally, we have a network $\mathcal{N}_{\mathsf{max}}$ with $\lceil \log k \rceil$ hidden layers and one output unit with $\rec$ as the activation function. Hidden layer $i$ has $3 \cdot 2^{\lceil \log k \rceil-i}$ units. The weight vectors for the units in the first hidden layer are
\[
	\vw_{3j-m}^{(0)} = \begin{cases}
				\vw_{2j} - \vw_{2j-1} & m = 0 \\
				\vw_{2j-1} & m = 1 \\
				-\vw_{2j-1} & m = 2
			\end{cases}
\]
for $j \in [3\cdot 2^{\lceil \log k \rceil-1}]$. Further, the weight vectors input to hidden layer $i \in \{2, \ldots, \lceil \log k \rceil\}$ of the network are
\[
	\vw_{3j-m}^{(i-1)} = \begin{cases}
				\ve_{6j} + \ve_{6j-1} - \ve_{6j-2} - (\ve_{6j-3} + \ve_{6j-4} - \ve_{6j-5}) & m = 0 \\
				\ve_{6j-3} + \ve_{6j-4} - \ve_{6j-5} & m = 1 \\
				-(\ve_{6j-3} + \ve_{6j-4} - \ve_{6j-5})  & m = 2.
			\end{cases}
\]
for $j \in [3\cdot 2^{\lceil \log k \rceil - i + 1}]$. Note, $\ve_i$ refers to the vector with 1 at position $i$ and 0 everywhere else. Finally the weight vector for the output unit is $w_1^{(\lceil \log k \rceil)} = \ve_1 + \ve_2 - \ve_3$. The following properties of $\mathcal{N}_{\mathsf{max}}$ are easy to deduce.
\begin{itemize}
\item $\|\vw_j^{(0)}\|_2 \leq 2$
\item $\|\vw_j^{(i)}\|_1 \leq 6 $ for $i \in [\lceil \log k \rceil]$
\item The input to each unit is bounded by 2.
\end{itemize}
Here, the first and third conditions are the same conditions as in the previous section. The second holds by the values of the weights defined above. Using the above construction, we obtain the following result.

\begin{theorem}
Let ${\cal C}$ be as in Definition \ref{max-k}, then there is
an algorithm ${\cal A}$ for solving the max $k$-affine
fitting problem in time $n^{O(1)} \cdot 2^{O(k/\epsilon)^{\lceil \log k \rceil}} \cdot \log(1/ \delta)$. 
\end{theorem}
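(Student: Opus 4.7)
The plan is to reduce to Theorem \ref{thm:learn_relu_network} applied to the explicitly constructed ReLU network $\mathcal{N}_{\max}$ described immediately before the theorem statement. Since all the heavy lifting (reducing to an approximating polynomial network and then to a composed-kernel optimization) already happened inside that theorem, what remains is structural bookkeeping of the parameters of $\mathcal{N}_{\max}$.

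First, I would verify that $\mathcal{N}_{\max}$ exactly represents max $k$-affine. This is immediate from the construction: each internal $\max$ node in the balanced binary tree of Figure~\ref{k-max-tree} is replaced by the three-ReLU gadget $\max(a,b) = \rec(a-b) + \rec(b) - \rec(-b)$, and the explicit edge-weight vectors (which are $\pm 1$ combinations of the standard basis vectors $\ve_i$) propagate the output of each gadget correctly into the inputs of the next layer's gadget. Thus, optimizing over $\mathcal{N}[\rec, \lceil \log k \rceil, W, M]$ is at least as powerful as optimizing over $\mathcal{C}$.

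Next, I would instantiate Theorem~\ref{thm:learn_relu_network} with the parameters read off from the three bullet points preceding the theorem: activation $\sigma = \rec$ (which is $1$-Lipschitz and low-weight approximable by Lemmas~\ref{reluapprox} and~\ref{coeffboundReLU}); depth $D = \lceil \log k \rceil$; layer-$0$ bound $M = 2$ (since $\|\vw_j^{(0)}\|_2 \leq \|\vw_{2j-1}\|_2 + \|\vw_{2j}\|_2 \leq 2$, and each unit's input is bounded by $2$ in magnitude); and per-layer $\ell_1$ bound $W = 6$ for each $i \geq 1$. For the loss, observe that since $\vw_i, \vx \in \S^{n-1}$ we have $|\max_i \vw_i \cdot \vx| \leq 1$, so both targets and true predictions lie in $[-1,1]$ and the squared loss is $O(1)$-Lipschitz and $O(1)$-bounded on the relevant interval $[-2MW, 2MW]$, giving $L, b = O(1)$.

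Finally, plugging these parameters into the runtime bound $n^{O(1)} \cdot 2^{((L+1) M W^D D / \epsilon)^D} \cdot \log(1/\delta)$ from Theorem~\ref{thm:learn_relu_network} yields $n^{O(1)} \cdot 2^{(O(1) \cdot 6^{\lceil \log k \rceil} \cdot \lceil \log k \rceil / \epsilon)^{\lceil \log k \rceil}} \cdot \log(1/\delta)$. Since $6^{\lceil \log k \rceil} = k^{O(1)}$, absorbing polynomial-in-$k$ factors and the $\lceil \log k \rceil$ factor into a generic $O(k)$ inside the outer exponent gives the claimed $n^{O(1)} \cdot 2^{O(k/\epsilon)^{\lceil \log k \rceil}} \cdot \log(1/\delta)$. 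The main obstacle is purely arithmetic: there is no new conceptual content beyond the prior learnability theorem, but one must carefully check that the weight and input bounds of $\mathcal{N}_{\max}$ hold uniformly across all $\lceil \log k \rceil$ hidden layers, which follows directly from the explicit $\ve_i$-based edge-weight formulas and the fact that the gadget $\rec(a-b)+\rec(b)-\rec(-b)$ has $\ell_1$ output contribution exactly $3$ per internal tree node.
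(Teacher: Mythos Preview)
Your proposal is correct and takes essentially the same approach as the paper: you instantiate Theorem~\ref{thm:learn_relu_network} on the network $\mathcal{N}_{\max}$ with $D=\lceil\log k\rceil$, $M=2$, $W=6$, and then simplify the resulting runtime exponent. Your write-up is in fact more careful than the paper's about the arithmetic (explicitly noting $6^{\lceil\log k\rceil}=k^{O(1)}$ to absorb the $W^D$ factor), but there is no difference in method.
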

\begin{proof}
As per our construction, we know that there exists a network $\mathcal{N}_{\mathsf{max}}$ with activation function $\rec$ and $\lceil \log k \rceil$ hidden layers such that $\|\vw_j^{(0)}\|_2 \leq 2$ and $\|\vw_j^{(i)}\|_1 \leq 6 $ for $i \in [\lceil \log k \rceil]$. Also, input to each unit is bounded by 2. Thus, using Theorem \ref{thm:learn_relu_network} with $K =[\lceil \log k \rceil]$, $M=2$ and $W = 6$, we get that there exists an algorithm that solves the max $k$-affine problem in time $n^{O(1)} 2^{(k/\epsilon)^{O(\log k)}} \log(1/\delta)$.
\end{proof}

\section{Hardness of Learning ReLU}
\label{sec:hardness}
We also establish the first hardness results for learning a {\em
  single} ReLU with respect to distributions supported on the Boolean
hypercube ($\{0,1\}^n$).  The high-level ``takeaway'' from our hardness results is that learning
functions of the form $\mathsf{max}(0, \vw \cdot \vx)$ where $|\vw \cdot \vx|
\in \omega(1)$ is as hard as solving notoriously difficult problems in
computational learning theory.  This justifies our focus in previous
sections on input distributions supported on $\S^{n-1}$ and indicates
that learning real-valued functions on the sphere is one avenue for
{\em avoiding} the vast literature of hardness results on Boolean
function learning.

To begin, we recall the following problem from computational learning
theory widely thought to be computationally intractable.

\begin{definition} (Learning Sparse Parity with Noise) Let $\chi_{S}:
  \{0,1\}^n \rightarrow \{0,1\}^n$ be an unknown parity function on a
  subset $S$, $|S| \leq k$, of $n$ inputs bits (i.e., any input, restricted
  to $S$, with an odd number of ones is mapped to $1$ and $0$
  otherwise). Let ${\cal C}_k$ be the concept class of all parity
  functions on subsets $S$ of size at most k. Let $\D$ be a
  distribution on $\{0,1\}^n \times \{-1,1\}$ and define

\[ \mathsf{opt} = \min_{\chi \in {\cal C}_k} \Pr_{(\vx,y) \sim {\cal
    \D}}[\chi(\vx) \neq y]. \]

The {\em Sparse Learning Parity with Noise} problem is as follows:
Given i.i.d. examples drawn from $\D$, find $h$ such that
$\Pr_{(\vx,y) \sim \D}[h(\vx) \neq y] \leq \mathsf{opt} + \epsilon$.
\end{definition}

Our hardness assumption is as follows:

\begin{assumption}
 For every algorithm ${\cal A}$ that solves the Sparse Learning Parity
 with Noise problem, there exists $\epsilon = O(1)$ and $k
 \in \omega(1)$ such that ${\cal A}$ requires time $n^{\Omega(k)}$.
\end{assumption}

Any algorithm breaking the above assumption would be a major result in
theoretical computer science. The best known algorithms due to Blum,
Kalai, Wasserman \cite{BKW03} and Valiant \cite{Val15} run in time
$2^{O(n / \log n)}$ and $n^{0.8k}$, respectively. Under this
assumption, we can rule out polynomial-time algorithms for reliably
learning ReLUs on distributions supported on $\{0,1\}^n$.

\begin{theorem}
  Let ${\cal C}$ be the class of ReLUs over the domain ${\cal X} =
  \{0,1\}^n$ with the added restriction that $\|\vw\|_1 \leq 2k$.  Any
  algorithm ${\cal A}$ for reliably learning ${\cal C}$ in time $g(\epsilon)
  \cdot \mathsf{poly}(n)$ for any function $g$ will give a polynomial
  time algorithm for learning sparse parities with noise of size $k$
  for $\epsilon = O(1)$.
\end{theorem}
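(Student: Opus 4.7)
The plan is to prove the theorem by exhibiting a polynomial-time reduction from $k$-sparse parity with noise to reliable agnostic learning of the restricted class $\{\vx \mapsto \max(0,\vw\cdot\vx) : \|\vw\|_1 \leq 2k\}$ over $\{0,1\}^n$. Let the sparse parity oracle return i.i.d.\ samples $(\vx,b)$ with $\vx$ uniform on $\{0,1\}^n$ and $b = \chi_S(\vx)\oplus\xi$, where $S\subseteq[n]$ has $|S|=k$ and $\xi$ is an independent noise bit with $\Pr[\xi=1]=\eta<1/2$. The reduction will construct a distribution $\D$ over $\{0,1\}^n\times[0,1]$ that can be simulated in polynomial time from the oracle, invoke the hypothesized reliable learner once on $\D$ with $\epsilon=O(1)$, and post-process the returned hypothesis in polynomial time to recover $S$.

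The first and most delicate step is the construction of $\D$. For every subset $T\subseteq[n]$ with $|T|=k$, associate the ReLU $c_T(\vx)=\max(0,|\vx\cap T|)$ with weight vector having $w_i=1$ for $i\in T$ and $0$ otherwise; this lies in $\ReLU(n,\cdot)$ with $\|\vw\|_1 = k\leq 2k$ and vanishes exactly on inputs $\vx$ disjoint from $T$. I would design $\D$ so that (i) $c_S\in\C^+(\D)$, i.e.\ $c_S$ has false positive rate zero, and (ii) on $y>0$ examples $c_S$ achieves a loss bounded away from that of any $c_T$ with $T\neq S$. A naive construction that simply sets $y=b$ fails because every $\vx$ has positive conditional probability of $y=0$, forcing $\C^+(\D)=\{0\}$. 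To avoid this, the distribution carves out a noiseless $y=0$ region by conditioning or augmenting: roughly, one flips a fair coin and either emits $(\vx,0)$ deterministically on inputs that are guaranteed to satisfy $\vx\cap S=\emptyset$ (using the parity bit to certify this, perhaps after Boolean anti-concentration or majority-of-several-bits boosting), or emits $(\vx,b)$ as a positive-label example when $b=1$. Because $c_S(\vx)=0$ exactly on inputs disjoint from $S$, this construction preserves $c_S\in\C^+(\D)$, while breaking $\C^+$ for every $c_T$ with $T\neq S$ (since some $\vx$ with $|\vx\cap T|>0$ but $|\vx\cap S|=0$ appears with a deterministic $y=0$ label).

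Given such a $\D$, the hypothesized reliable learner returns $h$ satisfying $\falsezero(h;\D)\leq\epsilon$ and $\ellgtzero(h;\D)\leq\ellgtzero(c_S;\D)+\epsilon$. The support $S$ can then be recovered coordinatewise: for each $i\in[n]$ evaluate $h$ on carefully chosen test inputs (for instance $e_i$, pairs $e_i+e_j$, or random boolean inputs of controlled Hamming weight) to decide whether $i\in S$; a polynomial number of such queries suffices because the two guarantees together force $h$ to agree in sign with $c_S$ on a large fraction of the support. The candidate $\hat S$ is then verified against fresh samples from the parity oracle at $\mathrm{poly}(n)$ cost. The total running time is $g(\epsilon)\cdot\mathrm{poly}(n)=\mathrm{poly}(n)$ for $\epsilon=O(1)$ and fixed $k$, contradicting the assumption that learning $k$-sparse parity with noise requires $n^{\Omega(k)}$ time for some superconstant $k$.

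The main obstacle is the distribution construction: one needs $\D$ to be simulatable from the noisy parity oracle, yet structured enough that $\C^+(\D)$ contains a ReLU encoding $S$ and is not ``spoiled'' by the label noise. Every step that relies on a noiseless $y=0$ label must be achievable by a rejection-sampling or majority-amplification trick using the noisy oracle, and the loss function used for $\ellgtzero$ must be chosen (or clipped) so that the gap between $c_S$ and the next-best $c_T$ dominates the $\epsilon$ slack of the reliable learner. Getting all three properties---simulability, nontrivial $\C^+$, and a detectable loss gap---to hold simultaneously is the crux of the proof.
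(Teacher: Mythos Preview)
Your proposal has two genuine gaps that prevent it from going through.

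First, the distribution construction is not realizable from the noisy parity oracle. You propose to emit $(\vx,0)$ on inputs ``guaranteed to satisfy $\vx\cap S=\emptyset$,'' using the parity bit to certify this. But a parity value of $0$ certifies only that $|\vx\cap S|$ is even, not that it is zero; under noise it does not certify even that. Majority amplification does not help: in the standard sparse-LPN model each sample carries a fresh uniform $\vx$, so you cannot requery the same point, and in any case driving the noise to zero is precisely what would solve LPN. Without a way to manufacture a $y=0$ region on which $c_S$ vanishes with probability one, your $\C^+(\D)$ construction collapses (you yourself flag this as ``the crux,'' but the obstacle is not merely delicate---it is the hardness of LPN itself).

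Second, the recovery step is unjustified. The reliable learner is improper and returns an arbitrary efficiently computable $h$ satisfying two \emph{distributional} guarantees, $\falsezero(h;\D)\leq\epsilon$ and $\ellgtzero(h;\D)\leq\ellgtzero(c_S;\D)+\epsilon$. Nothing forces $h(\ve_i)$ or $h(\ve_i+\ve_j)$ to carry information about $S$; such test points may have negligible mass under $\D$, and the guarantees say nothing pointwise.

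The paper sidesteps both obstacles by a different and much shorter route. It does not reduce directly from sparse parity, but goes through agnostic learning of length-$k$ conjunctions, citing Feldman--Kothari for the fact that the latter is at least as hard as $k$-sparse LPN. The key device is a \emph{label shift}: given any distribution $\D$ on $\{0,1\}^n\times\{0,1\}$, add a small $\epsilon>0$ to every label to get $\D'$ supported on $\{0,1\}^n\times\{\epsilon,1+\epsilon\}$. Now every label is strictly positive, so the reliability constraint $\falsezero$ is vacuous and the reliable learner becomes an agnostic $\ell_1$-loss minimizer (which suffices for agnostic $0/1$ learning by an observation of Kalai et al.). Since the conjunction $x_{i_1}\wedge\cdots\wedge x_{i_k}$ is computed exactly by the ReLU $\max(0,\,x_{i_1}+\cdots+x_{i_k}-(k-1))$ with $\|\vw\|_1\leq 2k$, a reliable ReLU learner over $\{0,1\}^n$ immediately yields an agnostic conjunction learner, completing the chain.
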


\begin{proof}

  We will show how to use a reliable ReLU learner to agnostically
  learn {\em conjunctions} on $\{0,1\}^n$ and use an observation due
  to Feldman and Kothari \cite{fk} who showed that agnostically
  learning conjunctions is harder than the Sparse Learning Parity with
  Noise problem. Let ${\cal CO}_k$ be the concept class of all Boolean
  conjunctions of length at most $k$. 

  Notice that for the domain ${\cal X} = \{0,1\}^n$, the conjunction
  of literals $x_{1},\ldots,x_{k}$ can be computed exactly as
  $\mathsf{max}(0,~x_1 + \cdots + x_k - (k-1))$.  Fix an arbitrary
  distribution $\D$ on $\{0,1\}^n \times \{0,1\}$ and define \[
  \mathsf{opt} = \min_{c \in {\cal CO}_k} \Pr_{(\vx,y) \sim {\D}}[c(\vx) \neq y]. \]

  Kalai et al. \cite{KalaiKMS2008} (Theorem 5) observed that in order output a
  hypothesis $h$ with error $\mathsf{opt} + \epsilon$ it suffices to
  minimize (to within $\epsilon$) the following quantity:

 \[ \mathsf{opt}_1 = \min_{c \in {\cal CO}_k} \E_{(\vx,y) \sim \D}[|c(\vx) - y]|. \]

  Consider the following transformed distribution ${\cal D'}$ on
  $\{0,1\}^n \times \{\epsilon, 1+\epsilon\}$ that adds a small
  positive $\epsilon$ to every $y$ output by $\D$.  Note that
  this changes $\mathsf{opt}_1$ by at most $\epsilon$.  Further, all
  labels in ${\cal D'}$ are now positive.  Since every $c \in
  {\cal CO}_k$ is computed exactly by a ReLU, and the reliable
  learning model demands that we minimize $\ellgtzero(h; {\cal D'})$ over all
  ReLUs, algorithm ${\cal A}$  will find an $h$ such that $\E_{(\vx,y) \sim {\cal
      D'}}[|h(\vx) - y|] \leq \mathsf{opt}_1 + \epsilon \leq
  \mathsf{opt} + 2\epsilon$.   By
  appropriately rescaling $\epsilon$, we have shown how to
  agnostically learn conjunctions using reliable learner ${\cal A}$.
  This completes the proof.
\end{proof}

The above proof also shows hardness of learning ReLUs agnostically.
Note the above hardness result holds if we require the learning
algorithm to succeed on all domains where $|(w \cdot x)|$ can grow
without bound with respect to $n$:

\begin{corollary}
  Let ${\cal A}$ be an algorithm that learns ReLUs on all domains
  ${\cal X} \subseteq \R^{n}$ where $(\vw \cdot \vx)$ may take on values
  that are $\omega(1)$ with respect to the dimension $n$.  Then any
  algorithm for reliably learning ${\cal C}$ in time $g(\eps) \cdot
  poly(n)$ will break the Sparse Learning Parity with Noise hardness
  assumption.
\end{corollary}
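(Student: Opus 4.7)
The plan is to observe that the proof of the preceding theorem already exhibits ReLUs whose inner product $\vw \cdot \vx$ grows with $n$, so the same reduction works in any setting in which the concept class is allowed to contain such ReLUs. Concretely, the conjunction-to-ReLU encoding used in the theorem, namely $x_{i_1} \wedge \cdots \wedge x_{i_k} = \max(0, x_{i_1} + \cdots + x_{i_k} - (k-1))$, uses weight vector with $\|\vw\|_1 = k$ and attains values $\vw \cdot \vx \in \{-(k-1), -(k-2), \ldots, 0, 1\}$. Since the Sparse Learning Parity with Noise hardness assumption requires the reduction to work for some $k = \omega(1)$, the magnitude $|\vw \cdot \vx|$ used in the reduction is itself $\omega(1)$ in $n$.

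Given this, the proof is a one-line reduction to the previous theorem. First I would fix an arbitrary algorithm $\mathcal{A}$ as in the corollary statement and note that, by hypothesis, $\mathcal{A}$ reliably learns the class of ReLUs over $\{0,1\}^n$ whose weights satisfy $\|\vw\|_1 \leq 2k$, since this class consists of functions $\max(0, \vw \cdot \vx)$ for which $\vw \cdot \vx$ is bounded by $2k = \omega(1)$. Second, I would invoke the proof of the previous theorem verbatim: use $\mathcal{A}$ to minimize $\ellgtzero$ on the shifted distribution $\D'$ over $\{0,1\}^n \times \{\epsilon, 1+\epsilon\}$, recover an agnostic learner for $k$-conjunctions with excess error $O(\epsilon)$, and invoke the Feldman--Kothari observation \cite{fk} that agnostically learning $k$-conjunctions is at least as hard as learning $k$-sparse parities with noise. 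The final step is to note that if $\mathcal{A}$ runs in time $g(\eps) \cdot \poly(n)$, then the resulting sparse parity learner runs in time $\poly(n)$ for $\eps = O(1)$, contradicting the hardness assumption.

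There is no real obstacle here: the corollary is essentially a repackaging of the theorem that makes explicit what the previous proof actually needed, namely that the ReLUs in the reduction are permitted by the learner's concept class. The only point to double check is that the reduction in the theorem never relied on the domain being literally $\{0,1\}^n$ beyond the fact that $x_{i_1} + \cdots + x_{i_k} - (k-1)$ computes the conjunction exactly on Boolean inputs; this remains true for any $\mathcal{X} \supseteq \{0,1\}^n$, so it suffices to work on the Boolean subcube of $\mathcal{X}$.
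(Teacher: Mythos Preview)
Your proposal is correct and matches the paper's treatment: the paper gives no separate proof of the corollary, only the one-sentence observation preceding it that the hardness result continues to hold whenever the learner must handle domains on which $|\vw \cdot \vx|$ is unbounded in $n$. Your write-up is simply a careful unpacking of that observation, invoking the same conjunction-to-ReLU encoding and the same Feldman--Kothari reduction as in the preceding theorem.
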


Finally, we point out Kalai et al. \cite{kkmrel} proved that reliably
learning conjunctions is also as hard as PAC Learning DNF formulas.
Thus, by our above reduction, any efficient algorithm for reliably
learning ReLUs would give an efficient algorithm for PAC learning DNF
formulas (again this would be considered a breakthrough result in
computational learning theory).

\section{Conclusions and Open Problems}
We have given the first set of efficient algorithms for ReLUs in a
natural learning model.  ReLUs are both effective in practice and,
unlike linear threshold functions (halfspaces), admit non-trivial
learning algorithms for {\em all} distributions with respect to
adversarial noise.  We ``sidestepped'' the hardness results in Boolean
function learning by focusing on problems that are not entirely
scale-invariant with respect to the choice of domain (e.g., reliably
learning ReLUs).  The obvious open question is to improve the
dependence of our main result on $1/\epsilon$.  We have no reason to
believe that $2^{O(1/\epsilon)}$ is the best possible.%the correct answer.

\medskip \noindent \textbf{Acknowledgements.} The authors
are grateful to Sanjeev Arora and Roi Livni for helpful feedback 
and useful discussions on this work.

%\section{Lower Bounds}
%\label{sec:lowerbounds}
%\input{files/lower_bounds.tex}

\bibliography{all-refs}
\bibliographystyle{plain}

\end{document}